\documentclass[reqno, a4paper]{amsart}
\usepackage{amsmath, amssymb,amsthm}

\pdfoutput=1
\usepackage{color}

\def\comment#1{}

\usepackage{hyperref}

\usepackage{txfonts}
\usepackage{enumerate}
% ----------------------------------------------------------------
\usepackage{caption}
\usepackage{accents}

\providecommand{\black}[1]{\textcolor{black}{{#1}}}

\usepackage{graphicx}
\usepackage{float}
\graphicspath{ {img/} }
%%%%%%%%%%%%  definitions %%%%%%%%%%%%

% THEOREM Environments ---------------------------------------------------

\newtheorem{theorem}{Theorem}
\newtheorem{assumption}[theorem]{Assumption}
\newtheorem{corollary}[theorem]{Corollary}
\newtheorem{definition}[theorem]{Definition}
\newtheorem{lemma}[theorem]{Lemma}
\newtheorem{proposition}[theorem]{Proposition}

%[theorem]
{Important Convention}

\theoremstyle{remark}
\newtheorem{remark}[theorem]{Remark}
\newtheorem{example}[theorem]{Example}

% MATH -------------------------------------------------------------------

 \newcommand{\cpl}{\text{Cpl}}
 \newcommand{\eps}{\varepsilon}

 \renewcommand{\phi}{\varphi}

%%% ----------------------------------------------------------------------

%%%% Mathias shortcuts

\newcommand{\N}{\mathbb{N}}

\newcommand{\R}{\mathbb{R}}

%{\mathrm{\emph{law}}}

%%%------------------------------------

\topmargin 0.56cm
\textheight 22.92cm

%%%%%%%%%%%%% document %%%%%%%%%%%%%

\DeclareMathOperator{\supp}{supp}
\DeclareMathOperator*{\argmin}{argmin}
\newcommand{\bes}{\begin{subequations}}
\newcommand{\ees}{\end{subequations}}
\newcommand{\eea}{\end{eqnarray}}

\newcommand{\NN}{{\mathbb N}}

\newcommand{\EE}{{\mathbb E}}
\newcommand{\RR}{{\mathbb R}}

\newcommand{\cal}{\mathcal}

\usepackage{bbm}

\renewcommand{\eps}{\varepsilon}
\renewcommand{\epsilon}{\varepsilon}

%%%---------------------------------------------------------
% test fourIdx
\newcommand{\fourIdx}[5]{%
\setbox1=\hbox{\ensuremath{^{#1}}}%
 \setbox2=\hbox{\ensuremath{_{#2}}}%
 \setbox5=\hbox{\ensuremath{#5}}%
 \hspace{\ifnum\wd1>\wd2\wd1\else\wd2\fi}%
 \ensuremath{\copy5^{\hspace{-\wd1}\hspace{-\wd5}#1\hspace{\wd5}#3}%
 _{\hspace{-\wd2}\hspace{-\wd5}#2\hspace{\wd5}#4}%
 }}

%%%---------------------------------------------------------
\numberwithin{equation}{section}
\numberwithin{theorem}{section}

\renewcommand{\subset}{\subseteq}

%{\mathsf{S}}

%\newcommand{\abs}[1]{\left| #1 \right|}

\usepackage{subcaption}

\usepackage{graphicx}

%%% ----------------------------------------------------------------------
\begin{document}

\title{Bayesian Learning with Wasserstein Barycenters}
%\author{  Julio Backhoff-Veraguas, 
 %Joaquin Fontbona, Gonzalo Rios, Felipe Tobar}

 \author{Julio Backhoff-Veraguas}
\address{Faculty of Mathematics, University of Vienna}
\email{julio.backhoff@univie.ac.at}
%\urladdr{www.math.sc.edu/$\sim$howard} 

\author{Joaquin Fontbona}
\address{Center for Mathematical Modeling, and Department of Mathematical Engineering, Universidad de Chile}
\email{fontbona@dim.uchile.cl}
%\urladdr{www.math.sc.edu/$\sim$howard} 

\author{Gonzalo Rios}
\address{NoiseGrasp SpA}
\email{grios@noisegrasp.com}
%\urladdr{www.math.sc.edu/$\sim$howard} 

\author{Felipe Tobar}
\address{Initiative for Data \& Artificial Intelligence, and Center for Mathematical Modeling, Universidad de Chile}
\email{ftobar@dim.uchile.cl}
\urladdr{http://www.dim.uchile.cl/~ftobar/}

 \begin{abstract} We introduce and study a novel model-selection strategy for Bayesian learning,  based on optimal transport, along with its associated predictive posterior law: the Wasserstein population barycenter of the posterior law over models.  We first show  how this estimator, termed Bayesian Wasserstein barycenter (BWB),  arises  naturally in a  general,  parameter-free Bayesian model-selection framework,  when the considered Bayesian risk is the Wasserstein distance. Examples are given,  illustrating how the BWB  extends some classic parametric and non-parametric selection strategies. Furthermore, we also provide explicit conditions granting the existence and statistical consistency of the BWB, and discuss some of its general and specific properties, providing insights into its advantages  compared to usual choices, such as the model average estimator.  Finally, we illustrate how this estimator can be computed  using the  stochastic gradient descent (SGD) algorithm in Wasserstein space introduced in a companion paper \cite{bakchoff2022}, and provide a  numerical example for experimental validation of the proposed method.

\end{abstract}

\keywords{Bayesian learning, non-parametric estimation, Wasserstein distance and  barycenter, consistency,  MCMC,  stochastic gradient descent in Wasserstein space.  
}

\subjclass{62G05, 62F15, 62H10, 62G20}

\maketitle
%%-----------------------------
%%      your text
%%-----------------------------
\section*{Introduction}

  Given a set $\mathcal{M}$  of probability distributions on some data space $  \mathcal{X}$,  learning a model $m\in\mathcal M$ from data points  $D=(x_{1},\dots,x_{n}) \in  \mathcal{X}^n $    consists in choosing, under a given criterion, an element  of  $\mathcal{M}$ that \emph{best} explains  $D$  as data sampled from it.
  The Bayesian paradigm provides a probabilistic approach to deal with model uncertainty in terms of a {\it prior distribution} on models $\mathcal{M}$, and also furnishes strategies to address  the problem of model selection, \black{based on the \emph{posterior distribution}  on $\mathcal{M}$ given $D$}.  This type of estimators,  usually called  \emph{predictive posterior laws}, include  
  classical  Bayesian estimators such as  the maximum a posteriori estimator (MAP), the posterior mean, the Bayesian model average estimator (BMA) and  generalizations thereof.  Predictive posterior estimators  typically result from  selection criteria  consisting in optimizing some loss function, averaged with respect to the posterior law over models, or Bayesian risk function. We refer the reader to \cite{ghosal2017fundamentals, murphybook2012} and references therein for mathematical background on Bayesian statistics and \black{their use in the machine learning community}. 
  
  \black{ In Section \ref{sec:bayesian}, we will formulate the general problem of Bayesian model selection directly on  the space of probability measures (or models) on the data space,  and show how this abstract  framework  covers both   classic finitely-parametrized settings and parameter-free model spaces,  allowing  us to retrieve  classical selection criteria as particular cases.  
An eye-opening  observation that will follow  from  adopting this viewpoint is that many classical predictive posteriors can be seen as instances of \emph{Fr\'echet means} \cite{frechet1948elements}, or barycenters in the space of probability measures,  with respect to specific metrics or divergences between them, that play the role of abstract loss functions defined  on the model space.}
 
 \black{ Building upon this general framework,  the  main  goals of this work are to introduce  a novel Bayesian  model-selection criterion by proposing a  loss function on models coming from the theory of optimal transport, and to study some of the distinctive features  of  the  predictive posterior law that results from it. 
 More precisely, let us consider  observations $D=(x_1,\dots,x_n)$ in  a metric space $( \mathcal X,d)$ and a set of candidate models $\mathcal M$ that generated these observations. Equipping the set $\mathcal M$ with a prior distribution $\Pi$, and denoting the corresponding posterior distribution over $\mathcal M$ by $\Pi_n$, we will define the
\emph{Bayesian Wasserstein barycenter estimator} (BWB) as a minimizer $\hat m_p^n\in\mathcal M$ of the ``risk  function'' } 
	\begin{align}
	\textstyle\mathcal M \ni m \mapsto  \int_{\mathcal{P}(\mathcal X)} W_{p}(m,\bar{m})^{p}\Pi_n(d\bar{m}),\label{eq intro bary}
	\end{align}
 where  $\mathcal{P}(\mathcal X)$ is  the set of probability measures on $\mathcal{X}$,   $p\geq 1$ and $W_p$ is the celebrated $p$-Wasserstein distance on probability measures on $\mathcal{X}$ associated with $d$, see \cite{villani2003topics,villani2008optimal}. 

Wasserstein barycenters were initially studied in \cite{agueh2011barycenters} and, since then, the concept has been extensively explored from both theoretical and practical perspectives. We refer the reader to the overview \cite{PaZe20} for statistical applications, to the works  \cite{cuturi2014fast,CuPe16, peyre2019computational,NIPS2021} for applications in machine learning, and to  \cite{bigot2018characterization,le2017existence, alvarez2016fixed,alvarez2018wide} for a presentation of recent developments and further references. \black{As a cautionary tale, we mention that  the problem of computing Wasserstein barycenters is known to be NP-hard, see \cite{AlBA21}. To cope with this, fast  methods aiming at learning and generating approximate  Wasserstein barycenters on the basis of  neural networks  techniques, have also been proposed,  see e.g.\ \cite{LearningWB,WIterative}. }

 \black{In Section \ref{sec:BWBE}  we will recall  Wasserstein distances and revisit Wasserstein  barycenters together with their basic properties such as existence, uniqueness and absolute continuity. In Section \ref{sec consistency},  we will rigorously introduce  the  BWB estimator $\hat m_p^n$, which  will correspond to the so-called 
\emph{population Wasserstein barycenter}  \cite{le2017existence} for the posterior distribution on models $\Pi_n$, and  we will state some of its main properties. Specifically, we will  show that the BWB has  less variance than the BMA and we will study its statistical  consistency. In particular, we will address  the question of ``posterior consistency'',  or asymptotic concentration of the posteriors $\Pi_n$ around the Dirac mass on a model $m_0$ as $n\to \infty$, whenever the data consists of  i.i.d.~observations following the  law $m_0$, alongside the  question of convergence of the BWB  to the 
``true'' distribution $m_0$  of the data in that setting. We refer the reader to \cite{diaconis1986consistency,ghosal2017fundamentals}, and references therein, for detailed accounts on  posterior consistency,  a highly desirable feature of a Bayesian estimation procedure, both from a semi-frequentist perspective as well as from the ``merging of opinions'' point of view on Bayesian statistics (cf.\ \cite[Chapter 6]{ghosal2017fundamentals}).   After reviewing central notions and tools from the framework of posterior consistency, namely the celebrated Schwartz' theorem \cite{schwartz1965bayes}, \cite[Theorem  6.17]{ghosal2017fundamentals} and the notion of  Kullback-Leibler support of $\Pi$,    we will  provide in that section equivalent and (verifiable) sufficient conditions for both the posterior consistency in the Wasserstein topology and the a.s. convergence 
\begin{align*}
\lim_{n\to\infty}W_p(\hat m_p^n,m_0)= 0\,\,\,\mbox{a.s.} 
\end{align*} 
to hold, when  $\hat m_p^n$ is the BWB computed with $n$ i.i.d. observations sampled from $m_0$.}

 \black{Additionally, Sections 1 to 3 present a series of examples illustrating the main concepts of our work, their relationship
to  standard objects in Bayesian statistics and the applicability  of our theoretical results.}

Lastly,  we will show how the BWB estimator can be calculated using a novel  stochastic algorithm,  introduced in the companion paper  \cite{bakchoff2022},  to compute population Wasserstein barycenters in a general setting. This algorithm, presented in Section \ref{sec:computation},   can be seen as an abstract  \emph{stochastic gradient descent method in the Wasserstein space} and is advantageous compared to gradient or fixed-point algorithms developed in \cite{alvarez2016fixed,alvarez2018wide,panaretos2017frechet}, whose application is restricted  to barycenters of model spaces  ${\cal M}$ comprised of finitely-many elements only.  Moreover, our algorithm 
has theoretical guarantees of convergence under suitable conditions, it can be  easily implemented for some families of regular models for which optimal transport maps are explicit or easily computed (we recall one such family in Section \ref{sec scat loc}), \black{and its convergence rate can be studied and established in some cases, see \cite{chewi2020gradient} for  the Gaussian setting.}  A comprehensive numerical experiment illustrating this method, and its natural ``batch'' variants,   will be presented in Section \ref{sec numeric} and compared  to (more conventional) empirical barycenter estimators.

{\bf Notation: }  
\begin{itemize}
    \item We denote $\mathcal{P}(\mathcal{X})$  the set of (Borel) probability measures on  $\mathcal{X}$ endowed with the weak topology, and  $\mathcal{P}_{ac}(\mathcal{X})$  the (measurable) subset of absolutely continuous probability measures, with respect to a common  reference $\sigma$-finite measure $\lambda$ on $\mathcal{X}$.
\item As a convention, we shall use the same notation for an element  $m(dx)\in \mathcal{P}_{ac}(\mathcal{X})$ and its density $m(x)$ with respect to $\lambda$. 
\item  We denote by $\text{supp}(\nu)$ the support of a measure $\nu$ and by $|\text{supp}(\nu)|$ its cardinality. 
\item  \black{Given $\Gamma \in \mathcal{P}(\mathcal{P}(\mathcal{X})) $ and a measurable subset  $
\mathcal{M} \subseteq \mathcal{P}(\mathcal{X}), 
$  we say that $\mathcal{M}$ is a {\it model space for } $\Gamma$ if $\Gamma({\cal M})=1$.} 

\item  Last, given a measurable map $T:\mathcal Y\to\mathcal Z$ and a measure $\nu$ on $\mathcal Y$ we denote by $\nu_{\sharp}T $ the image measure (or push-forward), that is,  the measure on $\mathcal Z$ given by $\nu_{\sharp}T(\cdot)=\nu(T^{-1}(\cdot))$. 

\end{itemize}

\section{Bayesian learning in model space}
\label{sec:bayesian}

We start by setting    a general framework for Bayesian learning  which  covers both finitely-parametrized  settings  (including hierarchical models) and parameter-free models.  \black{We consider  a  probability measure $\Pi\in \mathcal{P}( \mathcal{P}(\mathcal{X})) $ understood as a {\it prior} distribution on the model space $\mathcal{M} \subseteq \mathcal{P}_{ac}( \mathcal{X}) $}. In  particular we have $$\Pi(\mathcal{M}) =\Pi( \mathcal{P}_{ac}(\mathcal{X}))= 1.$$
For each  $n\in \NN\backslash \{0\}$,  $\Pi$ canonically induces a law $\mathbf{\Pi}$  on  $ \mathcal{X}^n \times \mathcal{M}$, representing the joint law of a random model $m$ chosen according to $\Pi$ and a sample $$D=(x_{1},\dots,x_{n})\subseteq \mathcal{X}^n$$ of  i.i.d.\ observations drawn from it. That is,  
$$
\mathbf{\Pi} (dx_{1},\dots,dx_{n},  dm):=  m\left( dx_{1}\right)
	\cdots m\left( dx_{n}\right) \Pi (dm)  =    m\left( x_{1}\right)
	\cdots m\left( x_{n}\right) \lambda (dx_1)\cdots \lambda (dx_n) \Pi (dm). 
$$
\black{ Note that, in the above equation and throughout,   $\Pi(dm)$ denotes integration over  $m\in {\cal M}$ w.r.t.\ $\Pi \in \mathcal{P}( \mathcal{P}(\mathcal{X})) $, whereas integration over $x\in {\cal X}$ w.r.t.\ $m \in  \mathcal{P}(\mathcal{X})  $ is denoted $ m(dx)= m(x) \lambda(dx)$. }
The law on $\mathcal{X}^n$ of the data $D$, conditionally on a model $m$,  is thus given by
\begin{align}
	\mathbf{\Pi} (dx_{1},\dots,dx_{n}\vert m): =  m\left( x_{1}\right)
	\cdots m\left( x_{n}\right) \lambda (dx_1)\cdots \lambda (dx_n),  
\end{align}
with density $\mathbf{\Pi} \left(
		x_{1},\dots,x_{n}|m\right)=m\left( x_{1}\right)
	\cdots m\left( x_{n}\right) $ with respect to $\lambda^{\otimes n} $, 
and the marginal density of $D$ with respect  to $\lambda^{\otimes n} $ is $\mathbf{\Pi} \left(x_{1},\dots,x_{n}\right):=\int_{\mathcal{M}}\bar{m}\left( x_{1}\right) \cdots\bar{m}\left( x_{n}\right)  \Pi \left( d\bar{m}\right) $.

\black{ The {\it posterior distribution} $ \mathbf{\Pi} (dm|x_{1},\dots,x_{n})$  given the data $D$ is also an element of  $ {\cal P}({\cal P}({\cal X}))$, which we denote $\Pi_n$ for simplicity  and which,  by  virtue of the Bayes rule,  is given in this setting by} 
\begin{align}\label{eq def Pi n}
	\Pi_n(dm) := \frac{\mathbf{\Pi} \left(
		x_{1},\dots,x_{n}|m\right) \Pi \left( dm\right) }{\mathbf{\Pi} \left(
		x_{1},\dots,x_{n}\right) }=\frac{m\left( x_{1}\right)
		\cdots m\left( x_{n}\right) \Pi \left( dm\right) }{\int_{\mathcal{M}}\bar{m}\left(
		x_{1}\right) \cdots\bar{m}\left( x_{n}\right)  \Pi \left( d\bar{m}\right) } \, .
\end{align}
\black{Notice that  $ (x_1,\dots,x_n)\mapsto \Pi_n=  \mathbf{\Pi} (\cdot|x_{1},\dots,x_{n})$ defines $\lambda^{\otimes n} $ a.e.\ a measurable function from ${\cal X}^n$ to $ {\cal P}({\cal P}({\cal X}))$}. 
The density $\Lambda_n(m)$ of $\Pi_n(dm)$ with respect the prior $\Pi(dm)$ is called the \black{{\it  likelihood} function.} The fact that  $\Pi_n\ll \Pi$  implies that a model space ${\cal M}$ for $ \Pi$    is a model space for $\Pi_n$ too.
%\black{[FT: solo un detalle: me hace un poquito de ruido hablar recién de likelihood ahora y no en la eq 1.1. Se podría decir que la densidad de la eq 1.1 es likelihood?]}

\smallskip 

We call {\it loss function} a  non-negative functional on models $L:\mathcal{M} \times \mathcal{M} \rightarrow \mathbb{R}$,  interpreting $L(m_0 ,\bar{m})$ as the cost of selecting  model $\bar{m} \in \mathcal{M}$ when the true model is $m_0 \in \mathcal{M}$. With a loss function and the posterior distribution over models $\Pi_n$, the Bayes risk (or expected loss) $R_L(\bar{m}|D)$ and the corresponding Bayes estimator $\hat{m}_L$  (or predictive posterior law) are respectively defined as follows:
\begin{eqnarray}
R_L(\bar{m}|D) &:=&  \textstyle\int_{\mathcal{M}} L(m,\bar{m})\Pi_n(dm) \, , \label{eq RL}\\
\hat{m}_L &\in&\textstyle \argmin_{\bar{m} \in \mathcal{M}} R_L(\bar{m}|D). \label{eq hat m abstract}
\end{eqnarray}
See \cite{berger2013statistical} for further background on Bayes risk and statistical decision theory. 
\black{A key consequence of defining both $L$ and $\Pi_n$ directly on the model space $\mathcal{M}$ (rather than on  parameter space), is that learning according to eqs.~\eqref{eq RL}-\eqref{eq hat m abstract} does not depend on the chosen parametrization or the geometry of the parameter space.}  Moreover, this point of view will allow us  to define loss functions  in terms of various metrics/divergences directly on the space $ \mathcal{P}( \mathcal{X})$, and therefore to enhance the classical  Bayesian  estimation framework through the use of optimal transportation distances on that space. Before further developing these ideas, we discuss how this general framework includes model spaces which are finitely parametrized, and recall some standard choices in that setting. \black{We will also discuss the advantages of formulating the problem of model selection directly on the model space, even when  this space can be finitely parametrized.}

%The parametric setting is useful as well, since it helps to illustrate the drawbacks of maximum likelihood estimation (MLE). If the reader is already comfortable with the present non-parametric setup, and is aware of the drawbacks of MLE, he or she may skip Section \ref{sec param} altogether.

\subsection{Parametric setting}\label{sec param}
\black{We say that $\mathcal{M}$ is finitely parametrized if there is an integer $k$, a measurable set $\Theta \subseteq \mathbb{R}^{k}$  termed  parameter space,  and a measurable function  $\mathcal{T}:\Theta \mapsto \mathcal{P}_{ac}( \mathcal{X}) $, called parametrization,  such that $\mathcal{M=T}( \Theta )$. In other words,   $m =\mathcal{T}(\theta)$ is the model corresponding to parameter $\theta$, which is classically denoted
$p(\cdot\vert \theta)$ or $p_\theta(\cdot)$.  In general,  $\mathcal{T}$ is a one-to-one function   (the model space is otherwise said to be {\it over-parametrized}).}

\black{In the standard parametric Bayesian framework,  
a prior distribution    is a probability law $p \in \mathcal{P}(\Theta)$ over $\Theta $, typically assumed to have a (equally denoted) density $p(\theta)$ with respect to the Lebesgue measure. 
A function $  \ell:\Theta\times \Theta \to \R_+$ is called a loss function (on parameters), whereby    $\ell(\theta_0,\bar{\theta})$ is  interpreted as the cost of choosing parameter $\bar{\theta}$ when the true parameter is $\theta_0$.
The parametric Bayes risk \cite{berger2013statistical} of $\bar{\theta} \in \Theta$ is then given by}
\begin{align}
	\label{eq:param_bayes_risk}
	\textstyle R_{\ell} (\bar{\theta}|D) = \int_{\Theta} \ell (\theta ,\bar{\theta})p(\theta|x_1,\dots,x_n) d\theta,  %=  \int_{\mathcal{M}} L(m,\bar{m})\Pi_n(dm),
\end{align} 
where $p(\theta|  x_1,\dots, x_n)  $   is the posterior density of $\theta$ given observations $ x_1,\dots, x_n$. The  associated Bayes estimator is defined as $\hat{\theta}_{\ell} \in\argmin_{\bar{\theta} \in \Theta }R_{\ell}(\bar{\theta}|D)$.  
\black{ Hence, if the model space $\mathcal M$ is finitely parametrized, learning a model boils down to finding the \emph{best} model parameter $\theta\in \Theta$ under a given criterion, quantified by the  parametric risk function  $  R_{\ell} (\cdot|D)$. Among continuous-valued losses on the parameter space, the \textit{de facto} choice is the quadratic one $\ell_{2}( \theta ,\bar{\theta}) =\Vert \theta -\bar{\theta}\Vert ^{2}$, whose  associated Bayes estimator is the posterior mean $\hat{\theta}_{\ell_{2}}=\int_{\Theta }\theta p( \theta|D)d\theta$. 
For one-dimensional parameter spaces, the absolute loss $\ell_{1}( \theta ,\bar{\theta}) =| \theta -\bar{\theta}| $ yields the posterior median(s) estimator(s). 
The 0-1 loss formally given by  $\ell_{0-1}( \theta ,\bar{\theta}) =1-\delta_{\bar{\theta}}( \theta )$,  with $\delta_{\bar{\theta}}$ the Dirac mass at $\bar{\theta}$,  yields the risk $R_{\ell_{0-1}}(\bar{\theta}|D)=1-p(\bar{\theta} |D) $ and  its corresponding Bayes estimator is the posterior mode or maximum a posteriori estimator (MAP), $\hat{\theta}_{\ell_{0-1}}=\hat{\theta}_{MAP}$. }

\smallskip 

 \black{The parametric case is  embedded into the considered model-space setting as follows. The push-forward  of $p$ through  $ \mathcal{T} $  defines a prior $\Pi= p_{\sharp}\mathcal{T}$   over the model space $\mathcal{M}$ in the sense discussed at the beginning of this section. If $\mathcal{T}$ is one to one, a loss function $\ell:\Theta\times \Theta\to \RR_+$ induces a loss function $L$ on the model space  $\mathcal{M=T}( \Theta )$, such that $L (\mathcal{T}(\theta_0), \mathcal{T}(  \bar{\theta}))=   \ell (\theta_0,\bar{\theta})$. More generally, any loss function $L$ on $\mathcal{M}\times \mathcal{M}$ induces a loss  functional $\ell$ on $\Theta\times \Theta$   { defined} as  $  \ell (\theta_0,\bar{\theta}):  =L (\mathcal{T}(\theta_0), \mathcal{T}(  \bar{\theta}))$. Moreover, when the data $x_1,\ldots,x_n$ under the model parameterized by $\theta$ consists of an i.i.d.~sample from $p(\cdot\vert \theta) ={\cal T}(\theta)$,  one can verify that $\Pi_n$ given in eq.~\eqref{eq def Pi n} corresponds precisely to the  the push-forward through $\mathcal{T}$ of $p(\theta|  x_1,\ldots, x_n)  $  and that $R_{\ell} (\bar{\theta}|D)$  in  eq.~\eqref{eq:param_bayes_risk} is given by 
\begin{align*}
	\textstyle R_{\ell} (\bar{\theta}|D) =   \int_{\mathcal{M}} L(m,\bar{m})\Pi_n(dm),
\end{align*}
with  $ \Pi_n(dm) = \Lambda_n(m) \Pi (dm)$ associated with the   prior  on model space $\Pi=p_{\sharp}\mathcal{T}$, and $\bar{m}= \mathcal{T}(  \bar{\theta})$.   } 

\smallskip 

\begin{example}\label{ExParaModel1}
Consider the parametric Bayesian model with parameter space $\Theta$ and sample space   ${\cal X}$ both equal to $\RR^d$ and Gaussian parametrized models ${\cal N}(\theta, \Sigma)$, with  $\Sigma\in\RR^{d\times d}$ a fixed covariance matrix,  and  $\theta$ a random mean with prior  
${\cal N}(\mu_0, \Sigma_0)$. The mean $\mu_0\in \RR^d$ and convariance matrix $ \Sigma_0$ are fixed  hyperparameteres. 
 This is classically denoted: 
 $$  p(x\vert \theta) = {\cal N}(x;\theta, \Sigma)    \,,   \quad  p(\theta)= p(\theta\vert \mu_0, \Sigma_0)=  {\cal N}(\theta; \mu_0, \Sigma_0)  . $$
 From now on,  ${\cal N}(y; \nu, {\cal K})$ stands for the density of the Gaussian law ${\cal N}(\nu, {\cal K}) $ evaluated on the value $y$. 
 Following from the introduced notation, the parametrization ${\cal T}:\Theta \to  {\cal P}({\cal X})$
is thus given by ${\cal T}(\theta)={\cal N}(\theta, \Sigma) $,  the model space is  ${\cal M}=\{{\cal N}(\theta, \Sigma)  : \theta \in \RR^d\}$   and a measure $m$ sampled from the prior $\Pi(dm)= p_\sharp {\cal T}(dm)  $ is a  Gaussian distribution on ${\cal X}=\RR^d$, with fixed covariance matrix $\Sigma$ and random mean $\theta$ distributed according to ${\cal N}(\mu_0, \Sigma_0)  $. In this case, if both $\Sigma$ and $\Sigma_0$ are nonsingular, the posterior of $\theta$ given $D=(x_{1},\dots,x_{n})$ is
\begin{equation}
    p(\theta\vert D)= p(\theta\vert D, \mu_0,\Sigma_0)= {\cal N}\left(\theta;  (\Sigma_0^{-1}+ n \Sigma^{-1})^{-1}  (\Sigma_0^{-1}\mu_0 +  n \Sigma^{-1} \bar{x}_n), (\Sigma_0^{-1}+ n \Sigma^{-1})^{-1}   \right), \label{eq:post_param_ex1.1}
\end{equation}
with $\bar{x}_n$ the sample mean of the observations $D$.  A model $m$ sampled from the posterior  $\Pi_n(dm)$ given $D$  is thus obtained by sampling $\theta$  distributed according to  $p(\theta\vert D)$ in eq.~\eqref{eq:post_param_ex1.1} and then setting $ m={\cal N}(\theta, \Sigma)$. The posterior mean estimator of the parameter $\theta$ is  therefore given by 
\begin{equation}\label{eq:postmeangaus}
\hat{\theta}_{\ell_2}= (\Sigma_0^{-1}+ n \Sigma^{-1})^{-1}  (\Sigma_0^{-1}\mu_0 +  n \Sigma^{-1} \bar{x}_n), 
\end{equation}
\black{and the corresponding predictive posterior model is the Gaussian law ${\cal N}(\hat{\theta}_{\ell_2}, \Sigma)$. }
Observe that, in this case, one could equivalently obtain $\hat{\theta}_{\ell_2}$ as the mean of the predictive posterior law $\hat{m}_L$ associated with the loss function $L$ on models, defined by $L(m,\bar{m})= \Vert \int_{\RR^d} x \, m(dx)- \int_{\RR^d} x \, \bar{m}(dx)\Vert^2. $ \black{This illustrates the equivalence of (some) pairs of losses in the model and parameter spaces that result on the same Bayes estimators. Additionally, observe that Bayesian inference on the mean and the covariance matrix could be similarly formulated in terms of a loss functions on models too, take e.g. $L'(m,\bar{m})=L(m,\bar{m}) + \Vert \Sigma_m^{1/2} - \Sigma_{\bar m}^{1/2} \Vert^2_{\text{Fro}} $ with $L(m,\bar{m})$ as before, $\Sigma_m$ the covariance matrix of a r.v.~with law $m$ and $\Vert\cdot  \Vert_{\text{Fro}}$ the Frobenius norm on matrices}.
\end{example} 

\black{The general model space framework applies equally to models with parameter and data spaces that might be of different nature:}
\begin{example}\label{ExParaModel2}
\black{Assume $\Theta=\RR_+$ and ${\cal X}=\NN$, with 
 $p(x|\lambda)= e^{-\lambda} \frac{\lambda^x}{x!}, x\in \cal X$, and $p(\lambda)\propto e^{-\beta \lambda} \lambda^{\alpha-1}, \lambda\in \Theta. $
 In this case, ${\cal T}:\Theta \to  {\cal P}({\cal X})$ is given by ${\cal T}(\lambda) =$Pois$(\lambda)$, the  Poisson distribution of parameter $\lambda$, and a measure  $m$ sampled from the prior $\Pi(dm)=p_\sharp {\cal T}(dm)  $ is a Poisson law with random parameter  $\lambda$ distributed according to the  Gamma$(\alpha,\beta)$ law. The latter is a conjugate prior for the Poisson distribution, and $m$ sampled from $\Pi_n(dm)$, the posterior on models   given  $D=(x_1,\dots,x_n)$,  is  again a Poisson law Pois$(\lambda)$,  with random parameter $\lambda$ distributed according to Gamma$(n\bar{x}_n+ \alpha,n+\beta)$.} 
\end{example}

\begin{remark}Hierarchical models are  also catered for in the proposed setting. For instance, if in Example \ref{ExParaModel1} the hyperparameter $\mu_0$ is random with known \black{density} $\pi_0$ on $\RR^d$, it can be integrated out and the prior $p$ on parameters becomes an infinite  Gaussian mixture: 
\begin{equation*} p(\theta)= p(\theta\vert  \Sigma_0)=  \int {\cal N}(\theta;\mu_0,\Sigma)\pi_0(\mu_0)d\mu_0.\label{eq:prior_hierarchical_Ex1.1}
\end{equation*}
The parematrization mapping   ${\cal T}$ is in this case the same as before, and the corresponding prior   $\Pi(dm)=p_\sharp {\cal T}(dm) $  and posteriors $\Pi_n(dm)$ on models  $m\in {\cal P}( {\cal X})$ follow the same rationale as above.  
\end{remark}

 \black{ As a cautionary note, the  following example illustrates how defining estimators   directly  in terms of parameters  might result in non-intrinsic  criteria for model selection.  }

\begin{example} \black{On the parameter space $\Theta=[0,1]$ consider the priors $p(d\theta)=d\theta$ and $\hat p(d\theta)=2\theta d\theta$, and their associated  parametrization maps $\mathcal{T} (\theta):=B(\theta)$ and  $\hat{\mathcal{T}}:=B(\theta^2)$ respectively. Here $B(\xi)$ denotes the law of a $\{0,1\}$-valued Bernoulli r.v.\ with $\xi$ the probability of it being equal to $1$. Notice that we have $\Pi:=p_\sharp \mathcal{T}=\hat{p}_\sharp \hat{\mathcal{T}}$, as the law of $\theta^2$ under $\hat p$ is uniform on $\Theta$. Starting from the prior $p$, the posterior density of $\theta$ given observations $x_1,\dots,x_N$ is proportional to $\theta^{S_N}(1-\theta)^{N-S_N}$, with $S_N=|\{i\leq N:x_i=1\}|$. Thus the MAP estimator for $\theta$ is in this case $\theta_N:=\frac{S_N}{N}$. On the other hand, starting from the prior $\hat p$, the posterior density of $\theta$ is proportional to $\theta^{2S_N+1}(1-\theta^2)^{N-S_N}$ and now the associated MAP estimator for $\theta$ is $\hat \theta_N:=\sqrt{\frac{2S_N+1}{2N+1}}$. Hence $\mathcal{T}(\theta_N)\neq \hat{\mathcal{T}}(\hat\theta_N)$ in general (although their discrepancy vanishes in the limit as $N\to\infty$). To summarize, although the same prior and posteriors at the level of models can arise by considering different parametrizations, the latter may easily define very different estimated models even if we agree on the estimation method (here the MAP).} 
\end{example}

\subsection{Non parametric setting: posterior average estimators}\label{sec posterior estimators gnral}
\black{The general  ``learning in model space'' approach relies  on loss functions that compare directly distributions   (instead of their parameters), and thus allows us to define selection criteria based on intrinsic features of the models.  It also allows for a wider choice of  model-selection criteria, which can account for  geometric or information-theoretic properties of the models.} The next result illustrates the fact that many examples of Bayesian estimators or predictive posterior, including the classical {\it model average estimator}, correspond to  finding an instance  of  {\it Fr\'echet mean}  or barycenter \cite{frechet1948elements,panaretos2017frechet}  under a suitable metric/divergence on probability measures.  See  Appendix \ref{subsec model_averages} for the proof.

\begin{proposition}\label{prop EL}
	Consider on the model $\mathcal{M} = \mathcal{P}_{ac}(\mathcal X)$ the loss functions $L(m,\bar{m})$ given by:
		\begin{itemize}
		\item[i)]  The 	$L_{2}$-distance:  $	L(m,\bar{m})=		L_{2}(m,\bar{m}):=\frac{1}{2}\int_{\mathcal{X}}\left( m(x)-\bar{m}(x)\right) ^{2}\lambda (dx), $
		\item[ii)] The squared Hellinger distance $L(m,\bar{m})=	
		H^{2}(m,\bar{m}):= \frac{1}{2}\int_{\mathcal{X}}\left( \sqrt{m(x)}-\sqrt{\bar{m}(x)}\right) ^{2}\lambda(dx). 
		$ 
		\item[iii)] The forward Kullback-Leibler divergence: 
		$	L(m,\bar{m})=	 D_{KL}(m||\bar{m}):=\int_{\mathcal{X}}m(x)\ln \frac{m(x)}{\bar{m}(x)}\lambda(dx),$ 
		\item[iv)]  The reverse Kullback-Leibler divergence  $L(m,\bar{m})=	 D_{RKL}(m|| \bar{m})=D_{KL} (\bar{m}||m): = \int_{\mathcal{X}}\bar{m}(x)\ln \frac{\bar{m}(x)}{m(x)}\lambda(dx).$

	\end{itemize}
	Assume in each case that the infimum of the corresponding Bayes risk $\bar{m}\mapsto R_L(\bar{m}|D)$ defined in eq.~\eqref{eq RL} is attained  with finite value in ${\cal M}$. 
	Then,  in cases i)   and iii)  the corresponding Bayes estimators  \eqref{eq hat m abstract} coincide with the standard \emph{Bayesian model average}:
	\begin{align}
		\label{eq:model_average}
		\textstyle \hat{m}_{BMA}(x):=\mathbb{E}_{\Pi_n}[m](x) =\int_{\mathcal{M}}m(x)\Pi_n(dm).
	\end{align}
	Furthermore,  the  Bayes estimators corresponding to the cases ii)  and iv) are given by  the \emph{square model average} and the \emph{exponential model average},  respectively: 
	\begin{align}
		\label{eq:general_model_average}
	\hat{m}_{sqr}(x) =\frac{1}{Z_{sqr}}\left( \int_{\mathcal{M}}\sqrt{m(x)}\Pi_n(dm)\right)^{2}	\,\, ,\,\,\,\,\,\,\,
		\textstyle\hat{m}_{exp}(x)=\frac{1}{Z_{exp}}\exp \int_{\mathcal{M}}\ln m(x)\Pi_n(dm), 
	\end{align}
	where $Z_{sqr}$ and $Z_{exp}$ denote the corresponding normalizing constants.
\end{proposition}

\begin{example}\label{ex:gaussvert}
If the posterior distribution was approximately  equally concentrated on the   models $m_{0}=\mathcal{N}(\mu_0,1)$ and $m_{1}=\mathcal{N}(\mu_1,1)$ with $\mu_0\neq \mu_1$, that is, two  (unimodal) Gaussian distributions with  unit variance, then the standard model average is a  bimodal non-Gaussian distribution with variance strictly larger than $1$.
\end{example} 

\begin{example}\label{ex:gaussModAv}
In the parametric Bayesian model discussed in Example  \ref{ExParaModel1}, the Bayesian model average estimator is the convolution of distributions on $\RR^d$: 
\begin{equation}\label{eq:MAGaussian}
\begin{split}
\hat{m}_{BMA}(x)=  & \int_{\RR^d} {\cal N}(x; \theta, \Sigma) {\cal N}\left(\theta;  (\Sigma_0^{-1}+ n \Sigma^{-1})^{-1}  (\Sigma_0^{-1}\mu_0 +  n \Sigma^{-1} \bar{x}_n), (\Sigma_0^{-1}+ n \Sigma^{-1})^{-1}   \right) d\theta \\
=& \int_{\RR^d} {\cal N}(x- \theta;0 ,  \Sigma) {\cal N}\left(\theta;  (\Sigma_0^{-1}+ n \Sigma^{-1})^{-1}  (\Sigma_0^{-1}\mu_0 +  n \Sigma^{-1} \bar{x}_n), (\Sigma_0^{-1}+ n \Sigma^{-1})^{-1}   \right) d\theta \\
=& \, {\cal N}\left(x;  (\Sigma_0^{-1}+ n \Sigma^{-1})^{-1}  (\Sigma_0^{-1}\mu_0 +  n \Sigma^{-1} \bar{x}_n), \Sigma + (\Sigma_0^{-1}+ n \Sigma^{-1})^{-1}   \right),
\end{split}
\end{equation}
that is, a Gaussian density with mean equal to the posterior mean estimator $\hat{\theta}_{\ell_2}$  ---see eq.~\eqref{eq:postmeangaus}--- and a covariance matrix that is strictly larger (in the usual order on nonnegative definite symmetric matrices)  than that of the  predictive posterior law ${\cal N}(\hat{\theta}_{\ell_2}, \Sigma)$ associated with it. 
\end{example} 

\black{The Bayesian estimators  considered Proposition \ref{prop EL}, eqs.~\eqref{eq:model_average}-\eqref{eq:general_model_average}, share the following characteristic: their  values at each point  $x \in \mathcal{X}$ are computed  in terms of some posterior \emph{average} of the values of certain functions evaluated at  $x$. This is due to the fact that the corresponding distances/divergences on probability distributions are \emph{``vertical''} \cite{santambrogio2015optimal}: computing the distance between distributions $m$ and $\bar{m}$ involves the integration of vertical displacements between the  graphs of their  densities across their domain.  An undesirable fact about  \emph{vertical averages} is that they are not well suited to incorporate {\it geometric} properties into the model space (as illustrated by Example \ref{ex:gaussvert}). More generally, model averages might yield solutions that can be hardly interpretable in terms of the prior and parameters, or even be intractable. This motivates us to explore the use of ``horizontal'' distances between probability distributions, thus extending the concept of  Bayes estimator by making use of the geometric features of the model space. The next section presents the main  ideas we will rely on to build such Bayes estimators. }

\section{Wasserstein distances and Barycenters: a quick review}
\label{sec:BWBE}
We shall now introduce objects analogous to those in Proposition \ref{prop EL} but suited to {\it Wasserstein distances} on the space of probability measures. The following framework is adopted in the sequel:
\begin{assumption}\label{assum:geodesic} The metric space
$(\mathcal{X},d) $ is a separable locally-compact geodesic space endowed with a $\sigma$-finite Borel measure $\lambda$   and $p\geq 1$.
\end{assumption}
By \textit{geodesic} we mean that the space $(\mathcal{X},d) $ is complete and any pair of points admit a mid-point with respect to $d$. Next, we briefly recall some basic elements of  optimal transport theory and Wasserstein distances, referring to \cite{villani2003topics,villani2008optimal} for general background.
%, and to  the recent survey \cite{panaretos2018statistical} for some of statistical applications. 

\subsection{Optimal transport and the Wasserstein distance}
\label{sec:wassersteindist} 

Given two measures $\mu ,\upsilon $ over $\mathcal{X}$ we denote by $\cpl ( \mu ,\upsilon ) $ the set of transport plans or couplings with marginals $\mu $ and $\upsilon $, i.e., $\gamma \in \cpl (\mu,\upsilon)$ if and only if  $\gamma \in \mathcal{P}(\mathcal{X} \times \mathcal{X})$,  $\gamma ( dx,\mathcal{X}) =\mu(dx) $ and $\gamma ( \mathcal{X},dy) =\upsilon(dy) $. Given a real number $p\geq 1$ we define the $p$-Wasserstein space $\mathcal W_p(\mathcal X)$ by
\begin{align*}
\textstyle \mathcal W_p(\mathcal X)&\textstyle :=\left\{\eta\in \mathcal P(\mathcal X): \int_{\mathcal X}d(x_0,x)^p \eta(dx)< \infty,\,\,\text{some }x_0  \right\}.
\end{align*}
The $p$-Wasserstein distance between measures $\mu $ and $\upsilon $ is given by
\begin{align}
	\textstyle W_{p}( \mu ,\upsilon) = \left(\inf_{\gamma \in \cpl( \mu	,\upsilon ) }\int\limits_{\mathcal{X\times X}}d(x,y)^{p}\gamma (dx,dy) \right) ^{\frac{1}{p}}.\label{eq W p}
\end{align}
An optimizer of the right-hand side of eq.~\eqref{eq W p} always exists and is called an \textit{optimal transport}. The distance $W_p$ turns $\mathcal{W}_p(\mathcal{X})$ into a complete metric space.  If in eq.~\eqref{eq W p} we assume that $p=2$, $\mathcal X$ is the Euclidean space, and if $\mu$ is absolutely continuous, then Brenier's theorem \cite[Theorem 2.12(ii)]{villani2003topics} establishes the uniqueness of a minimizer,  and guarantees that it is supported on the graph of the subdifferential of a convex function.  The corresponding gradient is thus called  an {\it optimal transport map}.  Explicit formulae for such  optimal transport maps do exist in some cases, e.g., for generic one-dimensional distributions and  multivariate Gaussians when $p=2$ (see \cite{cuestaalbertos1993optimal}).
Contrary to the distances / divergences considered in Proposition \ref{prop EL},  Wasserstein distances  are \emph{horizontal} \cite{santambrogio2015optimal}, in the sense that they involve  integrating  horizontal displacements between the graphs of probability densities.

\begin{example}\label{ex:WdGauss}
\black{The squared $2$-Wasserstein distance between multivariate Gaussian distributions  $m={\cal N}(\theta, \Sigma)$, and $\bar{m}={\cal N}(\bar\theta, \bar{\Sigma}) $ is given  by $W_2^2(m,\bar{m})= \Vert \theta - \bar{\theta}\Vert^2+ Tr \left(  \Sigma+   \bar{\Sigma}  - 2 \left(\Sigma^{1/2}\bar{\Sigma}\Sigma^{1/2}\right)^{1/2}\right) $ ---see e.g.\ \cite{GivensShortt,DowsonLandau}. Furthermore, if $\Sigma$ and $\bar{\Sigma}$ commute, we have $W_2^2(m,\bar{m}) = \Vert \theta - \bar{\theta}\Vert^2+ \Vert \Sigma^{1/2}- \bar{\Sigma}^{1/2} \Vert^2_{\text{Fro}} $.}
\end{example}

\begin{example}\label{ex:WdPoisson}
\black{The $1$-Wasserstein distance between Poisson distributions $m=$Pois $(\lambda) $  and $\bar{m}=$ Pois $(\bar{\lambda}) $  is $|\lambda- \bar{\lambda}|$, which can be verified from the well-known expression  $W_1(m,\bar{m})= \int |m(-\infty, t]- \bar{m}(-\infty, t]| dt $ (valid for general one-dimensional distributions $m,\bar{m}$), and the so-called ``Poisson-Gamma dual relation'': $e^{-\lambda} \sum_{k=0}^n\frac{\lambda^k}{k!}= \int_{\lambda}^\infty \frac{t^n}{n!}e^{-t}dt  $. When $\lambda > \bar{\lambda}  $, the optimal coupling between $X\sim $ Pois $(\lambda) $ and $Y\sim$ Pois $(\bar{\lambda}) $ is obtained taking $X\sim $  Pois $(\lambda) $  and $Y$ binomial with parameters $(X,\frac{\bar{\lambda} }{\lambda} $) conditionally on $X$. Alternatively, one could consider the coupling $(X,Y):=(N_{\lambda},N_{\bar\lambda})$, where $\{N_t\}_{t\geq 0}$ is a Poisson process with intensity 1.}
\end{example}

\subsection{ Wasserstein barycenter}
\label{sec:wassersteinbary} 

Let us now recall the  Wasserstein barycenter, introduced in  \cite{agueh2011barycenters} and further studied in \cite{pass2013optimal,kim2017wasserstein,le2017existence}, among others. Our  definition slightly extends the ones in those works in that the optimization problem is posed in a possibly strict subset of the usual one.  

\begin{definition}\label{defi bary pop}
	Let  $\Gamma \in \mathcal{P}(\mathcal{P}(\mathcal{X}))$. The $p$-Wasserstein risk of $\nu \in \mathcal{P}( \mathcal{X})$ is 
	\begin{align*}
	V_p(\nu) &:=  \int_{\mathcal{P}(\mathcal X)} W_{p}(m,\nu)^{p}\Gamma(dm)\leq +\infty.
	\end{align*}
Given a measurable set ${\cal M}\subseteq {\mathcal{P}(\mathcal X)}$, any measure  $\hat{m}_p\in \mathcal M$ which  \black{attains the quantity  $$ \inf_{\nu \in \mathcal{M}} V_p(\nu)$$
with finite value,} 
is called a $p$-Wasserstein  barycenter of $\Gamma$ over $\mathcal M$.
\end{definition} 
\black{Notice that in principle we are not assuming  ${\cal M}$ to be a model space for $\Gamma$, but  this will often be the case}. When the support of $\Gamma$ is infinite and $\mathcal{M}=\mathcal{W}_p(\mathcal{X})$, this object is termed {\it $p$-Wasserstein population barycenter}  of $\Gamma$ as introduced in \cite{le2017existence}; see  \cite{bigot2018characterization}.

\begin{example}\label{ex:gausshoriz} 
Given two univariate Gaussian distributions $m_0 = \mathcal{N}(\mu_0,\sigma_0^2)$ and $m_1 = \mathcal{N}(\mu_1,\sigma_1^2)$, one can verify ---using the expression in Example \ref{ex:WdGauss}--- that the $2$-Wasserstein barycenter for $\Gamma(dm)=\frac{1}{2} \delta_{m_0}(dm)+ \frac{1}{2} \delta_{m_1}(dm) $  is given by  $\hat{m} =  \mathcal{N}(\frac{\mu_0 + \mu_1}{2},(\frac{\sigma_0 + \sigma_1}{2})^2)$. This should be compared to Example \ref{ex:gaussvert}. Fig.~\ref{fig:interpolation}  illustrates  the corresponding vertical and a horizontal interpolations between two Gaussian densities with different means and the same variance.
\begin{figure}[ht]
	\includegraphics[width=0.49\textwidth]{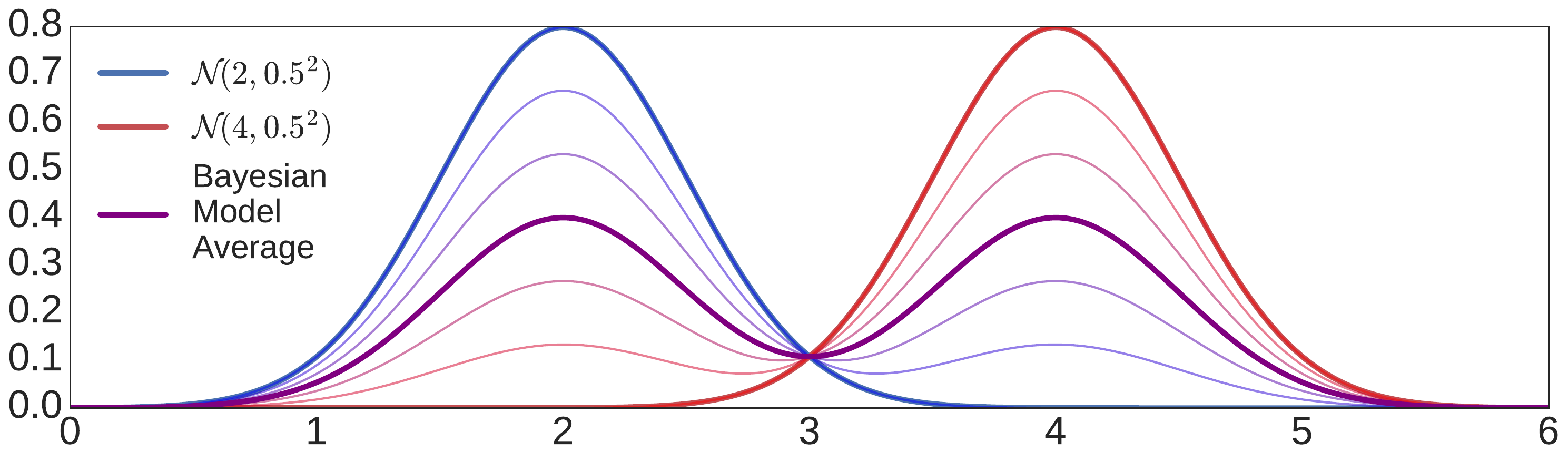}
	\includegraphics[width=0.49\textwidth]{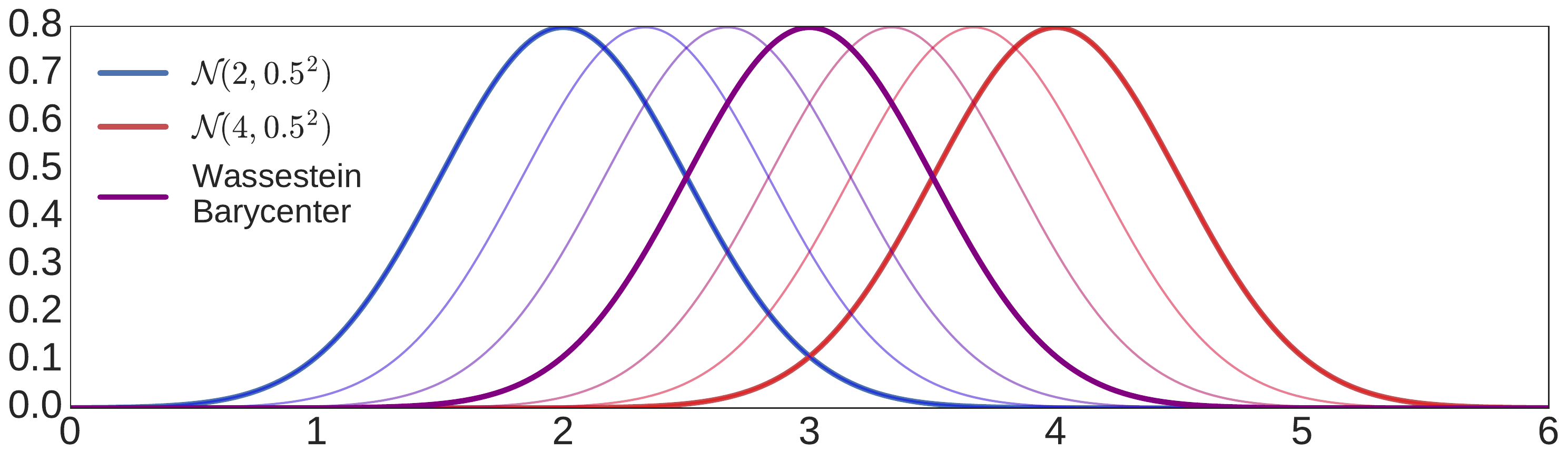}
	\vspace{-0.5em}
	\caption{Vertical interpolation (left) and horizontal interpolation (right) of two Gaussian densities.}
	\label{fig:interpolation}
\end{figure}
\end{example}

Let us introduce additional notation for the sequel and review then some basic properties of Wasserstein barycenters. 
Considering $\mathcal{W}_p(\mathcal X)$ with the complete metric $W_p$ as a base Polish metric space, we define $\mathcal{W}_p(\mathcal{W}_p(\mathcal X))$ in the natural way: 
 $\Gamma \in \mathcal{P}(\mathcal{W}_p(\mathcal{X}))$ is an element of $ \mathcal{W}_p(\mathcal{W}_p(\mathcal{X}))$ if it is concentrated on a set of measures with finite moments of order $p$, and moreover  for some (and then all) $\tilde m \in \mathcal{W}_{p}( \mathcal{X})$ it satisfies
$$\textstyle\int_{\mathcal P(\mathcal X)} W_p(m,\tilde m)^p\Gamma(dm)<\infty.$$

We endow $\mathcal{W}_p(\mathcal{W}_p(\mathcal{X}))$ with the corresponding  $p-$Wasserstein distance, which we also denote  $W_p$ for simplicity. 
Also, if $\Gamma$ is concentrated on measures with finite moments of order $p$ which have  densities with respect to $\lambda$, then we  write $\Gamma \in \mathcal{P}(\mathcal{W}_{p,ac}(\mathcal{X}))$ and use the notation $\Gamma \in \mathcal{W}_p(\mathcal{W}_{p,ac}(\mathcal{X}))$ if, as before, $\int_{\mathcal P(\mathcal X)} W_p(m,\tilde m)^p\Gamma(dm)<\infty$ for some $\tilde m$.

\smallskip

\begin{remark}\label{rem BWB implies WpWp and finmomBMA} 
\black{ If $\Gamma \in \mathcal{P}(\mathcal{P}(\mathcal{X}))$ has a $p$-Wasserstein  barycenter $\hat{m}_p$ over $\mathcal M$, then
$$\infty > \int_{\mathcal P(\mathcal X)} W_p(m,\hat{m}_p)^p\Gamma(dm) = \int_{{\cal W}_p (\mathcal X)} W_p(m,\hat{m}_p)^p\Gamma(dm),  $$
hence 
$\Gamma \in \mathcal{W}_{p}(\mathcal{W}_{p}(\mathcal{X}))$.} Moreover,  $\Gamma \in \mathcal{W}_{p}(\mathcal{W}_{p}(\mathcal{X}))$  is equivalent to the corresponding {\it model average} $\bar{m}(dx): = \mathbb{E}_{\Gamma }\left[m\right](dx)$ having a  finite $p$-moment, since for any $y\in \mathcal X$, 
	\begin{equation*}
	\textstyle\int_{\mathcal{W}_{p}(\mathcal X)} W_{p}(\delta_y ,m )^{p} \Gamma(dm) =\textstyle \int_{\mathcal{W}_{p}(\mathcal X)}\int_{\mathcal{X}}d(y,x)^{p}m(dx)\Gamma(dm) 
	= \textstyle\int_{\mathcal{X}}d(y,x)^{p}\int_{\mathcal{W}_{p}(\mathcal X)}m(dx)\Gamma(dm).
	\end{equation*}
\end{remark}	 

We next state an existence result first established in \cite[Theorem 2]{le2017existence} for the case ${\cal M}= \mathcal{W}_{p}$. See  Appendix \ref{subsec wasserstein_barycenters} for a simpler, more direct, proof. 

\begin{theorem}\label{baryexists}
\black{Suppose Assumption \ref{assum:geodesic} holds,  $\Gamma \in \mathcal{P}(\mathcal{P}(\mathcal{X}))$, and ${\cal M}\subset \mathcal{W}_{p}$ is a weakly closed set.	There exists a $p$-Wasserstein barycenter of $\Gamma$ over  ${\cal M}$ if and only if $\Gamma \in \mathcal{W}_{p}(\mathcal{W}_{p}(\mathcal{X}))$. }
\end{theorem}

Regarding uniqueness,  the following general result was proven in \cite[Proposition 6]{le2017existence}  for the case $\mathcal X=\mathbb{R}^q$ with $d$ the Euclidean distance and $p=2$ (observe that,  in that situation,  the previous result applies):

\begin{lemma}\label{uniqueB}
	Assume $\Gamma\in \mathcal{W}_2(\mathcal{W}_2(\mathbb{R}^q))$ and that there exists a set $A\subset \mathcal{W}_{2}( \mathbb{R}^{q})$ of measures with
	\begin{equation*}
		\mu \in A \text{, } B\in \mathcal{B}( \mathbb{R}^{q}) \text{, }\dim ( B) \leq q-1\
		\Longrightarrow \mu ( B) =0,
	\end{equation*}
	and $\Gamma(A) >0$. Then, $\Gamma$ admits a unique $2$-Wasserstein population barycenter over $\mathcal{W}_2(\mathbb{R}^q)$. 
\end{lemma}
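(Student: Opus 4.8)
The plan is to prove uniqueness through the multimarginal reformulation of the barycenter problem. The naive approach --- showing that $V(\nu)=\int W_2^2(\nu,m)\,\Pi(dm)$ is strictly convex along the $\mathcal{W}_2$-geodesic joining two candidate barycenters --- is bound to fail, because $\nu\mapsto W_2^2(\nu,m)$ is in general \emph{not} convex along geodesics: it is convex only along the generalized geodesics based at $m$ itself, and the good interpolating path therefore depends on $m$. Since we must control all $m$ simultaneously, there is no single path in $\mathcal{W}_2$ along which every term of $V$ behaves well. The remedy is to lift the problem to the level of couplings, where the cost becomes \emph{linear}.

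First I would recast the problem following the reformulation of Agueh--Carlier \cite{agueh2011barycenters} and its extension to the population setting \cite{le2017existence}: the value $V(\Pi)$ equals the multimarginal optimal transport value
$$\inf_P \int\!\!\int \big| x_m - \bar{x} \big|^2 \,\Pi(dm)\,P(d\mathbf{x}),\qquad \bar{x} := \int x_{m'}\,\Pi(dm'),$$
where $P$ ranges over the consistent couplings of $\{m\}_{m\in\mathrm{supp}(\Pi)}$ (joint laws of random fields $\mathbf{x}=(x_m)_m$ with $x_m\sim m$), and every $2$-Wasserstein population barycenter $\hat m$ is recovered as the law of the barycentric field, $\hat m=\bar{x}_{\#}P$, for some optimal $P$. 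The crucial point is that the integrand is a fixed function of the field $\mathbf{x}$, so $P\mapsto\int\!\int|x_m-\bar x|^2\,\Pi(dm)\,P(d\mathbf{x})$ is linear in $P$, and the set of optimal couplings is convex.

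Next, suppose $\hat m_0=\bar{x}_{\#}P_0$ and $\hat m_1=\bar{x}_{\#}P_1$ are two barycenters with $P_0,P_1$ optimal. By linearity $P_{1/2}:=\tfrac12(P_0+P_1)$ is again optimal. Here the hypothesis $\Pi(A)>0$ enters: pick $m^{\ast}\in A$, so $m^{\ast}$ charges no set of dimension $\le q-1$. For the quadratic-type multimarginal cost this is exactly the non-degeneracy needed for a McCann-type uniqueness statement \cite{villani2003topics}, forcing any optimal coupling to be a map out of the $m^{\ast}$-marginal: $P_{1/2}$-almost surely the entire field is a deterministic function $x_m=S_m(x_{m^{\ast}})$ of its $m^{\ast}$-coordinate. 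Since $P_0\le 2P_{1/2}$ and $P_1\le 2P_{1/2}$, both are absolutely continuous with respect to $P_{1/2}$ and hence inherit the very same graph structure with the same maps $S_m$; as their common $m^{\ast}$-marginal is $m^{\ast}$, this yields $P_0=P_1$, and therefore $\hat m_0=\bar{x}_{\#}P_0=\bar{x}_{\#}P_1=\hat m_1$.

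The main obstacle is the step asserting that a marginal vanishing on $(q-1)$-dimensional sets forces the optimal multimarginal plan to be supported on a graph over $m^{\ast}$. This is the genuine place where the dimension hypothesis is used: it rests on $c$-cyclical monotonicity of optimizers together with a differentiability/non-degeneracy argument for the Kantorovich potential restricted to $m^{\ast}$, which the quadratic structure of the cost reduces to Brenier--McCann theory. A secondary technical point is making the continuum-of-marginals reformulation precise and identifying \emph{every} barycenter with the barycentric pushforward of some optimal plan; for the latter one disintegrates the glued optimal plan and invokes the first-order (fixed-point) characterization computed in the proof of Lemma \ref{measurability fixed point}.
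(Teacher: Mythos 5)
A preliminary point: the paper contains no proof of this lemma at all --- it is recalled verbatim from \cite[Proposition 6]{le2017existence} --- so your proposal should be measured against the standard argument given there, which is far shorter than your route. Your opening diagnosis, on which the whole detour rests, is incorrect: while $\nu\mapsto W_2^2(\nu,m)$ is indeed not convex along $\mathcal{W}_2$-geodesics, it \emph{is} convex along linear (mixture) interpolation $\nu_t=(1-t)\nu_0+t\nu_1$, uniformly in $m$, since if $\gamma_i\in\Gamma(\nu_i,m)$ is optimal then $(1-t)\gamma_0+t\gamma_1\in\Gamma(\nu_t,m)$. So there is a single path along which every term of $V$ behaves well, contrary to your claim. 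Moreover, for $m^{\ast}\in A$ the convexity is strict: if equality held at some $t\in(0,1)$, the mixture coupling would be optimal for $(\nu_t,m^{\ast})$, hence by Brenier--McCann (applicable exactly because $m^{\ast}$ charges no set of dimension $\leq q-1$) concentrated on a graph over the $m^{\ast}$-coordinate; since $\gamma_0\leq(1-t)^{-1}\gamma_t$ and $\gamma_1\leq t^{-1}\gamma_t$, both $\gamma_i$ live on that same graph with the same second marginal $m^{\ast}$, forcing $\gamma_0=\gamma_1$ and thus $\nu_0=\nu_1$. As $\Pi(A)>0$, $V$ is strictly convex under mixtures and the minimizer is unique. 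Note this uses precisely the one regularity ingredient you invoke, but at the two-marginal level where it is available off the shelf.

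By contrast, the decisive step of your multimarginal argument is a genuine gap. For a continuum of marginals, neither the equivalence between the barycenter problem and the process-level multimarginal problem (measurable fields $(x_m)_m$, existence of consistent couplings, identification of \emph{every} barycenter as $\bar{x}_{\#}P$ for an optimal $P$), nor --- more seriously --- the claim that a single marginal vanishing on small sets forces every optimal plan onto a graph over the $m^{\ast}$-coordinate, is an ``off-the-shelf'' reduction to cyclical monotonicity plus Brenier--McCann. The known route to that graph structure passes through the first-order condition $x_m=T^{m}_{\hat\mu_P}(\bar{x})$ ($P$-a.s., $\Pi$-a.e.\ $m$) relative to the plan's \emph{own} barycenter $\hat\mu_P:=\bar{x}_{\#}P$, and then through regularity of $\hat\mu_P$ (vanishing on small sets) to invert the link via the $m^{\ast}$-coordinate --- i.e.\ precisely the Agueh--Carlier/Kim--Pass type lemmas (\cite{agueh2011barycenters,kim2017wasserstein}) that are proved only for finitely supported $\Pi$ and that your sketch does not supply for population $\Pi$. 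There is also an internal redundancy worth noting: once the reformulation is in place, $\bar{x}_{\#}P_{1/2}=\tfrac12(\hat m_0+\hat m_1)$ is itself a barycenter by linearity and optimality of $P_{1/2}$, at which point the strict mixture-convexity argument above concludes immediately --- so the graph-structure machinery, which is exactly where your proposal is unproven, can be excised entirely.
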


\begin{remark}\label{rem exist unique bary} \black{ Observe  that the model space $\mathcal M=\mathcal W_{p,ac}(\mathcal X)$ is not weakly closed.}  Nevertheless, the existence and uniqueness of a population barycenter over that set can still be guaranteed when $p=2$, $\mathcal{X}=\mathbb{R}^{q}$, $d$ is the Euclidean distance, $\lambda$ is the Lebesgue measure, and
\begin{align}\textstyle
\Gamma\left ( \left\{m:\left \|\frac{dm}{d\lambda}\right\|_\infty<\infty \right\} \right )>0.\label{ass extra densities bdd}
\end{align} 
This was proven in \cite[Theorem 6.2]{kim2017wasserstein} for compact finite-dimensional manifolds with lower-bounded Ricci curvature (equipped with the volume measure), but one can read-off the (non-compact but flat) Euclidean case $\mathcal X=\mathbb R^q$ from the proof therein,  in order to establish  the absolute continuity of a  barycenter over $\mathcal{W}_2(\mathbb{R}^q)$,  in the setting of Lemma \ref{uniqueB}. If $|\text{supp}(\Gamma)|<\infty$ then eq.~\eqref{ass extra densities bdd} can be relaxed to the condition $\Gamma\left ( \left\{m:   m \ll  \lambda \right\} \right)>0 $, as shown in \cite{agueh2011barycenters} or \cite[Theorem 5.1]{kim2017wasserstein}.
\end{remark}

The following statement, corresponding to   \cite[Lemma 3.1]{bakchoff2022},   provides a useful description of barycenters which  generalizes a result proven in  \cite{alvarez2016fixed} when  $|\text{supp}(\Gamma)|<\infty$. 

\begin{lemma}
	\label{measurability fixed point}
Assume $p=2$, $\mathcal{X}=\mathbb{R}^{q}$, $d=$ Euclidean distance, $\lambda =$Lebesgue measure.  Let
$\Gamma \in \mathcal{W}_2(\mathcal{W}_2(\mathcal{X}))$ and  $\tilde{\Gamma} \in \mathcal{W}_2(\mathcal{W}_{2,ac}(\mathcal{X}))$. 
There exists a jointly measurable function  $\mathcal W_{2,ac}(\mathbb R^q)\times\mathcal W_2(\mathbb R^q)\times \mathbb R^q \ni(\mu,m,x)\mapsto T^m_\mu(x)$  which is $\mu(dx)\Gamma(dm)\tilde\Gamma(d\mu)$-a.s.\ equal to the  unique optimal transport map from $\mu$ to $m $ at $x$. Furthermore, letting $\hat \mu$ be a barycenter of $\Gamma $, we have
	$$ x \,=  \, \int T^m_{\hat\mu}(x) \Gamma (dm),\,\, \hat \mu(dx)-a.s. $$

\end{lemma}

\section{Bayesian Wasserstein barycenter and statistical properties}\label{sec consistency}

Building on  the Wasserstein distance as a loss function on models, we arrive to the following central object of the article:

\begin{definition}\label{defi bary}
Let us consider a prior $\Pi \in \mathcal{P}(\mathcal{P}(\mathcal{X}))$ with model space ${\cal M}\subseteq \mathcal{W}_{p,ac}( \mathcal{X})$ and data $D=(x_1,\dots,x_n)$ which determines $\Pi_n$ as in eq.~\eqref{eq def Pi n}. We define the $p$-Wasserstein Bayes risk of $\bar{m} \in \mathcal{W}_{p}( \mathcal{X})$ and a Bayes  Wasserstein barycenter  (BWB) estimator $\hat{m}_p^{n}$ over  $\mathcal M $ respectively as follows:
	\begin{eqnarray}
	V_p^n(\bar{m}|D) &:=&  \int_{\mathcal{P}(\mathcal X)} W_{p}(m,\bar{m})^{p}\Pi_n(dm) \,  \mbox{, and }\\
	\hat{m}_p^n &\in& \argmin_{\bar{m} \in \mathcal{M}} V_p^n(\bar{m}|D),\label{eq bary for posterior}
	\end{eqnarray} \black{if the corresponding minimum is finite. } 
\end{definition} 

\begin{example}\label{ex:gaussBWBvsBMA}
\black{In the setting of Example \ref{ExParaModel1}, the  $2$-Wasserstein loss function on Gaussian models (see Example \ref{ex:WdGauss}) induces   the  usual quadratic loss on the mean parameters: 
$W_2^2(m,\bar{m})= \ell(\theta,  \bar{\theta}):= \Vert \theta - \bar{\theta}\Vert^2$. Thus, in the notation of eq.~\eqref{eq:param_bayes_risk}, we have
$$ V_2^n(\bar{m}|D) =  
	\textstyle R_{\ell} (\bar{\theta}|D).  $$ 
	This implies that, for any tuple of data point $D=(x_1,...,x_n)$ the BWB corresponds to  the Gaussian distribution  ${\cal N}(\bar{\theta}, \Sigma)\in {\cal M}$ with mean $\bar{\theta}=\hat{\theta}_{\ell_2}= (\Sigma_0^{-1}+ n \Sigma^{-1})^{-1}  (\Sigma_0^{-1}\mu_0 +  n \Sigma^{-1} \bar{x}_n),$ that is, the posterior mean estimator of the mean parameter. Moreover, in this particular case the barycentric cost or optimal $2$-Wasserstein Bayes risk equals the trace of the covariance of a random vector with law given in  eq.~\eqref{eq:post_param_ex1.1}, i.e., $Tr((\Sigma_0^{-1}+ n \Sigma^{-1})^{-1})$. 
	Notice that the  covariance   $\Sigma$ of the BWB  is strictly smaller than that  of the corresponding BMA estimator in eq.~\eqref{eq:MAGaussian}  (in the usual order on symmetric positive semidefinite matrices).   This is, in fact, a general property of the BWB as claimed in Proposition \ref{prop:BWBlessBMA} below. } 
\end{example}
\begin{example}\label{ex:PoissonBWB}
\black{Similarly, in the parametric setting of  Example \ref{ExParaModel2}, the problem of finding BWB estimators 
can be written  in terms of the parametric loss induced by the corresponding $p$-Wasserstein distance  on Poisson distributions, computed in Example
\ref{ex:WdPoisson}. We thus deduce for $p=1$ that the BWB estimator $	\hat{m}_1^n$ corresponds, in that setting, to the  law Pois$(\lambda_1) $ with  $\lambda_1$ the median of the posterior distribution Gamma$(n \bar{x}_n+\alpha,n +\beta)$; for which an explicit expression is not available. }
\end{example}

\begin{example} \black{Let us assume  ${\cal X}=\RR$ and that $\Pi(dm)$ is supported on continuous models   $m$  over $\mathbb{R}$. Let $F_m$ and $Q_m:=F_m^{-1}$ denote, respectively, the cumulative distribution function and the right-continuous quantile function of such  $m$. The coupling $(x,T_{m_0}^m(x))$, with $x$ distributed like $m_0$ and $T_{m_0}^m$ the  increasing map 
$T_{m_0}^m(x) : = Q_{m}(F_{m_0}(x))$,  is known to be optimal for the $p$-Wasserstein distance, for any $p\geq  1$ (see \cite[Remark 2.19(iv)]{villani2003topics}). The BWB  $\hat{m}_n=\hat{m}_n^p$ of the posterior $\Pi_n$ is also independent of $p$ and is characterized via its quantile, as follows:} 
\begin{equation*}
\textstyle Q_{\hat{m}_n}(\cdot) = \int_{\cal M} Q_m(\cdot) \Pi_n(dm).
\end{equation*}
Interestingly enough, the model average  $\bar{m}_n:=\int m\Pi_n(dm)$ of $\Pi_n$ is in turn  characterized by its averaged cumulative distribution function:  $F_{\bar{m}_n}(\cdot) = \int_{\cal M} F_m(\cdot) \Pi_n(dm)$. See Section 5 in the companion paper \cite{bakchoff2022} for details and further discussion on one-dimensional Wasserstein barycenters, in particular, on geometric properties they inherit from the elements $m$ of the support of the prior/posterior. 
\end{example}

\medskip

\begin{remark}\label{rem Pin a.s.} \black{In the sequel,  unless otherwise stated, an a.s.~statement about $\Pi_n$ is meant to hold almost surely  with respect to the marginal law $m^{\otimes n}(dx_1,\dots,dx_n )\int_{\cal M}\Pi(dm)$ of a data sample of size $n$ (sometimes called prior predictive distribution). In particular, for $\Pi(dm)-$ almost every $m$, such statement holds for $m^{\otimes n}(dx_1,\dots,dx_n )-$ almost every sample $(x_1,\dots,x_n)$.}
\end{remark}

\begin{remark}\label{rem W_pW_p wrt marginal law} We observe that $\Pi\in \mathcal W_{p}(\mathcal W_p(\mathcal X))$ implies for each $n\geq 1$ that 
$$\Pi_n\in \mathcal W_{p}(\mathcal W_p(\mathcal X))\,\,\, \text{a.s.}$$
Indeed, for fixed $\tilde m\in \mathcal W_p(\mathcal X)$, 
(we thank an anonymous referee for pointing this out) the following quantity is finite:
\begin{equation}
    \begin{split}
\int_{\cal M} W_p(m, \tilde m )^p \Pi(dm)   = & \int_{{\cal X}^n} \int_{\cal M} W_p(m, \tilde m )^pm(x_1)\cdots m(x_n)  \Pi(dm) \lambda(dx_1)\cdots \lambda(dx_n) \\ 
=&      \int_{{\cal X}^n} \int_{\cal M} W_p(m, \tilde m )^p \Pi_n (dm)    \int_{\mathcal{M}}\bar{m}\left( x_{1}\right) \cdots\bar{m}\left( x_{n}\right)  \Pi \left( d\bar{m}\right) \lambda(dx_1)\cdots \lambda(dx_n) \\ 
=&      \int_{{\cal X}^n} \left[ \int_{\cal M} W_p(m, \tilde m )^p \Pi_n (dm)  \right]  \,   \bar{m}^{\otimes n}(dx_1,\dots,dx_n )\int_{\cal M}\Pi(d\bar{m}).
    \end{split}
\end{equation}
However, $\Pi\in \mathcal W_{p}(\mathcal W_p(\mathcal X))$ is in general not enough for $\Pi_n\in \mathcal W_{p}(\mathcal W_p(\mathcal X))$ to hold a.s.\ for i.i.d.\ data points $(x_1,\dots, x_n)$ sampled from a \emph{fixed} law $m_0$, which would be the natural setting to formulate the question of Bayesian consistency (see next subsection). Corollary \ref{lemma m0KL PinWpWp} below  ensures this fact for suitable given laws $m_0$, in the framework of Bayesian consistency.  In Appendix \ref{subsec existence_barycenters} we further provide a sufficient condition on the prior $\Pi$ (termed {\it integrability after updates})   ensuring that 
$\Pi_n\in \mathcal W_{p}(\mathcal W_p(\mathcal X))$ for \emph{every} possible tuple of data points $(x_1,\dots,x_n)$. 
%that even if $\Pi\in \mathcal W_{p}(\mathcal W_p(\mathcal X))$, it may still happen that $\Pi_n\notin \mathcal W_{p}(\mathcal W_p(\mathcal X))$. and therefore the existence of a BWB  estimator, for all $n$.
\end{remark} 

The following statement gathers the   discussion in Section \ref{sec:wassersteinbary} for the case $\Gamma=\Pi_n$,  as well as the main point of Remark  \ref{rem W_pW_p wrt marginal law}:

\smallskip

\begin{theorem}\label{thm summa}
\black{Suppose Assumption \ref{assum:geodesic} holds  and the  model space $\mathcal M$ is weakly closed. Let  $\Pi \in \mathcal{P}(\mathcal{P}(\mathcal{X}))$ be a prior with model space ${\cal M}\subseteq \mathcal{W}_{p,ac}( \mathcal{X})$ and  $\Pi_n$ be the corresponding posterior given the  data $D=(x_1,\dots,x_n)$. The following are equivalent:
\begin{itemize}
    \item[a)] A $p$-Wasserstein barycenter estimator $\hat m_p^n$ for $\Pi_n$ over $\mathcal M$ exists a.s. 
    \item[b)]  $\Pi_n\in \mathcal W_{p}(\mathcal W_p(\mathcal X))$ a.s. 
    \item[c)] The model average $\bar{m}^n(dx) = \mathbb{E}_{\Pi_n}\left[m\right](dx)$ has  a.s.\ a finite $p$-moment. 
\end{itemize}
Moreover, if   
$\Pi\in \mathcal W_{p}(\mathcal W_p(\mathcal X))$,    a $p$-Wasserstein barycenter estimator $\hat m_p^n$ over $\mathcal M$ exists a.s.\ for every $n\geq 1$.}
\end{theorem}

We thus make in all the sequel the following  assumption: 

\begin{assumption}\label{ass nice}
 $\Pi\in \mathcal{W}_p(\mathcal{W}_{p,ac}(\mathcal{X}))$ and there exists a weakly closed model space  $\mathcal M\subseteq \mathcal W_{p,ac}(\mathcal X)$ for $\Pi$.
 %\black{BORRARIAMOS ESTO and $\Pi_n\in \mathcal W_{p}(\mathcal W_p(\mathcal X))$ a.s.\ all $n$.}
\end{assumption} 
Since $\Pi_n \ll \Pi$ a.s., Assumption \ref{ass nice} together with Remark \ref{rem W_pW_p wrt marginal law}  imply $\Pi_n\in \mathcal{W}_p(\mathcal{W}_{p,ac}(\mathcal{X}))$ a.s. for all $n$.

\smallskip 

We will next study some basic statistical properties of the BWB estimator. 

\subsection{Variance reduction with respect to  BMA}\label{subsec:varreducBMA}

In this subsection, we assume that ${\cal X}= \mathbb{R}^q$,  $\lambda = $ Lebesgue measure, $d=$ Euclidean distance and $p=2$.  Let $\hat{m}:=\hat{m}^n_2$ be the unique population barycenter of $\Pi_n$ in that case,  and denote by   $(m,x)\mapsto T^m(x)$ a measurable function equal   $\lambda(dx)\Pi(dm)$  a.e.\ to the  unique optimal transport map from $\hat{m}$ to $m \in \mathcal{W}_2(\mathcal{X})$. As a consequence of Lemma \ref{measurability fixed point} we have  the fixed-point property $\hat{m} =( \int T^m \Pi(dm))(\hat{m})$. Thus, for all convex functions $\phi$, non negative or with at most quadratic growth,  we have
\begin{eqnarray}
	\mathbb{E}_{\hat{m}}[\phi (x)]
	&=& \textstyle  \int_{\mathcal{X}}\phi (x)\hat{m}(dx)
	=\int_{\mathcal{X}}\phi\left (\int_{\mathcal{M}} T^m(x) \Pi_n(dm)\right)\hat{m}(dx)\notag \\
	&\leq& \textstyle  \int_{\mathcal{X}}\int_{\mathcal{M}} \phi ( T^m(x)) \Pi_n(dm)\hat{m}(dx)
	=\int_{\mathcal{M}}\int_{\mathcal{X}}\phi ( T^m(x)) \hat{m}(dx)\Pi_n(dm)\notag \\
	&=& \textstyle   \int_{\mathcal{M}}\int_{\mathcal{X}}\phi(x) m(dx)\Pi_n(dm)
	=	\int_{\mathcal{X}} \phi(x) \int_{\mathcal{M}}m(dx)\Pi_n(dm) = \mathbb{E}_{\hat{m}_{BMA}} [\phi(x)],\notag
\end{eqnarray}
where $\hat m_{BMA} =\mathbb{E}_{\Pi_n}[m]$ is the Bayesian model average in eq.~\eqref{eq:model_average}. We have used  Jensen's inequality and Fubini's theorem. This means that  the BWB estimator  is less spread,  or \textit{smaller}, in the sense of  convex-order of probability measures, than the BMA.  As a consequence,  we have:

\black{ \begin{proposition}\label{prop:BWBlessBMA}
	Consider $p= 2$, and let $\hat m_{BMA}$ and $\hat{m}_2^n$ respectively denote the BMA and the $BWB$ estimators associated with $\Pi_n$. Then, we have $\mathbb{E}_{\hat{m}_{BMA}}[x] = \mathbb{E}_{\hat{m}^n_2}[x]$ and $\mathbb{E}_{\hat{m}_{BMA}}[\Vert x \Vert^2] \geq \mathbb{E}_{\hat{m}^n_2}[\Vert x \Vert^2]$. In other words,  the BWB  estimator  has less variance than the BMA. Furthermore, with $\bar x$  denoting the mean w.r.t.\ the $BMA$ or $BWB$,  the corresponding covariances satisfy:   $\mathbb{E}_{\hat{m}_{BMA}}[ (x-\bar{x})( x-\bar{x})^t ] \geq \mathbb{E}_{\hat{m}^n_2}[(x-\bar{x})( x-\bar{x})^t]$ in the usual order for symmetric positive semidefinite matrices. 	The inequalities are strict unless $\Pi_n$ is a Dirac mass.  
\end{proposition}}

\begin{proof}\black{   Given the previous discussion, the equality  of means is obtained by taking $\phi(x)$  and $-\phi(x)$ equal to each coordinate of $x$ and the inequality of variances by taking $\phi(x)=\Vert x \Vert^2$. The inequality for the covariances follows by taking $\phi(x)=(y^t (x-\bar{x}))^2 $ for arbitrary $y\in \RR^d$. }

\black{For the last claim, we just need to make explicit  the corresponding equality case of Jensen's inequality as used in the preceding discussion. Since $\|x\|^2$ is strictly convex, the equality of second moments implies that $\hat{m}_2^n(dx)$  a.e.\ $x$, that   $T^m(x) $ is $\Pi_m(dm)$ a.s. constant.  This entails that the map $T:=T^m$ does not depend on $m$ and that for $\Pi_m(dm)$ a.e.\ $m$, it holds that $m=T(\hat{m}_2^n)$.  The equality case for the covariances is reduced to the previous one considering their traces using also the equality of means. }
\end{proof}

\subsection{Convergence to the true model and Bayesian consistency}\label{subsec:consitency}

A natural question in Bayesian statistics is whether a given predictive posterior  estimator is \emph{consistent}  (see \cite{schwartz1965bayes,diaconis1986consistency,ghosal2017fundamentals}). In short,  this means the convergence  of  the predictive posterior law, in some specified sense,  towards the \emph{true} model $m_0$, as we observe more and more i.i.d.\ data sampled from $m_0$. 
We are specifically interested in the  question of whether the BWB estimator converges to the model $m_0$ and, more precisely, on conditions which guarantee that 
\begin{equation}\label{eq consistentW}
W_p(\hat m_p^n,m_0)\to 0,\,\, m_0^{(\infty)} a.s.
\end{equation}
\black{as $n\to \infty$, where $\hat{m}_p^n$ is for each $n$ a BWB  over a model space ${\cal M}$.}  Here and in the sequel,  $m_0^{(\infty)}$ denotes  the product law of the infinite sample $\{x_n\}_n$ of i.i.d.\ data distributed according to $m_0$. We will see that this question is linked to the notion of consistency \black{(cf. \cite[Definition 6.1]{ghosal2017fundamentals})} of the prior, introduced next:

\begin{definition}
	The prior $\Pi$ is said to be  consistent at $m_0$ in the  weak topology  (resp.\ $p$-Wasserstein topology)  if for each open neighbourhood $U$ of $m_0$ in the weak   topology of ${\cal P}(\mathcal X)$ (resp.\ $p$-Wasserstein topology of ${\cal W}_p(\mathcal X)$), one has
	$\Pi_n(U^c)\to 0\, , \,\,\, m_0^{(\infty)}-a.s.$
\end{definition}
%
%\blue{Equivalently (\cite[Proposition 6.2]{ghosal2017fundamentals}) consistency at $m_0$ in the $p$-Wasserstein (resp. weak)  topology means the $m_0^{(\infty)}$-almost sure convergence in probability of the posterior to the dirac centred at $m_0$.}

\begin{remark} \black{We notice that in the literature of Bayesian consistency, see e.g. \cite{schwartz1965bayes,ghosal2017fundamentals},   $\Pi$ satisfying the above property (in either topology)  would be called  ``strongly consistent'' at $m_0$   in allusion  to the $m_0^{(\infty)}$-a.s.\  convergences, whereas the term  ``weakly consistent''  would be used when those convergences hold in probability. Since we will be only dealing with the almost sure notion,  in order to avoid possible confusions with topological concepts,   the adverb ``strongly''   is omitted throughout when referring to consistency of the prior, while the adverb  ``weakly'' only refers to the weak topology on probability measures. }  
\end{remark}

\smallskip 

The celebrated Schwartz theorem  \cite{schwartz1965bayes} provides sufficient conditions for consistency w.r.t.\ a given topology, see also \cite[Chapter 6]{ghosal2017fundamentals} for a  modern treatment. A key ingredient  is the notion of Kullback-Leibler support:

\begin{definition}
	A measure $m_{0}$ is an element the Kullback-Leibler support of $\Pi$, denoted $$m_0 \in \text{KL}\left( \Pi \right),$$ if $\Pi \left(m:D_{KL}\left(m_{0}||m\right) < \varepsilon \right)>0$ for every $\varepsilon >0$, with $D_{KL}\left(m_{0}||m\right)$  the reverse Kullback-Leibler entropy defined as $\int \log \frac{m_{0}}{m}(x) m_0(dx)$ if $m_0\ll m$ and as $+\infty$ otherwise. 
\end{definition}
\begin{remark} \black{The  statistical model is interpreted as being  ``correct'' or \emph{well specified},   if the  data distribution $m_0$ is an element of  $\supp(\Pi)$, the support of $\Pi$ w.r.t.\  the weak topology,   see  \cite{berk1966, grendarjudge, kleijn2004bayesian, kleijn2012bernstein,ghosal2017fundamentals}. The condition $m_0 \in \text{KL}\left( \Pi \right)$ is stronger. Indeed, by the Csiszar-Pinsker inequality and the fact that the dual bounded-Lipschitz distance (metrizing the weak topology in ${\cal P}(\mathcal X)$) is majorized by the total variation distance, one can check that  $\text{KL}\left( \Pi \right)\subseteq  \supp(\Pi)$. In particular, one has  $\text{KL}\left( \Pi \right)\subseteq {\cal M}$ for any weakly closed model space ${\cal M}$ for $\Pi$. 
(The reader may consult the  mentioned works for the \emph{misspecified}  framework too.)}
\end{remark}

\begin{remark}\label{abs cont m_0 marg}
If  $m_0 \in \text{KL}\left( \Pi \right)$, then  $m_0(dx) \ll m(dx)\int_{\cal M}\Pi(dm)$,  the marginal law of one data point $x$ in the Bayesian model defined by $\Pi $. Indeed, for any measurable set $A\subseteq {\cal X}$ such that $\int_{\cal M} \int_{A} m(dx)\Pi(dm)=0 $ we have
$ \Pi \left(m: m_0\ll m,  m(A)=0   \right)= \Pi \left(m: m_0\ll m  \right)   \geq  \Pi \left(m:D_{KL}\left(m_{0}||m\right)< +\infty \right) >0.  $ This will be useful later. 
\end{remark}

We recall (see Theorem 6.17 and Example 6.20 in \cite{ghosal2017fundamentals}) the following result which concerns the weak topology: 
\begin{theorem}\label{schartzweak}
Assume (only) that $({\cal X},d)$ is Polish  endowed with a $\sigma$-finite Borel measure $\lambda$, that $\Pi\in {\cal P}_{ac} ({\cal X})$  and that  $m_0 \in \text{KL}\left( \Pi \right)$. Then, $\Pi$
is consistent at $m_0$ in the weak topology. 
\end{theorem}

 We  now state a general result relating  consistency of  $\Pi$  at $m_0$ in the $p$-Wasserstein topology and the convergence \eqref{eq consistentW}  (or consistency of the predictive posterior $\hat{m}_p^n$),   with other asymptotic properties of the posterior laws in the Wasserstein setting. Recall  that the notation  $W_p$  throughout stands for  the Wasserstein distance both in  $\mathcal{W}_p(\mathcal X)$ and $\mathcal{W}_p(\mathcal{W}_p(\mathcal X))$. 
 %Recall also that $\mu_k \to \mu$ in $W_p(\cal X)$ iff for all continuous functions $\psi$  on $(\cal X)$ with $|\psi(x)| \le K(1+d^p(x,x_0))$  for some $K \in \mathbb{R}_+$ it holds that $\int_{\mathcal{X}}\psi(x)\mu_n(dx) \to \int_{\mathcal{X}}\psi(x)\mu(dx)$; see \cite{villani2008optimal}.  Similarly, $\Gamma_k \to \Gamma$ in $W_p(\cal X)$ iff for all continuous functions $\Psi$  on ${\cal W}(\cal X)$ with $|\Psi(m)| \le K(1+W^p(m,\bar m))$  for some $K \in \mathbb{R}_+$ and some $\bar m\in {\cal W}({\cal X}) $,  it holds that $\int_{ {\cal W}({\cal X})} \Psi(x)\Gamma_n(dm) \to \int_{{\cal W}({\cal X})}\Psi(m)\Gamma(dm)$. 

 \begin{theorem}\label{thm equiv consistency}
Suppose  Assumptions \ref{assum:geodesic} and \ref{ass nice} hold, \black{that $m_0 \in {\cal M}$, and that }
\begin{equation}\label{cond PinWpWp}
\Pi_n \in \mathcal{W}_p(\mathcal{W}_p(\mathcal X)), \quad m_{0}^{\otimes n}-\mbox{ a.s. for all }n\geq 1.
\end{equation}
The following are equivalent:
\begin{itemize}
\item[a)]  $W_p(\Pi_n,\delta_{m_0}) \rightarrow 0$, $m_{0}^{(\infty)}$-a.s.\ as $n\to \infty$. 
\item[b)]  $m_{0}^{(\infty)}$-a.s.\ as $n\to \infty$, we have  $W_p(\hat m_p^n,m_0)\rightarrow 0$ and the  barycentric cost (or optimal $p-$Wasserstein Bayes risk or)  $\int_{\mathcal M}W_p(m,\hat m_p^n)^p\,\Pi_n(dm)$ goes to $0$.
\item[c)] $\Pi$ is  consistent at $m_0$ in the $p$-Wasserstein topology and the $p-$moment of the  BMA estimator \eqref{eq:model_average} converge $m_{0}^{(\infty)}$-a.s.\ to that of $m_0$ as $n\to \infty$, i.e.\ for some (and then all) $x_0\in \cal X$ we have 
\begin{equation}\label{eq:convpmomBMA}
		\textstyle \int_{{\cal X}} d( x,x_0)^p \hat{m}_{BMA}^n(dx) =\int_{\mathcal{M}} \int_{{\cal X}} d( x,x_0)^p m(dx)\Pi_n(dm) \to \int_{{\cal X}} d( x,x_0)^p m_0(dx) \, , m_{0}^{(\infty)}\mbox{ a.s.} 
\end{equation}
\item[d)] $\Pi$ is  consistent at $m_0$ in the weak topology and for some (and then all) $x_0\in \cal X$ we have  \begin{equation}\label{eq:convL1mom}\int_{\cal M} \left\vert \int_{{\cal X}} d( x,x_0)^p m(dx) -  \int_{{\cal X}} d( x,x_0)^p m_0(dx) \right\vert \Pi_n(dm) \to 0, \, m_{0}^{(\infty)}\mbox{ -a.s.}  
\end{equation} 

\end{itemize}

\end{theorem}

\begin{proof}  
By minimality of a barycenter over ${\cal M}$,
	$$  \int_{\mathcal M}W_p(m,\hat m_p^n)^p\,\Pi_n(dm)\leq \int_{\mathcal M}W_p(m,m_0)^p\,\Pi_n(dm)=   \textstyle W_p(\Pi_n,\delta_{m_0})^p. $$
Thus, for some $c>0$ depending only on $p$, 	\begin{align}\label{eq:boundBWBW}
		\textstyle W_p(m_0,\hat m_p^n)^p\ &\textstyle \leq c \int_{\mathcal M}W_p(m,\hat m_p^n)^p\,\Pi_n(dm) + c\int_{\mathcal M}W_p(m, m_0)^p\,\Pi_n(dm) \\
		&\leq \textstyle 2c\,W_p(\Pi_n,\delta_{m_0})^p
	\end{align}
proving that a) $\Rightarrow$ b).  The converse  b) $\Rightarrow$ a) follows from
	$$  \int_{\mathcal M}W_p(m,m_0)^p\,\Pi_n(dm)\leq c \int_{\mathcal M}W_p(m,\hat m_p^n)^p\,\Pi_n(dm)+ c\, W_p(\hat m_p^n,m_0)^p. $$
Let us now show that a) $\Rightarrow$ c). The convergence  $W_p(\Pi_n,\delta_{m_0}) \rightarrow 0$ implies  (by the Portmanteau theorem)  that $\limsup_{n\to \infty} \Pi_n(F)\leq \delta_{m_0}(F)$ for all closed sets $F$ of $ {\cal W}_p({\cal X})$. Taking $F=U^c$ with $U$ a neighborhood of $m_0$ yields the consistency of $\Pi$ at $m_0$ in the $p$-Wasserstein  topology. Moreover it implies that for each 
$\tilde{m}\in {\cal W}_p({\cal X})$, 
$$\int_{\cal M} W_p^p(m, \tilde{m}) \Pi_n(dm) \to \int_{\cal M} W_p^p(m, \tilde{m}) \delta_{m_0}(dm) =  W_p^p(m_0, \tilde{m}) \, , m_{0}^{(\infty)}\mbox{ -a.s.}   $$
as $n\to \infty$. Choosing $\tilde{m}=\delta_{x_0}$, we have  $ W_p^p(m, \tilde{m}) = \int_{\cal X} d( x,x_0)^p m(dx) $ for any $m$,  from where  \eqref{eq:convpmomBMA} follows. 

We next prove that c) $\Rightarrow$ a). The space ${\cal W}_p(\cal X)$ being Polish, there is countable basis $\cal U$ of  open neighborhoods  of $m_0$  such that  $m_{0}^{(\infty)}$-a.s., $\Pi_n(U^c)\to 0 $ for all $U \in \cal U$.  Thus, if $G \subseteq {\cal W}_p(\cal X)$ is any  open set  such that  $m_0\in G$, for some $U \in \cal U$ we have $U\subseteq G$ and therefore 
$\liminf_{n\to \infty} \Pi_n(G)\geq  \liminf_{n\to \infty}  \Pi_n(U) =  1- \lim_{n\to \infty}  \Pi_n(U^c) =1=\delta_{m_0}(G)$, $m_{0}^{(\infty)}$-a.s.  
This implies, by the Portmanteau theorem, that  the sequence $(\Pi_n)_{n}$ weakly  converges  to $\delta_{m_0}$, $m_{0}^{(\infty)}$-a.s., as probability measures on the metric  space ${\cal W}_p(\cal X)$.   As in the previous (converse) implication, we obtain from \eqref{eq:convpmomBMA} the convergence of some moments of order $p$ of $\Pi_n$, to the corresponding moment of $\delta_{m_0}$,  $m_{0}^{(\infty)}$-a.s. This plus the weak  convergence just established imply that  $W_p(\Pi_n,\delta_{m_0}) \rightarrow 0$ $m_{0}^{(\infty)}$-a.s. 

\medskip 

We have thus established that a), b) and c) are equivalent. Notice now that  the function 
\begin{equation}\label{eq:Phim}
m \mapsto \Phi(m):=  \left|\int_{\cal X}\left(  d( x,x_0)^p - \int_{\cal X} d( y,x_0)^p m_0(dy) \right) m(dx) \right|,
\end{equation}
is continuous on ${\cal W}_p({\cal X})$, since 
 $x \mapsto  d( x,x_0)^p - \int_{\cal X} d( y,x_0)^pm_0(dy) $ is continuous with polynomial growth of order $p$ on ${\cal X}$.  Moreover, $|\Phi(m)| \leq \int_{\cal X}d( y,x_0)^p m_0(dy) + W_p^p(m,\delta_{x_0}) $, that is, $\Phi$ has polynomial growth of order at most $p$ on ${\cal W}_p(\cal X)$. Therefore, if a) or equivalently c)  holds,  we have $m_{0}^{(\infty)}$-a.s.  that
$$  \int_{\cal M} \Phi(m) \Pi_n(dm) \to \int_{\cal M} \Phi(m) \delta_{m_0}(dm) = \Phi(m_0)=0$$
which is tantamount to \eqref{eq:convL1mom}. Moreover  if c) holds, consistency of $\Pi$ at $m_0$ in the weak topology is obvious. This shows that c) $\Rightarrow$ d).

Last, if d) holds, we deduce with Markov's inequality that, 
for each rational $\varepsilon>0$, 
\begin{align}\label{eq:convprobanmom}
		\Pi_n\left\{ m\in {\cal M}: \left\vert \int_{{\cal X}}  d( x,x_0)^p m(dx) -  \int_{{\cal X}}  d( x,x_0)^p m_0(dx) \right\vert \geq \varepsilon \right\} \to 0 \, , m_{0}^{(\infty)}\mbox{ a.s.}
	\end{align}
as $n\to \infty$. This, together with the consistency of $\Pi$ at $m_0$ w.r.t.\ weak topology, is equivalent to having that consistency w.r.t.\ the $p-$Wasserstein topology. Moreover, we have \begin{multline*}
\left| \int_{{\cal X}}  d( x,x_0)^p\hat{m}_{BMA}^n(dx) - \int_{{\cal X}}  d( x,x_0)^p m_0(dx)\right| v\\ \leq   \int_{{\cal M}} \left| \int_{{\cal X}}  d( x,x_0)^p m(dx) -  \int_{{\cal X}}  d( x,x_0)^p m_0(dx) \right| \Pi_n(dm)   \end{multline*}
and so the convergence \eqref{eq:convL1mom} implies the convergence \eqref{eq:convpmomBMA}. This and the previous show that d) implies c), concluding the proof. 

\end{proof} 

\begin{remark}
The convergence  \eqref{eq:convprobanmom} for each $\varepsilon>0$ is exactly what one must add to consistency  at $m_0$ of $\Pi$ in the weak topology to  obtain such consistency in the $p$-Wasserstein topology.  However, the latter is  only equivalent to  the $m_{0}^{(\infty)}$-a.s.\ weak convergence  of  $\Pi_n$ to $\delta_{m_0}$ as measures on the metric space ${\cal W}_p({\cal X})$ as $n\to \infty$,  which is not enough to grant the $m_{0}^{(\infty)}$-a.s.\  convergence in ${\cal W}_p({\cal W}_p({\cal X}))$  of  $\Pi_n$ to $\delta_{m_0}$. Similarly, consistency  at $m_0$ of $\Pi$ in the $p$-Wasserstein topology cannot in general be obtained by adding only the  convergence of moments \ref{eq:convpmomBMA} to the consistency at $m_0$ of $\Pi$ in the weak topology.  Of course, if the space ${\cal X}$ is bounded,  weak and  $p$-Wasserstein topologies on it coincide,  and consistency of $\Pi$ at $m_0$ in the weak topology {\it implies} in that case the convergence \eqref{eq:convL1mom} (since the function $\Phi$ in the previous proof is in that case  continuous and bounded). Hence, in that case,  all the equivalent properties in Theorem \ref{thm equiv consistency} are satisfied. The same is true if $\Pi(dm)-$a.e. $m$ is supported on a fixed bounded (weakly) closed set ${\cal Y}\subseteq {\cal X}$ (just replace  ${\cal X}$ by  ${\cal Y}$).   

\end{remark}

An immediate consequence of the proof of Theorem \ref{thm equiv consistency} (cf. the estimate \eqref{eq:boundBWBW}) is the following bound, which can be used to obtain quantitative estimates for the average rate of convergence  of the BWB,  if the posterior and the Wasserstein distance between models are explicit enough: 

\begin{corollary} 
Under the assumptions of Theorem \ref{thm equiv consistency}, for some constant $c_p>0$ we have
$$\EE\left( W_p(m_0,\hat m_p^n)^p  \right)\leq c_p \EE\left(W_p(\Pi_n,\delta_{m_0})^p \right) . $$ 
\end{corollary}

The following result gathers sufficient conditions for  consistency at $m_0$ of $\Pi$ in the $p$-Wasserstein topology and convergence of the BWB. 

\begin{corollary}
\label{lemma m0KL PinWpWp} \black{Suppose Assumptions \ref{assum:geodesic} and \ref{ass nice} hold, and moreover that $m_0\in\text{KL}\left( \Pi \right) $. Then $\Pi$ is  consistent at $m_0$ in the weak topology and  condition \eqref{cond PinWpWp} holds. If moreover either condition (i) or (ii) below hold, then $\Pi$ is  consistent at $m_0$ in the $p$-Wasserstein topology and  $W_p(\hat m_p^n,m_0)\rightarrow 0$,  $m_{0}^{(\infty)}$-a.s.\  as $n\to \infty$:
\begin{itemize}
    \item[(i)]  the convergence  \eqref{eq:convL1mom} holds;
    \item[(ii)] for some $q>p$,  $m_{0}^{(\infty)}$-a.s.\ the $q$ moments of the BMA estimator are bounded uniformly  in $n$.
\end{itemize}
 }
\end{corollary}
\begin{proof}
    In view of Theorem \ref{schartzweak}, to prove the first claim we  just need to prove that  \eqref{cond PinWpWp} holds. To that end, let us first check that, whenever  $m_0\in\text{KL}\left( \Pi \right) $, we have
    $m_0^{\otimes n}(dx_1,\dots,dx_n ) \ll m^{\otimes n}(dx_1,\dots,dx_n )\int_{\cal M}\Pi(dm).$
    If $A\subseteq {\cal X}^n$ measurable is such that $\int_{A}m^{\otimes n}(dx_1,\dots,dx_n )\int_{\cal M}\Pi(dm)=0$, then for $\Pi(dm)-$a.e.\ $m$ and every $x_i\in {\cal X}$, $i=2,..,n$, one has $m(\{x\in {\cal X} : (x,x_2,...,x_n)\in A\})=0 $, from which we get  $m_0(\{x\in {\cal X} : (x,x_2,...,x_n)\in A\})=0$ by  Remark \ref{abs cont m_0 marg},  hence $m_0^{\otimes n}(A)=0$. 
    We conclude noting that the set  
    $A:= \{ ( x_1,\dots, x_n)\in {\cal X}^n :  \Pi_n\notin \mathcal W_{p}(\mathcal W_p(\mathcal X)) \} $ has null $m^{\otimes n}(dx_1,\dots,dx_n )\int_{\cal M}\Pi(dm)$- measure, by  Remark \ref{rem W_pW_p wrt marginal law}.  

Given the previous and  part d) of Theorem \ref{thm equiv consistency}, the convergence \eqref{eq:convL1mom} immediately implies that $\Pi$ is  consistent at $m_0$ in the $p$-Wasserstein topology and that $W_p(\hat m_p^n,m_0)\rightarrow 0$  $m_{0}^{(\infty)}$-a.s.\ as $n\to \infty$. To conclude the proof it is enough to show that the $m_{0}^{(\infty)}$-a.s.\ uniform boundedness of the $q$ moments of the BMA estimator for some $q>p$, implies, under the given assumptions, that said convergence \eqref{eq:convL1mom}  holds.  Consider to that end the  function  $\Phi$ defined by \eqref{eq:Phim}  in  the proof  of Theorem  \ref{thm equiv consistency}. For each $\varepsilon>0$ we have 
\begin{equation*}
    \begin{split}
        \int_{\cal M} \Phi(m) \Pi_n(dm)  \leq & \,  \varepsilon + \int_{{\cal M}  \cap \{ \Phi(m) \geq \varepsilon  \} } \Phi(m) \Pi_n(dm)  \\
        \leq & \, \varepsilon  + \left( \int_{{\cal M}   } \Phi(m)^{q/p}\Pi_n(dm) \right)^{p/q} \left( \Pi_n(m:  \Phi(m) \geq \varepsilon  )\right)^{1-p/q}
    \end{split}
\end{equation*}
The assumption on the $q-$moments of the BWA estimators imply that $\sup_n \int_{{\cal M}   } \Phi(m)^{q/p}\Pi_n(dm)  $ is finite,  $m_{0}^{(\infty)}$-a.s., following from applying Jensen's inequality to the convex function $s\mapsto |s|^{q/p}$ on $\RR$ and the integral w.r.t. $m(dx) $ defining $ \Phi(m)$. Since $\varepsilon>0$ is arbitrary, it just remains to ensure that $\Pi_n(m:  \Phi(m) \geq \varepsilon  )\to 0 $ $m_{0}^{(\infty)}$-a.s. as $n\to \infty$ (that is, the convergence \eqref{eq:convprobanmom} holds). The elementary relations   $t=t\wedge R + (t-R)_+ $ and $(t-R)_+\leq t\mathbf{1}_{\{t> R\}}$  for all $t,R\geq 0$ yield the bound
\begin{equation*}
    \begin{split}
    \Pi_n(m:  \Phi(m) \geq \varepsilon  ) \leq & \,  
    \Pi_n\left( m: \left\vert \int_{{\cal X}}  R\wedge d( x,x_0)^p m(dx) -  \int_{{\cal X}}R \wedge  d( x,x_0)^p   m_0(dx) \right\vert \geq \varepsilon/2\right)  \\
     & +  \,  
    \Pi_n\left( m:  \int_{\{ x\in {\cal X} : d( x,x_0)^p>R\}}  d( x,x_0)^p\left[ m(dx) + m_0(dx)\right]  \geq \varepsilon/2\right),  \\
    \end{split}
\end{equation*}
where the first term on the r.h.s.\ goes  $m_{0}^{(\infty)}$-a.s. to $0$ as $n\to \infty$, since $x\mapsto R\wedge d( x,x_0)^p$ is a bounded continuous function and $\Pi$ is consistent  at $m_0$ in the weak topology. The second term on the r.h.s.\ is bounded by
$$ \frac{2}{\varepsilon} \left[  \sup_{n\in \NN} \int_{\cal M} \int_{{\cal X} } \mathbf{1}_{ \{d( x,x_0)^p>R\}} d( x,x_0)^p  m(dx) \Pi_n(dm) +  \int_{{\cal X} } \mathbf{1}_{ \{d( x,x_0)^p>R\}} d( x,x_0)^p  m_0(dx)    \right]. $$
For each $\varepsilon>0$, this expression can be made arbitrarily small by taking $R$ large enough, since $m_0\in {\cal W}_p({\cal X})$ and because  the uniform boundedness of the $q-$moments of the BWA estimators implies their $p-$moments are uniformly  integrable.  We conclude that $\Pi_n(m:  \Phi(m) \geq \varepsilon  )\to 0 $ $m_{0}^{(\infty)}$-a.s.\ as $n\to \infty$ as desired. 
\end{proof}

\smallskip

\begin{example} \label{Ex:gaussBayesconsist}
\black{Given  Gaussian distributions  $m_0={\cal N}(\theta_0, \Sigma)  $ and $m={\cal N}(\theta, \Sigma)  $ on ${\cal X }=\RR^d$, one can check that  $D_{KL}\left(m_{0}||m\right) = \frac{1}{2}(\theta-\theta_0)^t\Sigma^{-1}(\theta-\theta_0)$ and hence that $\text{KL}\left( \Pi \right)= \supp(\Pi)$ in the  parametric Bayesian model dealt with in Examples  \ref{ExParaModel1} and  \ref{ex:gaussBWBvsBMA}. Therefore, by Theorem \ref{schartzweak}, $\Pi$ in those examples is consistent w.r.t.\  the weak topology at $m_0={\cal N}(\theta_0, \Sigma) $ for any $\theta_0 \in \RR^d$. Moreover, the prior $\Pi$ is easily seen to be in $\mathcal{W}_2(\mathcal{W}_2(\RR^d))$,
hence Lemma \ref{lemma m0KL PinWpWp}
ensures that  $\Pi_n \in \mathcal{W}_2(\mathcal{W}_2(\RR^d)),  m_{0}^{\otimes n}-$a.s. for all $n\geq 1$. This fact alternatively follows from the existence of a  $2-$Wasserstein barycenter for $\Pi_n$ for any data points $D$, verified in this case.
To verify consistency of $\Pi$ in the $2$-Wasserstein topology at any $m_0$ as well as  convergence of the BWB,   let us compute $W_2^2(\Pi_n,\delta_{m_0})$  and apply directly Theorem \ref{thm equiv consistency}. Denoting by  $\theta$ a r.v.\ with law 
${\cal N}\left( \hat{\theta}_{\ell_2}, (\Sigma_0^{-1}+ n \Sigma^{-1})^{-1}   \right)$, we  find }
 \begin{align*}
         W_2^2(\Pi_n,\delta_{m_0}) = & \,   \int_{\cal M} W_2^2(m,m_0) \Pi_n(dm) \\
         = & \, \EE_{\theta}(\Vert \theta - \theta_0\Vert^2) \\
          = & \,  \Vert  \theta_0- \hat{\theta}_{\ell_2} \Vert^2+ \EE_{\theta}( \Vert  \hat{\theta}_{\ell_2}- \theta \Vert^2 )\\
          = & \, W_2^2( \hat m_2^n,m_0) +  \int_{\cal M} W_2^2(m,\hat m_2^n) \Pi_n(dm),
 \end{align*}
 with  $\EE_{\theta}( \Vert  \hat{\theta}_{\ell_2}- \theta \Vert^2 )= \int_{\cal M} W_2^2(m,\hat m_2^n)\Pi_n(dm)= tr((\Sigma_0^{-1}+ n \Sigma^{-1})^{-1})\to 0$ as $n\to \infty$ and  
 $$  \Vert  \theta_0- \hat{\theta}_{\ell_2} \Vert^2 = W_2^2( \hat m_2^n,m_0)=   \Vert(\Sigma_0^{-1}+ n \Sigma^{-1})^{-1}  (\Sigma_0^{-1}\mu_0 +  n \Sigma^{-1} \bar{x}_n)-  \theta_0\Vert^2 $$
 which $m_{0}^{(\infty)}$ a.s. goes to $0$  as $n\to \infty$ by the Law of Large Numbers,  whenever the data  $(x_n)_{n\geq 1}$ consists of an i.i.d. sample from the true model $m_0$. Notice that the identity
 $W_2^2(\Pi_n,\delta_{m_0}) = W_2^2( \hat m_2^n,m_0) +  \int_{\cal M} W_2^2(m,\hat m_2^n) \Pi_n(dm) $
 found in this case, can be seen as a {\it bias-variance type decomposition}   for the posterior law $\Pi_n$ on ${\cal W}_2(\cal X)$. In this case, one can readily see that $\EE(W_2^2( \hat m_2^n,m_0))\leq C/n$. 
 
\end{example}

\begin{example} \label{Ex:PoissBayesconsist}
\black{Although the BWB estimator $	\hat{m}_1^n$ in the Poisson parametric Bayesian model of Example \ref{ExParaModel2}  is not explicit (see Example \ref{ex:PoissonBWB}),  we can still apply Theorem \ref{thm equiv consistency} a) to prove that it converges to the true model generating the data.  Indeed, if $m_0=$Pois$(\lambda_0)$, using the expression for the $1-$Wasserstein distance between Poisson laws computed in Example \ref{ex:WdPoisson} we see that 
\begin{equation*}
    \begin{split}
W_1(\Pi_n,\delta_{m_0})= &  \int W_1(m,m_0)\Pi_n(dm) = \EE|\lambda -\lambda_0| \leq  \EE(|\lambda -\lambda_0|^2)^{1/2} ,
%\quad  \mbox{ and} \\
%W_2(\Pi_n,\delta_{m_0})^2= &  \int W_2(m,m_0)^2\Pi_n(dm) = \EE(|\lambda -\lambda_0|^2)+ \EE|\lambda -\lambda_0| \leq  \EE(|\lambda -\lambda_0|^2)+  \EE(|\lambda -\lambda_0|^2)^{1/2}  \\
    \end{split}
\end{equation*}
with $\lambda$ a r.v.\ with law Gamma$(n \bar{x}_n+\alpha,n +\beta)$. An elementary computation using the Gamma distribution's mean and variance shows, in this case,  that
$$\EE(|\lambda -\lambda_0|^2) =\frac{n \bar{x}_n+\alpha}{(n +\beta)^2}  +  \left[  \frac{n \bar{x}_n+\alpha}{n +\beta}- \lambda_0  \right]^2, $$
which  goes to zero  $m_{0}^{(\infty)}$ a.s.\ as $n\to \infty$, whenever the data  $(x_n)_{n\geq 1}$ is an i.i.d. sample from the true model $m_0$. We deduce with Corollary \ref{lemma m0KL PinWpWp} that $\EE( W_1(m_0,\hat m_1^n)) \leq C/\sqrt{n}$.  }
\end{example}

\smallskip

As noticed earlier, consistency of $\Pi$ at $m_0$ w.r.t.\ the $p$-Wasserstein  topology is not enough to grant that  $W_p(\Pi_n, \delta_{m_0}) \to 0\,\,m_{0}^{(\infty)}-a.s.$. But we will see next that this is true under a  boundednees condition on the support of $\Pi$. Recall that 
$\supp(\Pi)$ is  said to be bounded in ${\cal W}_p({\cal X})$ if  $$\text{diam}(\Pi):=\sup\limits_{m,\bar m\in \supp(\Pi)} W_p(m,\bar m)<\infty.$$ 
   A typical example is  the finitely parametrized case with compact  parameter space $\Theta$ and continuous  parametrization mapping ${\cal T}:\Theta  \to {\cal W}_p({\cal X})$.  More generally,  $\text{diam}(\Pi)<\infty $  amounts to $\Pi$ being  supported on a set of models with  centered $p-$moments  bounded by a constant.  In particular,   $\mathcal X$ and the support of every $m\in \supp(\Pi)$ may be unbounded, and still $\supp(\Pi)$ be bounded.   We have 
\begin{lemma}
	\label{lemma:wasserstein-consistency-convergence}
	If $\Pi$ is  consistent at $m_0$ in the $p$-Wasserstein topology and $\text{supp}(\Pi)$ is bounded in ${\cal W}_p({\cal X})$, then $W_p(\Pi_n, \delta_{m_0}) \to 0\,\,m_{0}^{(\infty)}-a.s.$ as $n\to \infty$.
	\end{lemma}
	\begin{proof}
		Let  $\epsilon>0$ and $B=\{m: W_p(m,m_0)< \epsilon\}$, then
		\begin{align*}
			W_p(\Pi_n,\delta_{m_0})^p &= \textstyle  \int_{\mathcal M}W_p(m,m_0)^p\,\Pi_n(dm)\\
			&=\textstyle  \int_{B}W_p(m,m_0)^p\,\Pi_n(dm) +  \int_{B^c}W_p(m,m_0)^p\,\Pi_n(dm)\\
			&\leq \textstyle  \epsilon^p + \int_{B^c}W_p(m,m_0)^p\,\Pi_n(dm).
		\end{align*}
		Since  $\epsilon>0$ is arbitrary, we only need to check that the second term in the last line goes $m_{0}^{(\infty)}$-a.s. to zero as $n\to \infty$. By consistency we have  $\Pi_n(B^c) \to 0\,\,,  m_{0}^{(\infty)}-$a.s. as $n\to \infty$, and since $\text{supp}(\Pi_n)\subset \text{supp}(\Pi)$, we conclude that
		\begin{align*}
			 \textstyle   \int_{B^c}W_p(m,m_0)^p\,\Pi_n(dm) \leq \text{diam}(\Pi)^p\Pi_n(B^c) \rightarrow 0, \,\, m_{0}^{(\infty)}-a.s.
		\end{align*}
	\end{proof}
\begin{remark} \black{$\Pi$  consistent at $m_0$ in the weak topology and $\text{supp}(\Pi)$  bounded in ${\cal W}_p({\cal X})$ is in general not enough to obtain the  conclusion of Lemma \ref{lemma:wasserstein-consistency-convergence}.} \black{To see this, write \begin{align}
\label{eq:explanation_for_referee}
  W_p(\Pi_n,\delta_{m_0})^p=\int_{B_w} W_p(m,m_0)^p\,\Pi_n(dm) + \int_{B_w^c} W_p(m,m_0)^p\,\Pi_n(dm),   
\end{align}
where $B_w$ is a fixed small weak neighborhood of $m_0$ (e.g.\ one can use the bounded Lipschitz distance to build $B_w$). As in the proof of Lemma \ref{lemma:wasserstein-consistency-convergence}, the second term in the r.h.s.\ of \eqref{eq:explanation_for_referee} goes $m_{0}^{(\infty)}$-a.s.\ to zero as $n\to \infty$ in this case too. However there is no reason why the term $\int_{B_w} W_p(m,m_0)^p\,\Pi_n(dm)   $ should be small. The reason is that for the functional $m\mapsto \Psi(m):= W_p(m,m_0)^p$, even if bounded on $\mathcal M:=\supp(\Pi)$, there is no reason why $\Psi|_{B_w\cap\mathcal M}$ should be small (no matter how small $B_w$ may be). Indeed, the statement ``$\Psi|_{B_w\cap\mathcal M}$ is small if  $B_w$ is small'' would mean mathematically that the weak and the $p-$Wasserstein topologies coincide on $\mathcal M$ locally around $m_0$, but this is not true in general, even if $\mathcal M$ is bounded\footnote{For instance, for $\cal X=\R$ and $p=1$ we have that $\mathcal M:= \{m_n:=\frac{n-1}{n}\delta_0+\frac{1}{n}\delta_n\}_{n\in\mathbb N}\cup\{\delta_0\}$ is $1-$Wasserstein bounded, and yet $m_n\to \delta_0$ weakly but not in $1-$Wasserstein topology.}. This should not be confused with the fact that, if $\mathcal M$ equiped with the  $p-$Wasserstein topology  and metric is bounded, then on $\mathcal W_p(\mathcal M)$ weak and $p$-Wasserstein convergence coincide. 
Consistency at $m_0$ of $\Pi$ in the weak topology, on the other hand, is equivalent to $\Pi_n\to \delta_{m_0}$,  $m_{0}^{(\infty)}$-a.s.\ in the weak topology of $\mathcal W_p(\mathcal M)$, when $\mathcal M$ is equipped with the weak topology. This does not imply $\Pi_n\to \delta_{m_0}$, $m_{0}^{(\infty)}$-a.s.\ in the weak topology of $\mathcal W_p(\mathcal M)$, when $\mathcal M$ is equipped with the $p-$Wasserstein topology, and hence provides another point of view as to why the l.h.s.\ of \eqref{eq:explanation_for_referee} does not go to zero without stronger assumptions.   }
\end{remark}

The next  result  based on Schwartz theorem  provides a (rather strong) condition ensuring that the equivalent properties in  Theorem \ref{thm equiv consistency} hold. Unfortunately, we have not been able to prove such a result under more general  assumptions.

\begin{theorem}\label{thm main consistency}
Under  \black{Assumptions \ref{assum:geodesic} and \ref{ass nice}}, suppose moreover that  $m_0 \in KL(\Pi)$ and that,  for some  $\lambda_0>0$ and $ x_0 \in \mathcal{X}$,  one has $$\textstyle \sup\limits_{m\in \supp(\Pi) } \int_{\mathcal{X}}e^{\lambda_0 d^p(x,x_0)}m(dx) < +\infty.$$
Then, $\Pi$  is consistent at $m_0$ in the $p$-Wasserstein topology. Moreover, we have 
 $W_p(\Pi_n,\delta_{m_0}) \rightarrow 0$, $m_{0}^{(\infty)}$-a.s. and the BWB estimator is consistent in the sense that  $$W_p(\hat m_p^n,m_0)\to 0,\,\,\,m_{0}^{(\infty)}-a.s.$$ 
\end{theorem}

Before proving Theorem \ref{thm main consistency} some remarks on its assumptions and proof are in order.

\begin{remark}\label{rem boundsupp} \black{The uniform control assumed on $p-$exponential moments implies, by Jensen's inequality, that 
$\sup\limits_{m\in \supp(\Pi) } W_p^p(m,\delta_{x_0})= \sup\limits_{m\in \supp(\Pi) } \int_{\mathcal{X}} d^p(x,x_0)m(dx)   < +\infty.$
By triangle inequality, this in turn implies  that 
$\supp(\Pi)$ is bounded. }
\end{remark}

\begin{remark} \black{The general picture of Bayesian consistency  (including the misspecified case) parallels in several aspects Sanov's large deviations theorem  (see e.g.\ the Bayesian Sanov Theorem in  \cite[Theorem in  2.1]{grendarjudge} and references therein), and it similarly  relies  on exponential controls of  (posterior) integrals.    The proof of   Theorem \ref{thm main consistency}  follows the argument of \cite[Example 6.20]{ghosal2017fundamentals}, where  Theorem \ref{schartzweak} above is obtained by combining  Schwartz' theorem (\cite[Theorem 6.17]{ghosal2017fundamentals}) with Hoeffding's concentration inequality, to get uniform exponential controls of the posterior mass of complements of weak neighborhoods of $m_0$, which are defined in terms of  bounded random variables.  
 A $p-$exponential moment control is what is needed to derive such    concentration inequalities for unbounded random variables (e.g.\ moments),  defining neighborhoods in the Wasserstein topology.  Notice  that finite  $p-$exponential moments  are also required for  Sanov's   theorem to hold in the  $p-$Wasserstein topology \cite{wang2010sanov}. The uniform bound on exponential moments appears however too strong an assumption (it does not hold e.g.\ in the setting of Example \ref{Ex:gaussBayesconsist}). The question of   relaxing that condition is left for future work.} 
\end{remark}

\begin{remark}
\black{ In the \emph{misspecified} framework dealt with in \cite{berk1966, grendarjudge, kleijn2004bayesian, kleijn2012bernstein, ghosal2017fundamentals}, and paralleling results  applicable to the weak topology in those works,   we expect the convergence $\hat m_p^n\to \argmin_{m\in\mathcal M} D_{KL}\left(m_{0}||m\right) $ w.r.t.\  $W_p$ to hold,  under suitable assumptions.} 
\end{remark}

\begin{proof}[Proof of Theorem \ref{thm main consistency}]
	%We want to apply Proposition \ref{prop:wasserstein-consistency-convergence}. 
	First we show that if $U$ is any $\mathcal W_p(\mathcal X)$-neighbourhood of $m_0$ then $m_0^{(\infty)}$-a.s.\ we have $\liminf_n \Pi_n(U)\geq 1$. %It is then clearly enough to prove, for each $\epsilon>0$ that $U := \{m:W_p(m,m_0)<\epsilon\}$. 
	According to Schwartz Theorem in the extended form  \cite[Theorem 6.17]{ghosal2017fundamentals}, under the assumption that $m_0 \in KL(\Pi)$, it is enough to find for each such $U$ a sequence of measurable functions or ``tests'' $\phi_n:\mathcal X^n\to [0,1]$ such that
	\begin{enumerate}
		\item $\phi_n(x_1,\dots,x_n)\to 0, \,\,\, m_0^{(\infty)}-a.s$, and
		\item $\limsup_n \frac{1}{n}\log\left(\int_{U^c} m^{\otimes n}(1-\phi_n)\Pi(dm)\right) <0.$
	\end{enumerate}
First we will construct tests $\{\phi_n\}_n$ that satisfy Point 1 and Point 2 above, over an appropriate subbase of neighbourhoods $U$, to finally extend those properties to general neighborhoods.

Recall that $\mu_k \to \mu$ in $W_p(\cal X)$ iff for all continuous functions $\psi$  on $\cal X$ with $|\psi(x)| \le K(1+d^p(x,x_0))$  for some $K \in \mathbb{R}_+$ it holds that $\int_{\mathcal{X}}\psi(x)\mu_n(dx) \to \int_{\mathcal{X}}\psi(x)\mu(dx)$; see \cite{villani2008optimal}.  Given such $\psi$ and $\epsilon > 0$ we define the open sets $$ \textstyle  U_{\psi,\epsilon} := \left\{m: \int_{\mathcal{X}} \psi(x) m(dx) < \int_{\mathcal{X}} \psi(x) m_0(dx) + \epsilon\right\},$$ which form a sub-base for the $p$-Wasserstein neighborhood system at the distribution $m_0$. We can assume that $K=1$ by otherwise considering $U_{\psi/K, \epsilon/K} $ instead. Given a neighborhood $U := U_{\psi,\epsilon}$ of $m_0$ as above, we define the test functions
$$\phi_n(x_1,\dots,x_n)  = \left \{
\begin{array}{ll}
1 & \text{ if }\frac{1}{n}\sum_{i=1}^n \psi(x_i) > \int_{\mathcal{X}} \psi(x) m_0(dx)  + \frac{\epsilon}{2},\\
0&\text{otherwise}.
\end{array}
\right .
$$
By the law of large numbers, $m_0^{(\infty)}-a.s: \phi_n(x_1,\dots,x_n)\to 0$, so Point 1 is verified. Point 2 is trivial if $r:=\Pi(U^c)=0$, so we assume from now on that $r>0$. Thanks to the exponential moments control assumed on $\supp(\Pi)$, for $\Pi(dm)$ a.e. $m$  the random variable $Z = 1 + d^p(X,x_0)$ with $X \sim m$ has a moment-generating function $\mathcal{L}_m(t)$ which is finite for all $t\in [0, \lambda_0]$, namely $$ \textstyle   \mathcal{L}_m(t):=\mathbb{E}\left[e^{tZ}\right]=e^t\int_{\mathcal{X}}e^{t d^p(x,x_0)}m(dx) <+\infty.$$
We thus have the bounds  
$$ \int_{\mathcal X}|\psi(x)|^k m(dx) \leq \mathbb{E}\left[Z^{k}\right] \leq k!\mathcal{L}_m(t)t^{-k},\,\, \forall \lambda_0\geq t>0, $$
for all  $k\in \NN$. Therefore, we may  apply   Bernstein's inequality in the form of \cite[Corollary 2.10]{massart2007concentration} to the random variables $\{-\psi(x_i)\}_i$ under the measure $ m^{(\infty)}$ on $\mathcal X^{\mathbb N}$, obtaining for any $\alpha<0$ that
\begin{align*}
	 \textstyle   {m}^{(\infty)}\left(\sum_{i=1}^n \left[\psi(x_i) - \int_{\mathcal{X}} \psi(x) {m}(dx)  \right]\leq \alpha\right) \leq e^{ -\frac{\alpha^2}{2(v-c\alpha)}},
\end{align*}
with $v:=2n\mathcal{L}_{{m}}(t)t^{-2}$, $c:=t^{-1}$, and $0<t\leq \lambda_0$. Going back to the tests $\phi_n$ and using the definition of $U^c$ we deduce that
\begin{align*}
	 \textstyle  \int_{U^c} m^{\otimes n}(1-\phi_n)\Pi(dm) 
	&=  \textstyle  \int_{U^c}m^{\otimes n}\left( \frac{1}{n}\sum_{i=1}^n \psi(x_i) \leq \int_{\mathcal{X}} \psi(x) m_0(dx)  + \frac{\epsilon}{2}\right)\Pi(dm)\\
	& \textstyle  \leq \int_{U^c}m^{\otimes n}\left( \frac{1}{n}\sum_{i=1}^n \psi(x_i) \leq \int_{\mathcal{X}} \psi(x) m(dx)  - \frac{\epsilon}{2}\right)\Pi(dm)\\
	&= \textstyle   \int_{U^c}m^{\otimes n}\left(\sum_{i=1}^n \left[\psi(x_i) - \int_{\mathcal{X}} \psi(x) m(dx)  \right]\leq -\frac{n\epsilon}{2}\right)\Pi(dm)\\
	& \textstyle  \leq \int_{U^c} \exp\left\{-\frac{n \epsilon^2}{2} \frac{t^2}{8\mathcal{L}_{m}(t)+t\epsilon}\right\}\Pi(dm) \\
	& \textstyle  \leq r\, \exp\left\{-\frac{n \epsilon^2}{2} \frac{t^2}{8\sup_{m\in \text{supp}(\Pi)}\mathcal{L}_{m}(t)+t\epsilon}\right\} .
\end{align*}
Thanks  to the  uniform control of  exponential moments on the support,  we conclude as desired that
$$ \textstyle  \limsup_n \frac{1}{n}\log\left(\int_{U^c} m^{\otimes n}(1-\phi_n)\Pi(dm)\right) \leq - \frac{t^2\epsilon^2}{16\sup_{m\in \text{supp}(\Pi)}\mathcal{L}_{m}(t)+2t\epsilon} <0.$$
Now, a general neighborhood $U$ contains a finite intersection of, say,  $N\in\mathbb N$ elements of the sub-base, i.e. $\bigcap_{i=1}^N U_{\psi_i,\epsilon_i} \subset U$, so
\begin{align*}
	 \textstyle  \int_{U^c} m^{\otimes n}(1-\phi_n)\Pi(dm) & \textstyle  \leq \sum_{i=1}^{N} \int_{U_{\psi_i,\epsilon_i}^c} m^{\otimes n} (1-\phi_n)\Pi(dm) .
\end{align*}
Therefore we can conclude as in the sub-base case that Point 2 is verified. All in all, we have established that $\Pi$ is $p$-Wasserstein consistent at $m_0$. Thanks to Lemma  \ref{lemma:wasserstein-consistency-convergence} and the boundedness of $\supp(\Pi)$ (see Remark \ref{rem boundsupp}) the last two  claims are  immediate. 
\end{proof}

\section{BWB calculation through descent  algorithms in Wasserstein space} 
\label{sec:computation}

In this section we  review  some of the available methods to compute  Wasserstein barycenters, and then explain  how  they can be used  to  calculate the  BWB estimator.  We will first survey  the method proposed by \'Alvarez-Esteban, Barrio, Cuesta-Albertos and Matr\'an in \cite[Theorem 3.6]{alvarez2016fixed} and  by Zemel and Panaretos in \cite[Theorem 3, Corollary 2]{panaretos2017frechet},  applicable in the setting where the probability distribution $\Gamma$  on  the model space  has finite  support. This method is  interpreted in \cite{panaretos2017frechet}   as a {\it gradient descent in the Wasserstein space}.    We will then discuss the main aspects  of
the  {\it stochastic gradient descent in Wasserstein space}   (SGDW), introduced in our companion paper \cite{bakchoff2022}, building also upon the gradient descent idea. The latter method  allows moreover the computation of population barycenters for a distribution $\Gamma$ on the model space, using  a streaming of random probability distributions sampled from it. In particular,  we will recall the conditions established in \cite{bakchoff2022} which ensure its convergence. The reader is referred to \cite{peyre2019computational,CuPe16,MaGeMi21,dognin2019wasserstein} for alternative approaches to computing  Wasserstein barycenters.
\smallskip  

The two discussed methods can be easily implemented when  explicit analytical expressions for optimal transport  maps between distributions in the model space are available  (see  \cite{bakchoff2022}  or Section \ref{sec numeric} for some examples). Moreover, in that case these methods 
 can easily be coupled with sampling procedures (MCMC or others) for the posterior distribution $\Pi_n$  in order to compute the BWB. 
We will see that the computation of the BWB   through the SGDW  has several advantageous features. In particular, when expressions for optimal transport maps  are known, it can be done at nearly the same  cost as the posterior sampling.

\smallskip

From now on  we specialize  Assumption \ref{assum:geodesic}, and make the following set of assumptions: 
\begin{assumption}  \label{ass_A1}
	$p=2$, $\mathcal X=\mathbb R^q$, $d=$ Euclidean metric, $\lambda=$ Lebesgue measure.  Furthermore, $\Gamma\in \mathcal{W}_2(\mathcal{W}_{p,ac}(\R^q))$ and there is a model space  $\mathcal M\subseteq \mathcal W_{p,ac}(\R^q)$ for $\Gamma$ which is weakly closed. 
\end{assumption}

The following concept will be central in the ``gradient-type'' algorithms we consider:   

\begin{definition}
We say that $\mu\in\mathcal W_{2,ac}(\R^q)$ is a Karcher mean of $\Gamma\in \mathcal{W}_2(\mathcal{W}_{2,ac}(\R^q)) $ if
$$\textstyle\mu\left( \left\{x: x =\int_{\mathcal W_2(\R^q)} T_\mu^m(x)\Gamma(dm)\right\} \right)=1.$$ 
\end{definition}
\begin{remark} It is known that any 2-Wasserstein barycenter is a Karcher mean (c.f.\ \cite{panaretos2017frechet}). However, the class of Karcher means is in general a strictly larger one, see \cite{alvarez2016fixed}. For conditions ensuring uniqueness of Karcher means, see  \cite{panaretos2017frechet,PaZe20}  for the case when the support of $\Gamma $ is finite and  \cite{bigot2018characterization} for the  case of an infinite support. In one dimension, the uniqueness of Karcher means holds without further assumptions. See  \cite{bakchoff2022} for a deeper discussion.  
\end{remark}

\subsection{Gradient descent on Wasserstein space}\label{sec grad desc}\label{sec determ descent}

Consider $\Gamma\in \mathcal{W}_2(\mathcal{W}_{p,ac}(\R^q))$ finitely supported:  for some $m_i\in \mathcal{W}_{p,ac}(\R^q) $, $i=1,\dots ,L\in\N $, we have 
$$\Gamma= \textstyle \sum_{i\leq L}\lambda_i\delta_{m_i}.$$
Following \cite{alvarez2016fixed} and \cite{panaretos2017frechet}, we  define an operator over $\mathcal{W}_{p,ac}(\mathcal{X}) $ by 
\begin{align}
\label{eq:G-operator}
  G(m) \textstyle := \left(\sum_{i=1}^{L}\lambda_iT_m^{m_i}\right)(m).
\end{align}
Starting from $\mu_0 \in \mathcal{W}_{p,ac}(\mathcal{X})$ one can then define the sequence
\begin{align}
\label{eq:fixed-point}
  \mu_{n+1} := G(\mu_n), \text{ for } n \geq 0.
\end{align}
The next result proven  in \cite[Theorem 3.6]{alvarez2016fixed} and independently  in \cite[Theorem 3, Corollary 2]{panaretos2017frechet}, establishes the convergence of the above sequence to a fixed-point of $G$, which is nothing other than a Karcher mean for $\Gamma$:

\begin{proposition}
	The sequence $\{\mu_n\}_{n \geq 0}$ in eq.~\eqref{eq:fixed-point} is tight and every weakly convergent subsequence of $\{\mu_n\}_{n \geq 0}$  converges in ${W}_2$ to an absolutely continuous measure in $\mathcal{W}_{2}(\mathbb{R}^q)$ which is a Karcher mean of $\Gamma$. If some $m_i$ has a bounded density, and if there exists a unique Karcher mean $\hat{m}$, then $\hat{m}$ is the Wasserstein barycenter of $\Gamma$ and  $W_2(\mu_n, \hat{m}) \rightarrow 0$.
\end{proposition}
 Panaretos and Zemel  \cite[Theorem 1]{panaretos2017frechet} discovered that the sequence \eqref{eq:fixed-point}  can indeed be interpreted as a gradient descent (GD) scheme with respect to the  \emph{Riemannian-like} structure  of the Wasserstein space $\mathcal{W}_{2}(\mathbb{R}^q)$.  In fact,  
the functional  on $\mathcal{W}_2(\R^q)$ given by 
\begin{align*}\label{eq:FGD}
F(m) :=& \textstyle  \frac{1}{2}\sum_{i=1}^{L}\lambda_iW_2^2(m_i,m)
\end{align*}
has a {\it Frechet derivative}  at each point $m\in  \mathcal{W}_{2,ac}(\R^q) $, given by
\begin{eqnarray*}
\label{eq:Frechet-derivative}
F^\prime(m) = \textstyle  -\sum_{i=1}^{L}\lambda_i(T_m^{m_i}-I) = I-\sum_{i=1}^{L}\lambda_iT_m^{m_i} \in L^2(m),
\end{eqnarray*}
where $I$ is the identity map in $\mathbb R^q$. 
  This means that  for each such $m$,    one has
  \begin{equation}\label{eq:FrechDeriv}
  \frac{F(\hat{m})-F(m)-\int_{\R^q} \langle  F'(m)(x), T_m^{\hat{m}} (x)- x \rangle m(dx)  }{W_2(\hat{m},m)} \longrightarrow 0 , 
  \end{equation}
  when $W_2(\hat{m},m)$ goes to zero, by virtue of \cite[Corollary 10.2.7]{ags08}. 
It follows from Brenier's theorem \cite[Theorem 2.12(ii)]{villani2003topics} that $\hat{m}$ is a fixed point of $G$ defined in eq.~\eqref{eq:G-operator} if and only if $F^\prime(\hat{m}) = 0$. The gradient descent sequence in Wasserstein space (GDW)  with step $\gamma$ starting from $\mu_0 \in \mathcal{W}_{2,ac}(\mathbb{R}^q)$   is defined by (c.f.\ \cite{panaretos2017frechet})
\begin{align*}
\label{eq:gradient-descent}
\mu_{n+1}:= G_{\gamma}(\mu_n), \text{ for } n \geq 0, \mbox{ where }
\end{align*}
\begin{align*}G_{\gamma}(m) &:=  \textstyle \left[I + \gamma F^\prime(m)\right](m)= \left[(1-\gamma)I + \gamma \sum_{i=1}^{L}\lambda_iT_m^{m_i}\right](m)= \left[I + \gamma \sum_{i=1}^{L}\lambda_i(T_m^{m_i}-I)\right](m), 
\end{align*}
and it coincides with the sequence in eq.~\eqref{eq:fixed-point}  if $\gamma=1$. These ideas serve as inspiration for the stochastic gradient descent iteration in the next part.

\subsection{Stochastic gradient descent for population barycenters}\label{sec stoch grad}

We recall next the stochastic gradient descent sequence introduced in the companion paper \cite{bakchoff2022}, where the reader is referred to for details. Additionally to the assumptions in the previous part, we will also make use of an extra one introduced in \cite{bakchoff2022}:

\begin{assumption}
 $\Gamma$  \textcolor{black}{has a $W_2$-compact model space $K_\Gamma\subset\mathcal W_{2,ac}(\mathcal \R^q )$. Moreover this set is \emph{geodesically \black{convex}}:   for every $\mu,\nu\in K_\Gamma$ and $t\in[0,1]$,  $((1-t)I+tT_\mu^\nu)(\mu)\in K_\Gamma$, with $I$ the identity operator.}
\label{ass_A2}
\end{assumption}

In particular, under these assumptions,   for each $\nu \in \mathcal W_2(\R^q)$ and $\Gamma(dm) $ a.e.\ $m$, there is a unique optimal transport map $T_m^\nu$ from $m $ to  $\nu$ and, {\color{black} by  \cite[Proposition 6]{le2017existence}, the  2-Wasserstein population barycenter is unique}.  We notice that, although strong at first sight, assumption \ref{ass_A2}   can be guaranteed in suitable parametric situations (e.g., Gaussian, or even the location scatter setting recalled in Section \ref{sec scat loc}), or under moment and density constraints on the measures in $K_\Gamma$ (e.g., under uniform bounds on their moments of order $2+\epsilon$ and their Boltzmann entropy, which are geodesically convex  functionals, see \cite{ags08}).

\begin{definition}
	Let {\color{black}$\mu_0 \in K_\Gamma$}, $m_k \stackrel{\text{iid}}{\sim}  \Gamma$,  and $\gamma_k > 0$ for $k \geq 0$. We define the stochastic gradient descent in Wasserstein space  sequence (SGDW)  by
	\begin{equation}
	\label{eq:sgd-seq}
	\mu_{k+1} := \left[(1-\gamma_k)I + \gamma_k T_{\mu_k}^{m_k}\right](\mu_k) \text{ , for } k \geq 0.
	\end{equation}
\end{definition}
%The reasons as to why we can truthfully refer to the above sequence as stochastic gradient descent will become apparent in Sections \ref{sec grad desc} and \ref{sec stoch grad}.  

The sequence is a.s.\ well-defined, as one can show by induction that $\mu_k\in \mathcal W_{2,ac}(\R^q)$ a.s.~ thanks to Assumption \ref{ass_A2}. 
 The rationale for definition \ref{eq:sgd-seq} is similar to that of Section \ref{sec determ descent}, though now we wish to emphasize the population case:  If we call  now 
\begin{align}
F(\mu) :=& \textstyle  \frac{1}{2}\int_{\mathcal W_2(\R^q)} W_{2}^2(\mu,m)\Gamma(dm)\label{eq:FSGD}
\end{align}
the functional minimized by a 2-Wasserstein barycenter, then we (formally at least)  expect
\begin{align}
F^\prime(\mu) (x)=& \textstyle  -\int_{W_2(\R^q)} (T_\mu^{m}-I))\Gamma(dm)(x). \label{eq:F'SGD}
\end{align}
Hence, $(I-T_\mu^{m})$ with $m\sim \Gamma$, is an unbiased estimator of $F'(\mu)$. This  immediately suggests  the stochastic descent sequence \eqref{eq:sgd-seq}  introduced in  \cite{bakchoff2022},  drawing inspiration from the classic SGD ideas \cite{RoMo51}. 

 Clearly $\mu $ is a Karcher mean  for $\Gamma$ iff $\|F'(\mu) \|_{L^2(\mu)}=0$.  Just like for the GD sequence, the SGD sequence is typically expected to converge to stationary points, or   Karcher means in the present setting, rather than to minimisers. Next  theorem  provides  sufficient conditions for the  SGDW sequence to  a.s.\ converge to a Wasserstein barycenter, and is the main result of  \cite{bakchoff2022}.    
The  following   assumption on the steps $\gamma_k$, standard in the framework of SGD methods, is needed: 
\begin{equation}
\label{eq:cond1_2-gamma}
 \textstyle \sum_{k=1}^{\infty}\gamma_k^2 < \infty \qquad \mbox{ and }  \qquad
\textstyle \sum_{k=1}^{\infty}\gamma_k = \infty.
\end{equation}
\textcolor{black}{
\begin{theorem}
	\label{thm:sgd-convergence}
	Suppose Assumptions \ref{ass_A1} and \ref{ass_A2}, as well as  conditions \eqref{eq:cond1_2-gamma} hold. Furthermore, suppose that $\Gamma$ admits a unique Karcher mean. Then,  the SGD sequence $\{\mu_k\}_k$ in eq.~\eqref{eq:sgd-seq} is a.s.~convergent to the unique 2-Wasserstein barycenter $\hat{\mu}$ of $\Gamma$. Moreover, we have  $\hat{\mu}\in K_\Gamma$.
\end{theorem}}

\subsection{Batch stochastic gradient descent on Wasserstein space}
\label{sec:generalization-sgd}

We briefly recall how the variance of the SGDW sequence can be reduced by using batches:

\begin{definition}
	Let $\mu_0 \in K_\Gamma$, $m_k^i \stackrel{\text{iid}}{\sim}  \Gamma$, and $\gamma_k > 0$ for $k \geq 0$ and $i=1,\dots,S_k$. The batch stochastic gradient descent (BSGD) sequence is given by 
	\begin{align}		\label{eq:sgd-batch-seq} \textstyle
	\textstyle\mu_{k+1} := \left[(1-\gamma_k)I + \gamma_k \frac{1}{S_k}\sum_{i=1}^{S_k} T_{\mu_k}^{m_k^i}\right](\mu_k).
	\end{align}
\end{definition}

The following two results, extracted from \cite{bakchoff2022}, justify the above definition: The first result states that this sequence is still converging, while the second one states that batches help \emph{reducing noise}:

\begin{proposition}
	\label{thm:sgd-batch-convergence}
	Under the assumptions of Theorem \ref{thm:sgd-convergence} the BSGD sequence $\{\mu_t\}_{t\geq0}$ in eq.~\eqref{eq:sgd-batch-seq} converges a.s.\ to the 2-Wasserstein barycenter of $\Gamma$.
\end{proposition}
\begin{proposition}\label{prop batch noise reduction}
 {\color{black} The batch estimator  for $F^\prime(\mu)$ of batch size $S$, given  by} $-\frac{1}{S}\sum_{i=1}^{S}(T_{\mu}^{m_i}-I)$, has  integrated variance $$\textstyle\mathbb{V}[-\frac{1}{S}\sum_{i=1}^{S}(T_{\mu}^{m_i}-I)] := \int \text{Var}_{(m_i)\sim\Pi^{\otimes S}} \left[\frac{1}{S}\sum_{i=1}^S(T_\mu^{m_i}(x)-x)\right ]\mu(dx) = \mathcal{O}(\frac{1}{S}), $$
 i.e. $\mathbb{V}[-\frac{1}{S}\sum_{i=1}^{S}(T_{\mu}^{m_i}-I)]$ decreasing linearly in the batch size.
\end{proposition}

\subsection{Computation of the BWB}\label{sec:computation_bayes_bary}
It is immediate to deduce  a simple methodology  based on the SGDW algorithm, to compute the BWB estimator for general (finitely or infinitely supported) posterior laws $\Pi_n\in  \mathcal{W}_2(\mathcal{W}_{2,ac}(\R^q))$.

We make the practical assumption that we are capable of generating independent models $m_i$ from the posteriors $\Pi_n$  (in the parametric setting, this can be done through  efficient Markov Chain Monte Carlo (MCMC) techniques  \cite{andrieu2003introduction,goodman2010ensemble,brooks2011handbook} or  transport sampling procedures \cite{el2012bayesian,parno2015transport,kim2015tractable,marzouk2016introduction}). 
On the theoretical side, we  assume conditions \ref{ass_A1} and \ref{ass_A2} are satisfied by $\Gamma=\Pi$,  the prior law on models,  implying  the posterior $\Pi_n$ a.s.  satisfies  those conditions for all $n$ too.  

The proposed method can be sketched as follows:
\begin{enumerate}
    \item Given a prior on models $\Pi $ and data $x_1,\dots, x_n$,  sample $\mu^{(n)}_0 \sim  \Pi_n$  and set $k=0$.
    \item  Sample $m^{(n)}_{k}$ independent from $\mu^{(n)}_0 ,m^{(n)}_0,\dots,  m^{(n)}_{k-1}$
    \item 
   Set 
	\begin{equation*}
	\mu^{(n)}_{k+1} := \left[(1-\gamma_k)I + \gamma_k T_{\mu^{(n)}_k}^{m^{(n)}_k}\right](\mu^{(n)}_k) \text{ , for } k \geq 0.
	\end{equation*} 
	\item  Increase $k$ by $1$ and go to (2).
\end{enumerate}  
The algorithm can be run until $k\in \N$  large enough such that the squared $2-$Wasserstein distance 
$$W_2^2(\mu^{(n)}_{k+1} , \mu^{(n)}_{k})= \gamma_k^2 \int_{\R^q}  |x-  T_{\mu^{(n)}_k}^{m^{(n)}_k} (x)|^2 d\mu^{(n)}_k(x)$$  between $\mu^{(n)}_{k+1}$ and $ \mu^{(n)}_{k}$  is repeatedly smaller than some given positive  threshold.   

Moreover, under the assumptions of Theorem \ref{thm main consistency},  $\Pi$ is consistent in the $W_2$-topology at the law $m_0$ of the data $x_1,\dots,x_0$ and then, for some (random) large enough $n,k\in \N$  and given $\varepsilon>0$ one has
 $W_2(\mu^{(n)}_{k}, m_0) \leq W_2(\mu^{(n)}_{k}, \hat{m}_2^{n})  + W_2( \hat{m}_2^{n}, m_0)  \leq  \varepsilon,   $
 where $\hat{m}_2^{n}$ is the $BWB$. 
Notice that, besides the sequential generation of a finite  i.i.d. sequence $\mu^{(n)}_0 ,m^{(n)}_0,\dots,  m^{(n)}_{k}\sim \Pi_n$,  at each step $k+1$ one only needs to compose a new transport  map $\left[(1-\gamma_k)I + \gamma_k T_{\mu^{(n)}_k}^{m^{(n)}_k}\right]$ with the transport map  pushing  forward  $\mu^{(n)}_0 $ to $\mu^{(n)}_k$, cumulatively constructed in the previous iterations.   
 If an expression for each map $T_{\mu^{(n)}_k}^{m^{(n)}_k}$ is available, this can easily be  done (and stored), specifying (only) the values of the lastly computed map on a pre-fixed grid. 
 
 \smallskip

The batch version SGWD can be implemented in a similar way to compute the BWB estimator.

\medskip

\begin{remark}\label{rem_bary_emp} An alternative, natural   approach would be to sample, for given  $n$  a fixed number $k$ of realizations 
$m_i \stackrel{\text{iid}}{\sim} \Pi_n$,  $i=1,\dots,k$, and compute, using the GDW algorithm, the Wasserstein barycenter $\hat{m}_2^{(n,k)}$ of the (finitely  supported) empirical measure 
	\begin{align*}
		 \textstyle  \Pi_n^{(k)}:= \frac{1}{k}\sum_{i=1}^{k}\delta_{m_i}\,\,\in\mathcal{W}_2(\mathcal{W}_{2,ac}(\R^q), 
	\end{align*} 
	or {\it $2$-Wasserstein empirical barycenter} of $\Pi_n$ (see \cite{bigot2018characterization}). 
Indeed, by Varadarajan's theorem, conditionally on $\Pi_n$, a.s. $ \Pi_n^{(k)}$ converges weakly as $k\to \infty$ to $\Pi_n$, and in the $W_2$ metric as soon as $\Pi_n\in  \mathcal{W}_2(\mathcal{W}_{2,ac}(\R^q))$. Since under our assumptions $\Pi_n$ a.s.\ has a unique $2$-Wasserstein  barycenter $\hat{m}_2^{n}$ and, by \cite[Theorem 3]{le2017existence}, $\hat{m}_2^{(n,k)}$ converges with respect to $W_2$ a.s.\ as $k\to \infty$ to it, we also get through this approach that 
 $$W_2(\hat{m}_2^{(n,k)}, m_0) \leq W_2(\hat{m}_2^{(n,k)}, \hat{m}_2^{n})  + W_2( \hat{m}_2^{n}, m_0)  \leq  \varepsilon   $$
 for some (random) large enough $n,k\in \N$. The clear disadvantage of this method is computational:  besides generating $k$ samples of $\Pi_n$, we need to additionally run a possibly large number of GDW steps to approximate $\hat{m}_2^{(n,k)}$, and we need to evaluate at each step $k$ new transport maps (instead of one, for the SGDW). Moreover, if additional $k'$ new  samples from $\Pi_n$ become available, we need to run the whole scheme again to take advantage of this new information.  On the contrary, the online nature of the SGDW method allows one to  refine the already computed estimator by only performing  $k'$ new steps of the algorithm. 
\end{remark}

\section{Numerical experiments}\label{sec numeric}

Before presenting the experimental validation of the proposed methods we give a brief presentation of the scatter-location family of distributions. Our experiments consider this family because the optimal transport maps between two laws in the scatter-location family can be described explicitly. This property facilitates the numerical computation/approximation of barycenters, as the various iterative algorithms described so far take a more amenable form.   See \cite{bakchoff2022} for further examples.

\subsection{Location-Scatter family}\label{sec scat loc}
We follow the setting of \cite{alvarez2018wide}: Given a fixed distribution $\tilde{m} \in \mathcal{W}_{2,ac}(\mathbb{R}^q)$, referred to as \emph{generator}, the associated location-scatter family is given by $$\mathcal{F}(\tilde{m}) : = \{\mathcal{L}(A\tilde{x}+b)\ |\  A \in \mathcal{M}_+^{q\times q} ,b \in \mathbb{R}^q, \tilde{x} \sim \tilde{m} \},$$
where $\mathcal{M}_+^{q\times q}$ is the set of symmetric positive definite matrices of size $q\times q$. Without loss of generality we can assume that $\tilde{m}$ has zero mean and identity covariance. Note that  $\mathcal{F}(\tilde{m})$ is the multivariate normal family if $\tilde{m}$ is the standard multivariate normal distribution.

The optimal map between two members $m_1 = \mathcal{L}(A_1\tilde{x}+b_1)$ and $m_2 = \mathcal{L}(A_2\tilde{x}+b_2)$ of $\mathcal{F}(\tilde{m})$ is explicit, given by  $T_{m_1}^{m_2}(x) = A(x-b_1) + b_2$ where $A = A_1^{-1}(A_1A_2^2A_1)^{1/2}A_1^{-1} \in \mathcal{M}_+^{q\times q}$. This family of optimal maps contains the identity and is closed under convex combination. 

If $\Gamma$ is supported on $\mathcal{F}(\tilde{m})$, then its $2$-Wasserstein barycenter $\hat{m}$ belongs to $\mathcal{F}(\tilde{m})$. In fact, call its mean $\hat{b}$ and its covariance matrix $\hat{\Sigma}$. Since the optimal map from $\hat{m}$ to $m$ is $T_{\hat{m}}^{m}(x) = A_{\hat{m}}^{m}(x-\hat{b}) + b_m$, where $A_{\hat{m}}^{m} = \hat{\Sigma}^{-1/2}(\hat{\Sigma}^{1/2}\Sigma_m\hat{\Sigma}^{1/2})^{1/2}\hat{\Sigma}^{-1/2}$, and we know that $\int T_{\hat{m}}^{m}(x)\Gamma(dm) = x$, $\hat{m}$-almost surely, then we must have that $\int A_{\hat{m}}^{m} \Gamma(dm) = I$, since clearly $\hat{b}=\int b_m \Gamma(dm)$, and as a consequence $\hat{\Sigma} = \int (\hat{\Sigma}^{1/2}\Sigma_m\hat{\Sigma}^{1/2})^{1/2}\Gamma(dm)$.

A stochastic gradient descent iteration, starting from a distribution $\mu = \mathcal{L}(A_0\tilde{x}+b_0)$, sampling some $m=\mathcal{L}(A_m\tilde{x}+b_m) \sim \Gamma$, and with step $\gamma$, produces the measure $\nu=T_0^{\gamma,m}(\mu):= ((1-\gamma)I + \gamma T_\mu^m)(\mu)$. If $\tilde{x}$ has a multivariate distribution $\tilde{F}(x)$, then $\mu$ has distribution $F_0(x)=\tilde{F}(A_0^{-1}(x-b_0))$ with mean $b_0$ and covariance $\Sigma_0 = A_0^2$. We have that $T_0^{\gamma,m}(x) =  ((1-\gamma)I + \gamma A_{\mu}^{m})(x-b_0) + \gamma b_m + (1-\gamma)b_0$ with $A_{\mu}^{m} := A_0^{-1}(A_0A_m^2A_0)^{1/2}A_0^{-1}$. Then $\nu$ has distribution $$F_1(x) = F_0([T_0^{\gamma.m}]^{-1}(x)) =\tilde{F}( [(1-\gamma)A_0+\gamma A_{\mu}^{m}A_0]^{-1}(x-\gamma b_m - (1-\gamma)b_0 ) ),$$ with mean $b_1 = (1-\gamma)b_0 + \gamma b_m$ and covariance
\begin{align*}
	\Sigma_1 = A_1^2&= [(1-\gamma)A_0+\gamma A_0^{-1} (A_0A_m^2A_0)^{1/2}][(1-\gamma)A_0+\gamma (A_0A_m^2A_0)^{1/2}A_0^{-1}]\\
	 &=  A_0^{-1}[(1-\gamma)A_0^2+\gamma  (A_0A_m^2A_0)^{1/2}][(1-\gamma)A_0^2+\gamma (A_0A_m^2A_0)^{1/2}]A_0^{-1}\\
	 &=A_0^{-1}[(1-\gamma)A_0^2+\gamma (A_0A_m^2A_0)^{1/2}]^2A_0^{-1}. 
\end{align*}
The batch stochastic gradient descent iteration is characterized by
\begin{align*}
	b_{1} &= \textstyle (1-\gamma)b_0+\frac{\gamma}{S}\sum_{i=1}^{S}b_{m^{i}}\\
	A_{1}^2 &=\textstyle A_0^{-1}[(1-\gamma)A_0^2+\frac{\gamma}{S}\sum_{i=1}^{S} (A_0A_{m^{i}}^2A_0)^{1/2}]^2A_0^{-1}.
\end{align*}

\subsection{Experiment}\label{sec numeric_experiment}

We considered a model within a location-scatter family (LS), with generator  $\tilde{m}$  on $\mathbb{R}^{15}$ with independent coordinates, as follows:
\begin{itemize}
	\item coordinates 1 to 5 are standard Normal distributions
	\item coordinates 6 to 10 are standard Laplace distributions, and 
	\item coordinates 11 to 15 are standard Student's $t$-distributions ($3$ degrees of freedom).
\end{itemize}
Fig.~\ref{fig:generator} shows samples (uni- and bi-variate marginals) from coordinates $\{1,2,6,7,11,12\}$ of $\tilde{m}$.

\begin{figure}[ht]
	\includegraphics[width=0.7\textwidth]{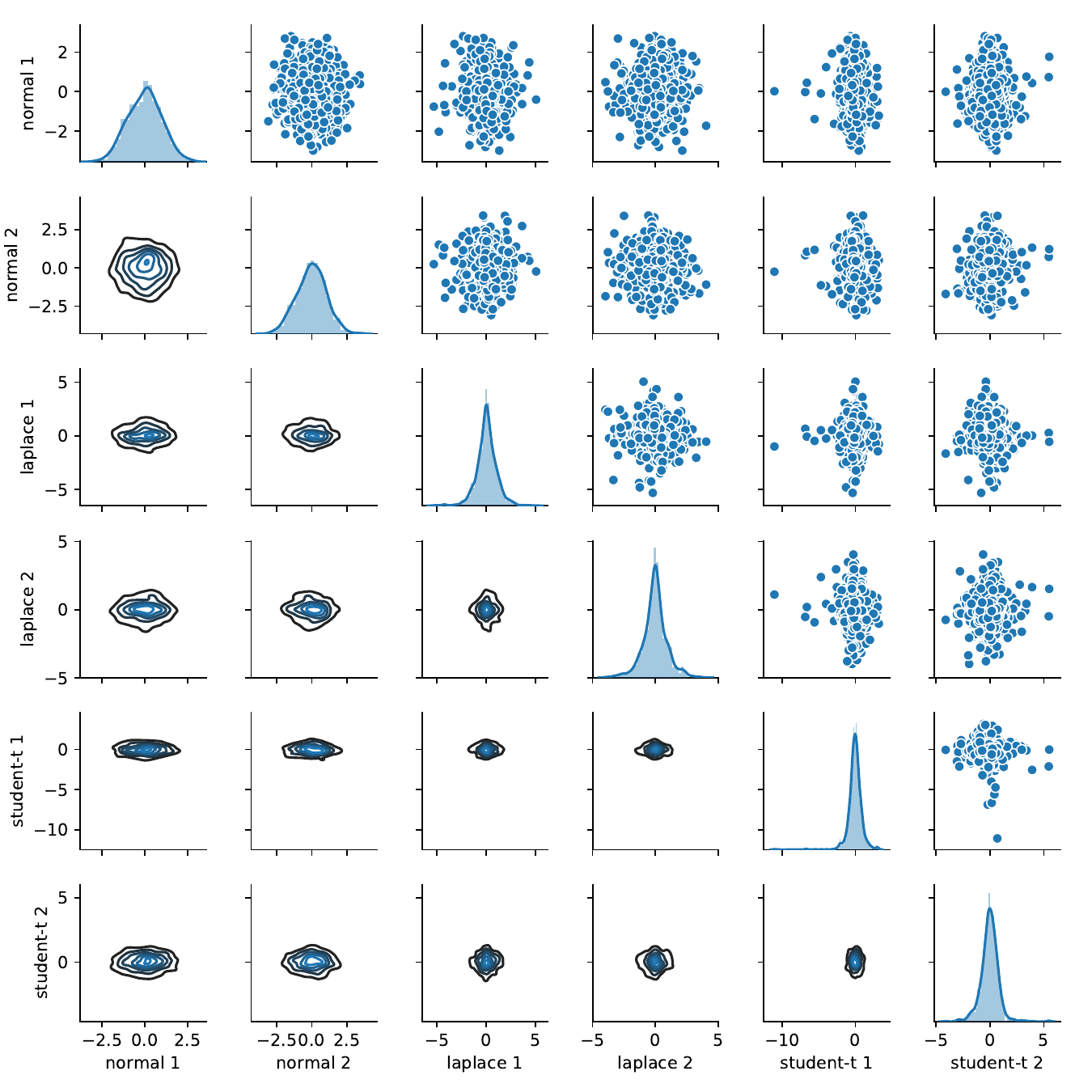}
	\caption{Samples from the univariate (diagonal) and bivariate (off-diagonal) marginals for 6 coordinates from the generator distribution $\tilde{m}$. The diagonal and lower triangular plots are smoothed histograms, whereas the upper-diagonal ones are collections of samples. }
	\label{fig:generator}
\end{figure}
In this LS family,  we chose the true model $m_0$  with  location vector $b \in \mathbb{R}^{15}$ defined as $b_i = i-1$ for $i=1,\ldots,15$, and scatter matrix $A = \Sigma^{1/2}$ with $\Sigma_{i,j} = K\left( \left(\frac{i-1}{14}\right)^{1.1} ,\left(\frac{j-1}{14}\right)^{1.1}\right)$ for $i,j=1,\ldots,15$ \footnote{We chose $\left(\frac{j-1}{14}\right)^{1.1}$ for $j=1,\ldots,15$ because this defines a non-uniform grid over $[0,1]$.},  with the covariance (kernel) function $K(i,j) = \eps \delta_{ij} + \sigma \cos\left(\omega(i-j)\right)$. Given the parameters $\eps, \sigma$ and $\omega$, the so constructed covariance matrix will be denoted $\Sigma_{\eps,\sigma,\omega}$. We chose the parameters $\eps = 0.01$, $\sigma = 1$ and $\omega = 5.652 \approx 1.8\pi$ for $m_0$. Thus,  under the true model $m_0$ the coordinates can be negatively/positively correlated and there is also a coordinate-independent noise component due to the Kronecker delta $\delta_{ij}$. Fig.~\ref{fig:true_distribution} shows the covariance matrix and three coordinates of the  \emph{true} model $m_0$.

\begin{figure}[ht]
	\includegraphics[width=0.52\textwidth]{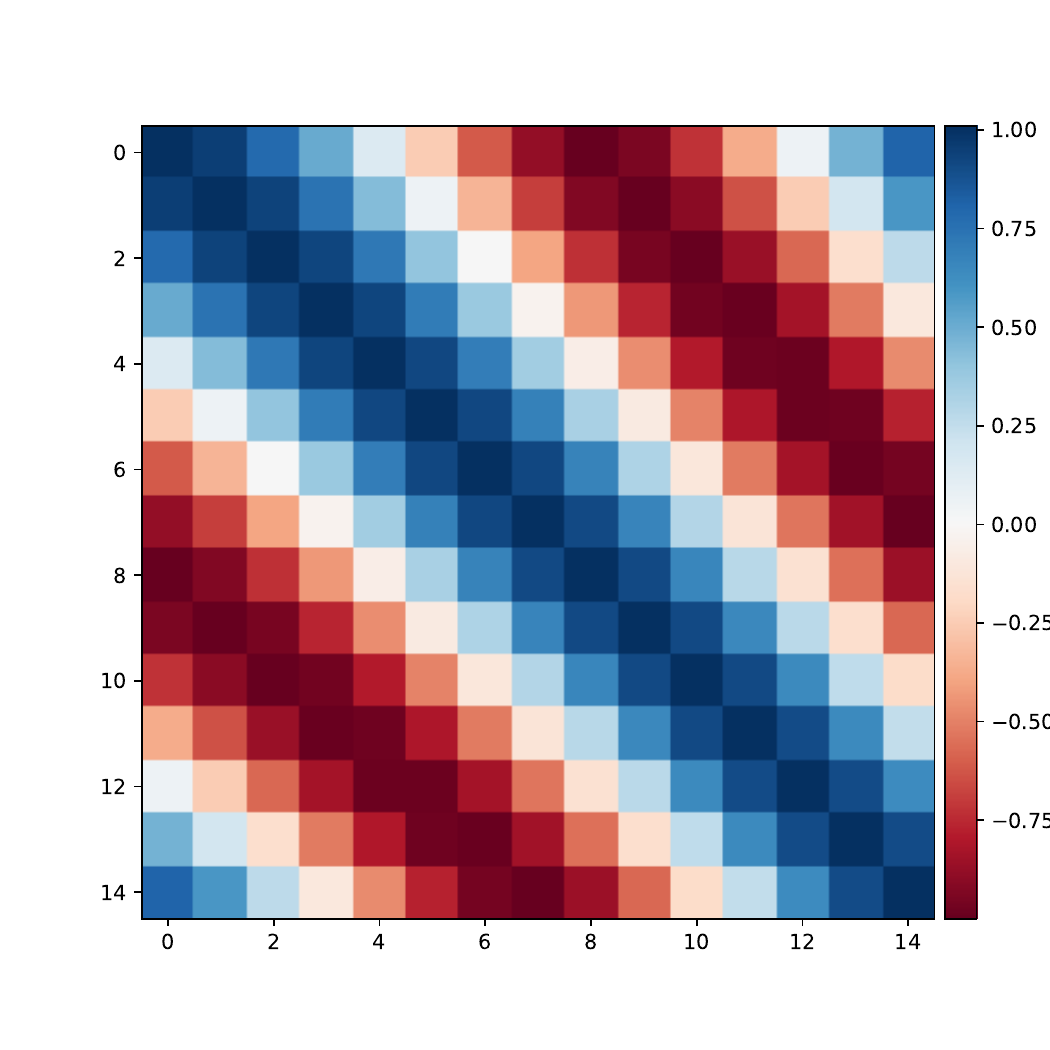}\hfill
	\includegraphics[width=0.47\textwidth]{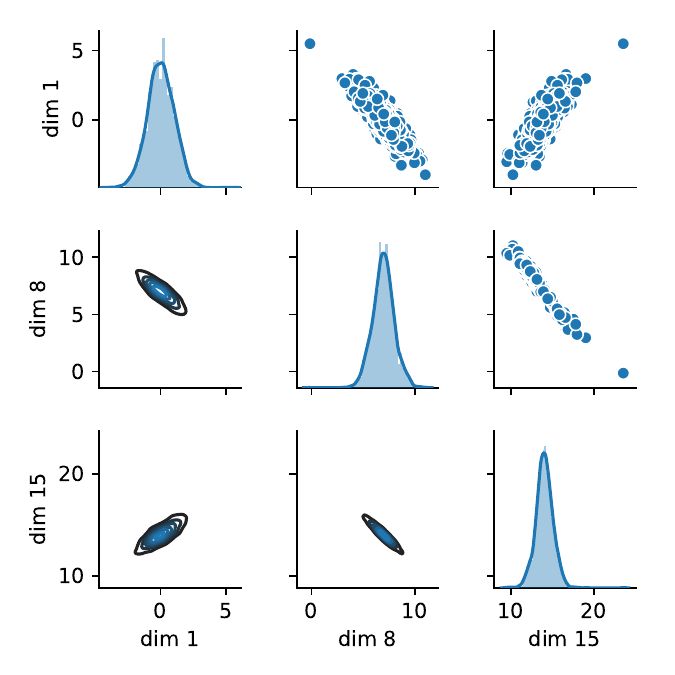}
	\caption{True model $m_0$: covariance matrix (left), and univariate and bivariate marginals for dimensions 1, 8 and 15 (right). Notice that some coordinates are positively or negatively correlated, and some may even be close to uncorrelated.}
	\label{fig:true_distribution}
\end{figure}
	
The model prior $\Pi$ is the push-forward induced by a prior over the mean vector $b$ and the parameters of the covariance $\Sigma_{\eps,\sigma,\omega}$,  chosen   independently according to : 
\begin{equation}
p(b,\Sigma_{\eps,\sigma,\omega}) = \mathcal{N}(b|0,\text{I})\,\text{Exp}(\eps|20)\,\text{Exp}(\sigma|1)\,\text{Exp}(\omega^{-1}|15),	
\end{equation}
where $\text{Exp}(\cdot|\lambda)$ is a exponential density with rate $\lambda$. Given $n$ samples from the true model $m_0$ (also referred to as \emph{observations} or \emph{data points}), $k$ samples are produced from the posterior measure $\Pi_n$ using Markov chain Monte Carlo (MCMC).

The numerical analysis presented in what follows focuses on the behavior of the BWB as a function of both the number of samples $k$ and the number of data points $n$.

\subsubsection*{Numerical consistency of the empirical posterior }

We first validate the empirical measure $\Pi_n^{(k)}$ as a consistent sample version of the true posterior under the $W_2$ distance, that is, we check that $W_2(\Pi_n^{(k)}, \delta_{m_0}) \rightarrow W_2(\Pi_n, \delta_{m_0})$ for large $k$. We estimate $W_2(\Pi_n^{(k)}, \delta_{m_0})$ 10 times for each combination of the (number of) observations $n$ and samples $k$ in the following sets 
\begin{itemize}
	\item $k \in \{1, 5, 10, 20, 50, 100, 200, 500, 1000\}$
	\item $n \in \{10, 20, 50, 100, 200, 500, 1000, 2000, 5000, 10000\}$
\end{itemize} 
Fig.~\ref{fig:convergence_w2} shows the 10 estimates of $W_2(\Pi_n^{(k)}, \delta_{m_0})$ for different values of $k$ (in the $x$-axis) and of $n$ (color coded). Notice how the estimates become more concentrated for larger $k$ and that the Wasserstein distance between the empirical measure $\Pi_n^{(k)}$ and the true model $m_0$ decreases for larger $n$. Additionally, Table \ref{tab:std_w2} shows that the standard deviation of the 10 estimates of $W_2(\Pi_n^{(k)}, \delta_{m_0})$ decreases as either $n$ or $k$ increases.

\begin{figure}[ht]
	\includegraphics[width=0.7\textwidth]{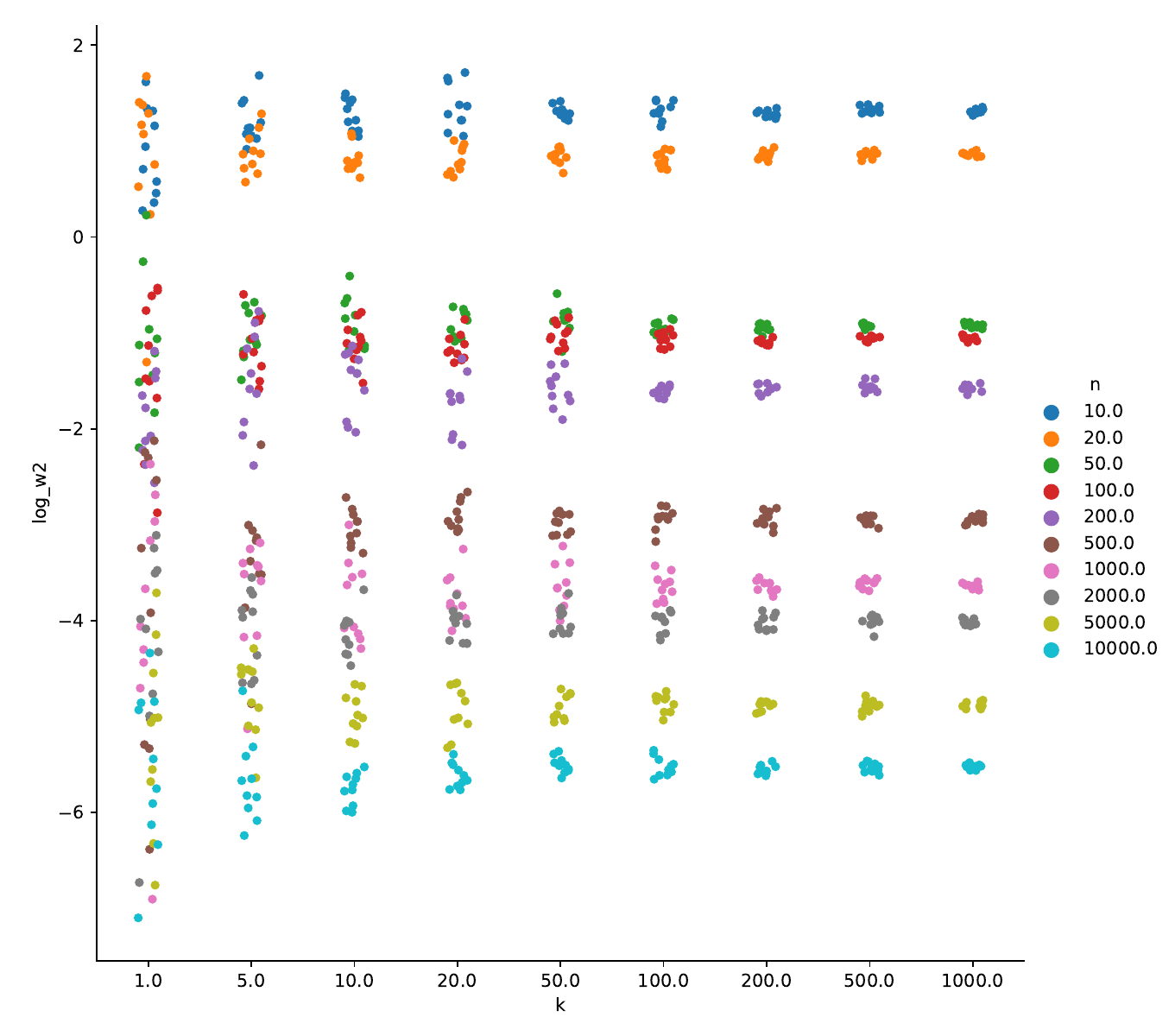}
	\caption{Wasserstein distance between the empirical measure $\Pi_n^{(k)}$ and  $\delta_{m_0}$ in logarithmic scale for different number of observations $n$ (color coded) and samples $k$ ($x$-axis). For each pair ($n,k$), 10 estimates of $W_2(\Pi_n^{(k)}, \delta_{m_0})$ are shown.}
	\label{fig:convergence_w2}
\end{figure}

\begin{table}[ht]
\small
	{\small
		\caption{Standard deviation of $W_2^2(\Pi_n^{(k)}, \delta_{m_0})$, using 10 simulations, for different values of observations $n$ and samples $k$.}
		\label{tab:std_w2} 
	}
	\centering
\begin{tabular}{l|ccccccccc}
	\hline
	n \textbackslash~ k &  1    &  5    &  10   &  20   &  50   &  100  &  200  &  500  &  1000 \\
	\hline
	10     &  1.2506 &  0.8681 &  0.5880 &  0.9690 &  0.2354 &  0.3440 &  0.1253 &  0.1330 &  0.0972 \\
	20     &  1.5168 &  0.5691 &  0.3524 &  0.3182 &  0.1850 &  0.1841 &  0.1049 &  0.0811 &  0.0509 \\
	50     &  0.3479 &  0.0948 &  0.1275 &  0.0572 &  0.0623 &  0.0229 &  0.0157 &  0.0085 &  0.0092 \\
	100    &  0.2003 &  0.1092 &  0.0712 &  0.0469 &  0.0431 &  0.0254 &  0.0087 &  0.0079 &  0.0084 \\
	200    &  0.0749 &  0.1249 &  0.0717 &  0.0533 &  0.0393 &  0.0101 &  0.0092 &  0.0109 &  0.0072 \\
	500    &  0.0478 &  0.0285 &  0.0093 &  0.0086 &  0.0053 &  0.0056 &  0.0045 &  0.0023 &  0.0022 \\
	1000   &  0.0299 &  0.0113 &  0.0113 &  0.0064 &  0.0067 &  0.0036 &  0.0016 &  0.0012 &  0.0007 \\
	2000   &  0.0145 &  0.0071 &  0.0040 &  0.0031 &  0.0027 &  0.0019 &  0.0014 &  0.0011 &  0.0006 \\
	5000   &  0.0072 &  0.0031 &  0.0015 &  0.0018 &  0.0010 &  0.0007 &  0.0004 &  0.0005 &  0.0002 \\
	10000  &  0.0038 &  0.0020 &  0.0005 &  0.0005 &  0.0004 &  0.0004 &  0.0002 &  0.0002 &  0.0001 \\
	\hline
\end{tabular}
\end{table}

\subsubsection*{Wasserstein distance between the empirical barycenter and the true model}
For each empirical posterior $\Pi_n^{(k)}$,  we computed the empirical Wasserstein barycenter $\hat{m}_2^{(n,k)}$ as suggested in Remark \ref{rem_bary_emp}. Thus, we used the iterative GDW  procedure in eq.~\eqref{eq:fixed-point}, namely the (deterministic) gradient descent method, and repeated this calculation 10 times. As a stopping criterion for gradient descent, we considered the relative variation of the $W_2$ cost; the computation was terminated when this cost fell below $10^{-4}$. Fig.\ \ref{fig:convergence_barycenter} shows the $W_2$ distances between the so computed barycenters and the true model, while Table \ref{tab:w2_baricenter_m0} shows the average distance for each pair ($n,k$). Notice that, in general, both the average and standard deviation of the barycenters decrease as either $n$ or $k$ increases, yet for large values (e.g.,\ $n=2000,5000$) numerical issues appear.

\begin{figure}[ht]
	\includegraphics[width=0.7\textwidth]{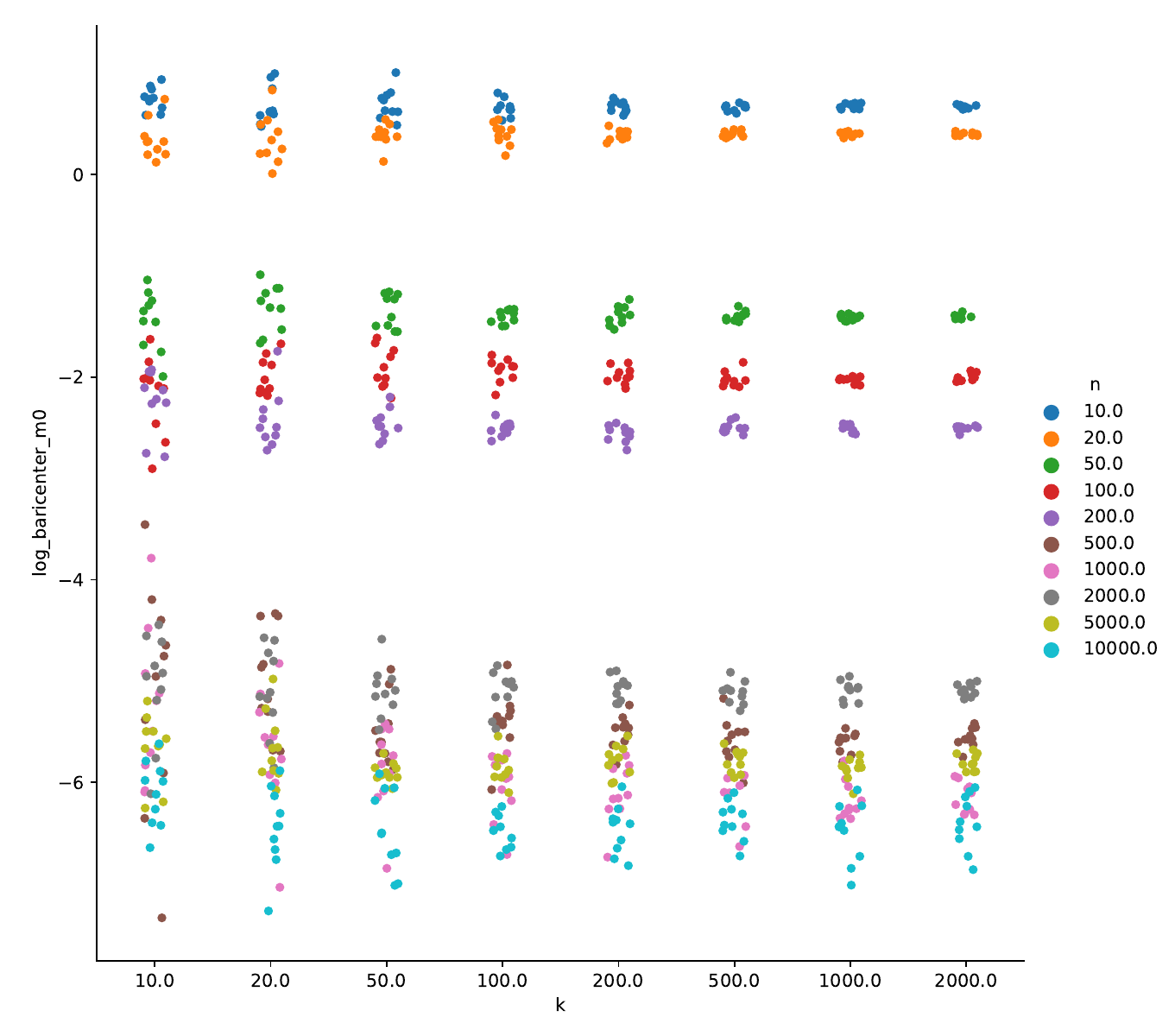}
	\caption{$W_2$ distance between the empirical barycenters $\hat{m}_2^{(n,k)}$ and the true model ${m_0}$ in logarithmic scale for different number of observations $n$ (color coded) and samples $k$ ($x$-axis). For each pair ($n,k$), 10 estimates of $W_2(\hat{m}_2^{(n,k)}, m_0)$ are shown.}
	\label{fig:convergence_barycenter}
\end{figure}

\begin{table}[ht]
\small
	{\small
		\caption{Sample average  of $W_2^2(\hat{m}_2^{(n,k)}, m_0)$, using 10 simulations, for different values of observations $n$ and samples $k$.}
		\label{tab:w2_baricenter_m0} 
	}
	\centering
	\begin{tabular}{l|cccccccc}
		\hline
		n / k &   10   &  20   &  50   &  100  &  200  &  500  &  1000 & 2000 \\
		\hline
		10 &  2.1294 &  2.0139 &  2.0384 &  1.9396 &  1.9608 &  1.9411 &  1.9699 &  1.9548 \\
		20&  1.4382 &  1.4498 &  1.4826 &  1.4973 &  1.4785 &  1.4953 &  1.4955 &  1.4914 \\
		50 &  0.2455 &  0.2759 &  0.2639 &  0.2468 &  0.2499 &  0.2483 &  0.2443 &  0.2454 \\
		100 &  0.1211 &  0.1387 &  0.1509 &  0.1458 &  0.1379 &  0.1328 &  0.1318 &  0.1349 \\
		200 &  0.1116 &  0.0922 &  0.0859 &  0.0817 &  0.0777 &  0.0824 &  0.0820 &  0.0819 \\
		500 &  0.0094 &  0.0077 &  0.0043 &  0.0047 &  0.0041 &  0.0038 &  0.0037 &  0.0039 \\
		1000 &  0.0068 &  0.0039 &  0.0031 &  0.0025 &  0.0023 &  0.0022 &  0.0021 &  0.0021 \\
		2000 &  0.0072 &  0.0066 &  0.0063 &  0.0062 &  0.0063 &  0.0060 &  0.0062 &  0.0062 \\
		5000 &  0.0037 &  0.0037 &  0.0028 &  0.0029 &  0.0031 &  0.0031 &  0.0028 &  0.0030 \\
		10000 &  0.0023 &  0.0017 &  0.0017 &  0.0015 &  0.0016 &  0.0017 &  0.0016 &  0.0017 \\
		\hline
	\end{tabular}
\end{table}

\subsubsection*{Distance between the empirical barycenter and the Bayesian model average}
We then compared the  empirical Wasserstein barycenters $\hat{m}_2^{(n,k)}$ to the standard  Bayesian model averages, denoted here $\bar{m}^{(n,k)}$, in terms of their distance to the true model $m_0$, for $n=1000$ observations. To that end, we estimated the $W_2$ distances via empirical approximations with $1000$ samples for each model based on \cite{flamary2017pot}. We simulated this procedure 10 times for $k\in\{10,20,50,100,200,500,1000\}$. Fig.~\ref{fig:barycenter_vs_model_average} shows the sample average and variance of the $W_2$ distances of the Wasserstein barycenters and Bayesian model averages. The empirical barycenter is seen to be closer to the true model than the model average,  regardless of the number of MCMC samples $k$.

\begin{figure}[ht]
	\includegraphics[width=0.9\textwidth]{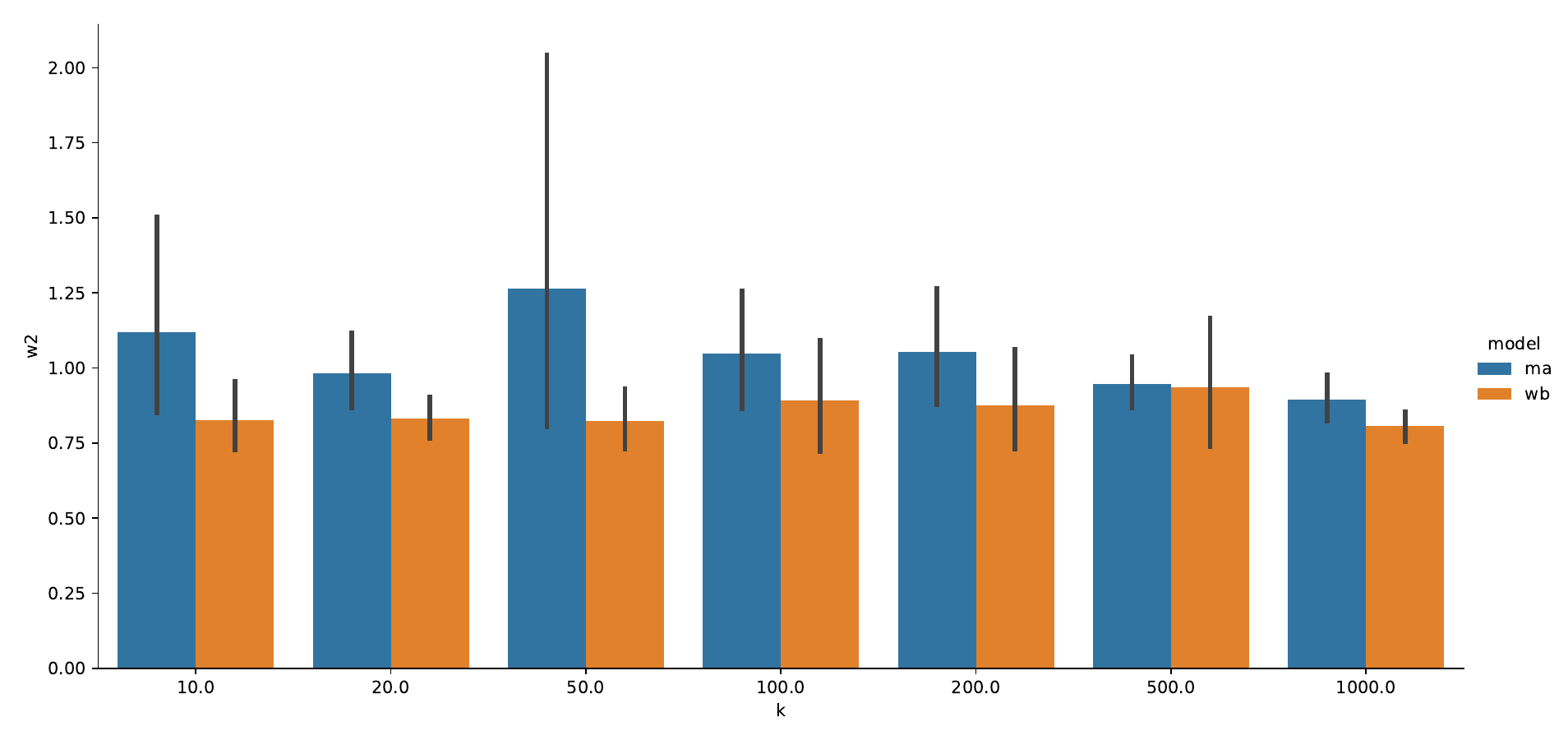}
	\caption{Averages (bars) and standard deviations (vertical lines) of $W_2^2(\hat{m}_n^{(k)}, m_0)$ denoted as \textbf{wb} in orange, and $W_2^2(\bar{m}_n^{(k)}, m_0)$ denoted as \textbf{ma} in blue, for $n=1000$ and different numbers of samples $k$. We considered 10 simulations for each $k$.}
	\label{fig:barycenter_vs_model_average}
\end{figure}

\iffalse
...and Table \ref{tab:barycenter_vs_model_average}....
\begin{table}[ht]
\small
	{\small
		\caption{Average and standard deviation of $W_2^2(\bar{m}_n^{(k)}, m_0)$ and $W_2^2(\hat{m}_n^{(k)}, m_0)$  for $n=1000$, using 10 simulations.}
		\label{tab:barycenter_vs_model_average} 
	}
	\centering
	\begin{tabular}{l|cccccc}
		\hline
		{} & \multicolumn{2}{c}{$W_2^2(\bar{m}_n^{(k)}, m_0)$} & \multicolumn{2}{c}{$W_2^2(\hat{m}_n^{(k)}, m_0)$} \\
		\hline
		{k} &   mean &    std &   mean &    std \\
		\hline
		10   & 1.1197 & 0.5578 & 0.8259 & 0.1921 \\
		20   & 0.9836 & 0.2174 & 0.8320 & 0.1202 \\
		50   & 1.2645 & 1.1879 & 0.8249 & 0.1626 \\
		100  & 1.0487 & 0.3480 & 0.8931 & 0.3073 \\
		200  & 1.0551 & 0.3347 & 0.8749 & 0.3039 \\
		500  & 0.9469 & 0.1464 & 0.9367 & 0.3858 \\
		1000 & 0.8946 & 0.1331 & 0.8072 & 0.0881 \\
		\hline
	\end{tabular}
\end{table}

\fi

\subsubsection*{Computation of the Wasserstein barycenter with batch-SGDW }

Lastly, we compared the empirical barycenters $\hat{m}_2^{(m,k)}$ to the barycenter obtained by the batch SGDW method in eq.~\eqref{eq:sgd-batch-seq} with different batch sizes. Here, we shall denote the latter by  $\hat{m}_2^{(n,t,s)}$ with $n$ the number of observations, $t$ the steps of the algorithm, and $s$ the batch size.  Fig.~\ref{fig:cost_stochastic_barycenters} shows the evolution of the $W_2^2$ distance between the stochastic gradient descent sequences and the true model $m_0$ for $n\in\{10,20,50,100,200,500,1000\}$ observations and batches of sizes $s\in\{1,15\}$, with step-size $\gamma_t = \frac{1}{t}$ for $t=1,\ldots,200$. Notice from Fig.~\ref{fig:cost_stochastic_barycenters} that the larger the batch, the more concentrated the trajectories of $\hat{m}_{n, s}$ become.  Additionally, Table \ref{tab:mean_stochastic_barycenters} summarizes the means of the distance $W_2^2$ to the true model $m_0$, using the sequences after $t=100$ against the empirical estimator using all the simulations with $k \geq 100$. Finally, Table \ref{tab:std_stochastic_barycenters} shows the standard deviation of the distance $W_2^2$ to the true model $m_0$, which  can be seen  to decrease as the batch size grows. Critically, we observe that for batch sizes $s \geq 5$ the stochastic estimation was \emph{better} than its empirical counterpart, i.e., it had smaller variance with similar (or even smaller) bias. This is noteworthy given the fact that computing our Wasserstein barycenter estimator via the batch stochastic gradient descent method is computationally less demanding than computing it via the empirical method.

\begin{figure}[H]
	\includegraphics[width=0.95\textwidth]{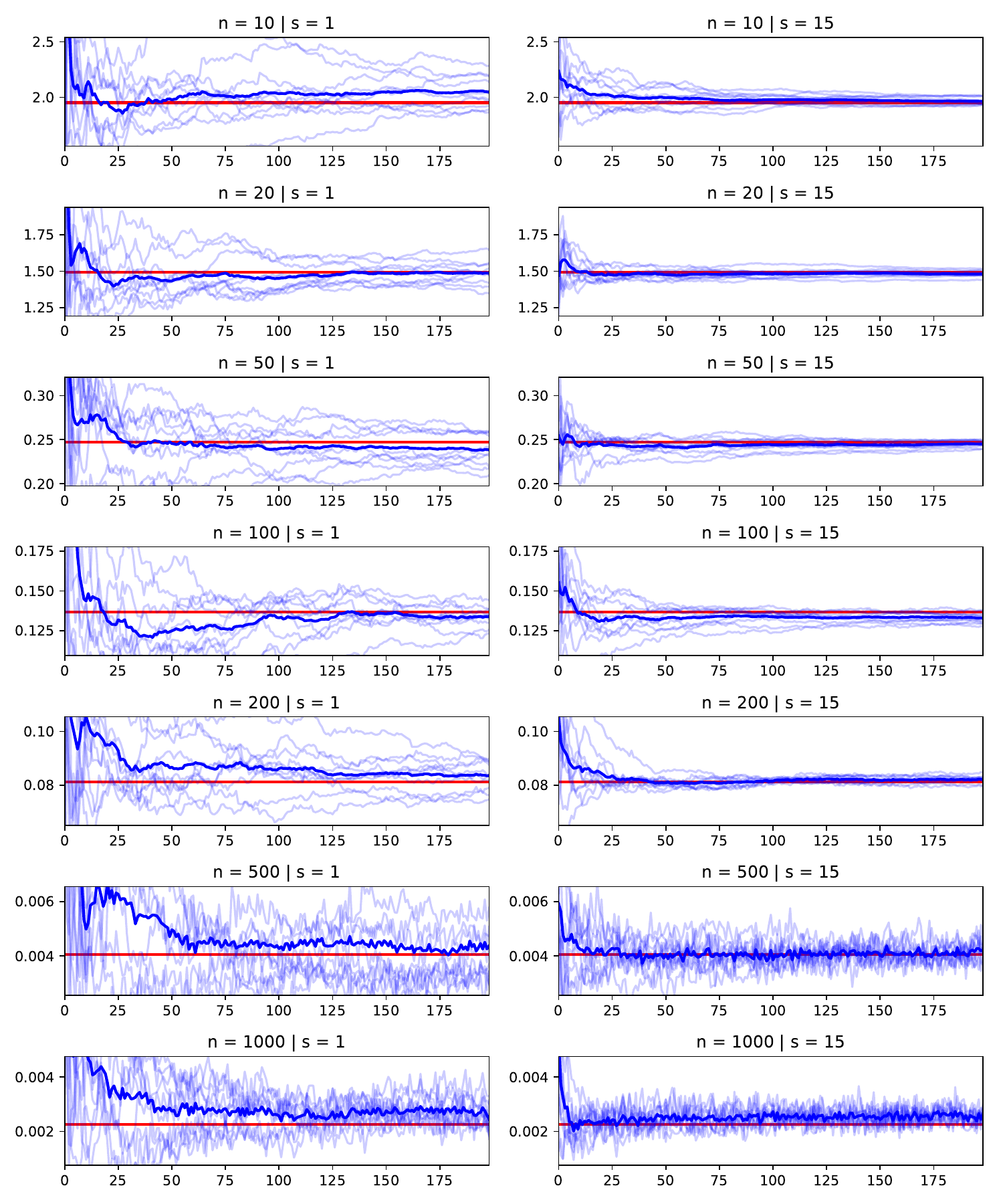}
	\caption{Evolution of the $W_2^2$ cost for 10 realizations of the SGDW sequence computing the BWB and their mean (blue),  versus an empirical barycenter estimator (red), for $n=10,20,50,100,200,500,1000$ and batches sizes $s= 1,15$.}
	\label{fig:cost_stochastic_barycenters}
\end{figure}

\begin{table}[ht]
\small
	{\small
		\caption{Means of $W_2^2$ of the stochastic gradient estimations (using the sequences with $t \geq 100$) and that of the empirical estimator (using the simulations with $k \geq 100$), across different combinations of observations $n$ and batch size $s$. 
		%for $n=10,20,50,100,200,500,1000$ 
		}
		\label{tab:mean_stochastic_barycenters} 
	}
	\centering
	\begin{tabular}{l|cccccc|c}
		\hline
		n / s &    1 &    2 &    5 &   10 &   15 &   20 &  empirical \\
		\hline
		10 & 2.0421 & 2.0091 & 1.9549 & 1.9721 & 1.9732 & 1.9712 &     1.9532 \\
		20 & 1.4819 & 1.4868 & 1.5100 & 1.4852 & 1.4840 & 1.4891 &     1.4916 \\
		50 & 0.2406 & 0.2512 & 0.2465 & 0.2427 & 0.2444 & 0.2460 &     0.2469 \\
		100 & 0.1340 & 0.1392 & 0.1340 & 0.1349 & 0.1334 & 0.1338 &     0.1366 \\
		200 & 0.0843 & 0.0811 & 0.0819 & 0.0807 & 0.0820 & 0.0819 &     0.0811 \\
		500 & 0.0044 & 0.0042 & 0.0039 & 0.0039 & 0.0041 & 0.0040 &     0.0041 \\
		\hline
	\end{tabular}
\end{table}

\begin{table}[ht]
\small
	{\small
		\caption{Std.\ deviations of $W_2^2$ of the stochastic gradient estimations (using the sequences with $t \geq 100$) and that of empirical estimator (using the simulations with $k \geq 100$), across different combinations of observations $n$ and batch size $s$. 
			%for $n=10,20,50,100,200
		 }
		\label{tab:std_stochastic_barycenters} 
	}
	\centering
	\begin{tabular}{l|cccccc|c}
		\hline
		n / s &    1 &    2 &    5 &   10 &   15 &   20 &  empirical \\
		\hline
		10  & 0.1836 & 0.1071 & 0.0526 & 0.0474 & 0.0397 & 0.0232 &     0.0916 \\
		20  & 0.0751 & 0.0565 & 0.0553 & 0.0189 & 0.0253 & 0.0186 &     0.0790 \\
		50  & 0.0210 & 0.0174 & 0.0072 & 0.0084 & 0.0050 & 0.0039 &     0.0138 \\
		100 & 0.0102 & 0.0076 & 0.0049 & 0.0048 & 0.0035 & 0.0023 &     0.0112 \\
		200 & 0.0074 & 0.0045 & 0.0021 & 0.0035 & 0.0013 & 0.0017 &     0.0047 \\
		500 & 0.0016 & 0.0007 & 0.0005 & 0.0004 & 0.0004 & 0.0004 &     0.0009 \\
		1000& 0.0005 & 0.0006 & 0.0004 & 0.0004 & 0.0003 & 0.0003 &     0.0005 \\
		\hline
	\end{tabular}
\end{table}

\subsubsection*{Conclusion} Based on this illustrative numerical example, we can conclude:
\begin{itemize}
\item the empirical posterior constructed using MCMC is consistent under the $W_2$ distance and therefore it can be relied upon to compute Wasserstein barycenters,
\item the empirical Wasserstein barycenter estimator tends to converge faster (and with lower variance) to the true model than the empirical Bayesian model average,
\item computing the population Wasserstein barycenter estimator via batch stochastic gradient descent is promising as an alternative to computing the empirical barycenter (i.e.,\ to applying the deterministic gradient descent method to a finitely-sampled posterior).
\end{itemize}

\section*{Acknowledgments} We would like to thank two anonymous referees for suggesting us some of the parametric examples we have analyzed, and for their valuable questions and comments,  which allowed us to clarify the presentation of the paper and refine our results. This work was partially funded by the ANID-Chile grants: Fondecyt-Regular 1201948(JF) and 1210606 (FT); Center for Mathematical Modeling ACE210010 and FB210005 (JF, FT); and Advanced Center for Electrical and Electronic Engineering FB0008 (FT).

\bibliography{BLWB}{}
\bibliographystyle{plain}

\appendix

\section{Bayes estimators as generalized model averages}
\label{subsec model_averages}

We prove here Proposition \ref{prop EL}. For notational simplicity we omit the subscripts of estimators and normalizing constants.\\

\noindent \textbf{i) Squared $L_{2}$-distance:}
%\begin{equation*}\textstyle 
%	L_{2}(m,\bar{m})=\frac{1}{2}\int_{\mathcal{X}}\%left( m(x)-\bar{m}(x)\right) ^{2}\lambda (dx).
%\end{equation*}
By Fubini's theorem, minimizing $R_{L}(\bar{m}|D)$ in this case amounts to minimize 
$$\bar m\mapsto \frac{1}{2}\int_{\mathcal{X}}\left\{\int_{\mathcal{M}}\left( m(x)-\bar{m}
	(x)\right) ^{2}\Pi_n (dm)\right\}\lambda (dx), $$
	over the set of densities. But the optimal value cannot be better than if we minimize pointwise the term $\bar m(x)$ inside the curly brackets, obtaining the candidate
	$$\hat m_{BMA}(x)=\int_{\mathcal{M}} m(x)\Pi_n (dm)=\mathbb{E}[m](x).$$
	As this pointwise minimizer is already a probability density, we conclude.\\

%By the fundamental lemma of calculus of variations, denoting
%\begin{align*}\textstyle 
%	\mathcal{L}\left( x,\bar{m},\bar{m}^{\prime }\right) =\frac{1}{2}\int_{\mathcal{M}}\left( m(x)-\bar{m}(x)\right)^{2}\Pi (dm|D),
%\end{align*}
%the extremes of $R_{L}(\bar{m}|D)$ are weak solutions of the Euler-Lagrange equation 
%\begin{eqnarray*}
%	\textstyle \frac{\partial \mathcal{L}\left( x,\bar{m},\bar{m}^{\prime }\right) }{
%		\partial \bar{m}} &=&\textstyle \frac{d}{dx}\frac{\partial \mathcal{L} \left( x,\bar{m},
%		 \bar{m}^{\prime }\right) }{\partial \bar{m}^{\prime }} \\
%	\textstyle \int_{\mathcal{M}}\left( m(x)-\bar{m}(x)\right) \Pi (dm|D) &=& \textstyle 0. 
%\end{eqnarray*}
%We deduce that the optimum is reached at the Bayesian model average
%\begin{align*}\textstyle 
%	\int_{\mathcal{M}}m(x)\Pi (dm|D).
%\end{align*}

\noindent \textbf{ii) Squared Hellinger distance:} Writing
\begin{equation*}\textstyle 
	H^{2}(m,\bar{m})=\frac{1}{2}\int_{\mathcal{X}}\left( \sqrt{m(x)}-\sqrt{\bar{m
		}(x)}\right) ^{2}\lambda (dx)=1-\int_{\mathcal{X}}\sqrt{m(x)\bar{m}(x)}
	\lambda (dx),
\end{equation*}
we see that optimizing the asociated Bayes risk amounts to  maximizing over $\bar{m}\in L^1({\cal X},  \lambda)$ the concave functional $\bar{m}\mapsto \int_{\mathcal{X}} \sqrt{\bar{m}(x)} f(x) \lambda (dx)$ with $\sqrt{y}=-\infty $ for $y<0$ and $f(x)= \int_{\mathcal{M}}\sqrt{m(x)} \Pi_n (dm) $, under the constraint $\int_{\cal X} \bar{m}(x)\lambda(dx)=1$.  Notice by the Cauchy-Schwarz inequality that this functional is finite if (and  only if)  $\bar{m}\geq 0$, $\lambda$-a.e. Hence, introducing $\gamma$  a real Lagrange multiplier   for the  constraint, we need to find a critical point $\bar{m}\geq 0$  of the concave functional  
$$\bar{m}\mapsto \int_{\mathcal{X}} \sqrt{\bar{m}(x)} f(x) \lambda (dx) + \gamma\left( \int_{\cal X} \bar{m}(x)\lambda(dx)-1 \right), $$
which must thus be a maximum. Again, as in i), we cannot do better in this case than if we maximize  for each $x\in {\cal X}$ the  concave functional  $y\mapsto \sqrt{y} f(x)+ \gamma y$ over $y\geq 0$. Finding  for each $x$ the  critical point $y$  in terms of $\gamma$ and  integrating then w.r.t. $\lambda(dx)$ to get rid of $\gamma$,  we find the extremum of $R_{H^{2}}(\bar{m}|D)$ is attained when $\bar{m}$ equals the 
Bayesian \emph{square} model average:
\begin{equation*}
\begin{split}
	\hat{m}(x) &=\textstyle \frac{1}{Z}\left( \int_{\mathcal{M}}\sqrt{m(x)}\Pi_n
	(dm)\right) ^{2} \mbox{ with}\\ 
	Z &=\textstyle \int_{\mathcal{X}}\left( \int_{\mathcal{M}}\sqrt{m(x)}\Pi_n
	(dm)\right) ^{2}\lambda (dx).
\end{split}
\end{equation*}

\noindent \textbf{iii) Forward Kullback-Leibler divergence:} the loss function $L(m,\bar{m})$ is now
\begin{equation*}\textstyle 
	D_{KL}(m||\bar{m})=\int_{\mathcal{X}}m(x)\ln \frac{m(x)}{\bar{m}(x)}\lambda(dx),
\end{equation*}
and so the associated Bayes risk can be written as
\begin{eqnarray*}
	\textstyle 	R_{D_{KL}}(\bar{m}|D) &=&\textstyle \int_{\mathcal{M}}\int_{\mathcal{X}}m(x)\ln \frac{
		m(x)}{\bar{m}(x)}\lambda (dx)\Pi_n (dm) \\
	&=&\textstyle \int_{\mathcal{X}}\int_{\mathcal{M}}m(x)\ln m(x)\Pi_n(dm)\lambda (dx)-\int_{\mathcal{X}}\int_{\mathcal{M}}m(x)\Pi_n
	(dm)\ln \bar{m}(x)\lambda (dx) \\
	&=&\textstyle  C-\int_{\mathcal{X}}\mathbb{E}[m](x)\ln \bar{m}(x)\lambda (dx).
\end{eqnarray*}
Introducing the Boltzmann entropy of $\mathbb{E}[m]$ and adjusting the constant $C$  we get that 
\begin{eqnarray*}
	\textstyle 	R_{D_{KL}}(\bar{m}|D) &=& \textstyle C^{\prime }+\int_{\mathcal{X}}\mathbb{E}[m](x)\ln 
	\mathbb{E}[m](x)\lambda (dx)-\int_{\mathcal{X}}\mathbb{E}[m](x)\ln \bar{m}
	(x)\lambda (dx) \\
	&=&C^{\prime }+D_{KL}(\mathbb{E}[m]||\bar{m}),
\end{eqnarray*}
so the optimizer of $R_{D_{KL}}(\bar{m}|D)$ %and $D_{RKL}(\mathbb{E}[m],\bar{m})$
is the Bayesian model average. \\

\noindent \textbf{iv) Reverse Kullback-Leibler divergence:} since $
	D_{RKL}(m||\bar{m})=	D_{KL}(\bar{m}||m) ,
$
we have 
\begin{eqnarray*}
	R_{D_{RKL}}(\bar{m}|D) &=&\textstyle \int_{\mathcal{M}}\int_{\mathcal{X}}\bar{m}(x)\ln 
	\frac{\bar{m}(x)}{m(x)}\lambda (dx)\Pi_n (dm) \\
	&=&\textstyle \int_{\mathcal{X}}\bar{m}(x)\ln \bar{m}(x)\lambda (dx)-\int_{\mathcal{X}}
	\bar{m}(x)\int_{\mathcal{M}}\ln m(x)\Pi_n (dm)\lambda (dx) \\
	&=&\textstyle \int_{\mathcal{X}}\bar{m}(x)\ln \bar{m}(x)\lambda (dx)-\int_{\mathcal{X}}
	\bar{m}(x)\ln \exp \mathbb{E}[\ln m(x)]\lambda (dx) \\
	&=&\textstyle \int_{\mathcal{X}}\bar{m}(x)\ln \frac{\bar{m}(x)}{\exp \mathbb{E}[\ln m(x)]}
	\lambda (dx).
\end{eqnarray*}
Denote by $Z$ the normalization constant so that $\frac{1}{Z}\int_{\mathcal{X}
}\exp \mathbb{E}[\ln m](x)\lambda (dx)=1$. Then, 
\begin{eqnarray*}
	R_{D_{RKL}}(\bar{m}|D)+\ln Z &=&\textstyle \int_{\mathcal{X}}\bar{m}(x)\ln \frac{\bar{m}
		(x)}{\exp \mathbb{E}[\ln m(x)]}\lambda (dx)+\int_{\mathcal{X}}\bar{m}(x)\ln
	Z\lambda (dx) \\
	&=&\textstyle \int_{\mathcal{X}}\bar{m}(x)\ln \frac{\bar{m}(x)}{\frac{1}{Z}\exp \mathbb{
			E}[\ln m(x)]}\lambda (dx) \\
	&=&\textstyle D_{RKL}\left(\frac{1}{Z}\exp \mathbb{E}[\ln m(x)]||\bar{m}\right).
\end{eqnarray*}
Therefore,  the extremum of $R_{D_{RKL}}(\bar{m}|D)$ is attained when the last expression vanishes, in other words when $\bar{m}$ is the Bayesian \emph{exponential}
model average given by
\begin{equation*}\textstyle 
	\hat{m}(x)=\frac{1}{Z}\exp \int_{\mathcal{M}}\ln m(x)\Pi_n
	(dm).
\end{equation*}\\

\section{Wasserstein barycenters}
\label{subsec wasserstein_barycenters}

\black{\begin{proof}[Proof of Theorem \ref{baryexists}]
	Assume  $\Gamma\in \mathcal W_{p}(\mathcal W_p(\mathcal X))$.  Then  $B_V:=\inf_{\nu \in \mathcal{M}} V_p(\nu)$ is finite. Now, let $\{\nu_n\}\subset \mathcal{M}$ be such that
	$$\textstyle \int_{\mathcal{W}_p(\mathcal X)}  W_p(\nu_n,m)^p\Gamma(dm)  \searrow B_V \mbox{ as }n\to \infty. $$
	For $n$ large enough we have
	$$\textstyle W_p\left(\nu_n\, ,\, \int_{\mathcal{W}_p(\mathcal X)}m\Gamma(dm) \right)^p\leq \int_{\mathcal{W}_p(\mathcal X)}W_p(\nu_n\, ,\,m)^p\Gamma(dm)\leq B_V +1=:K, $$
	by convexity of optimal transport costs. From this we derive that (for every $x$) 
	$$\textstyle \sup_n \int_{\mathcal X} d(x,y)^p\nu_n(dy)<\infty.$$
	By Markov inequality this shows, for each $\epsilon>0$, that there is $\ell$ large enough such that $\sup_n \nu_n(\{y\in \mathcal X:d(x,y)> \ell\})\leq \epsilon$. As explained in \cite{le2017existence}, the assumptions on $\mathcal X$ imply that $\{y\in \mathcal X:d(x,y)\leq \ell\}$ is compact (Hopf-Rinow theorem), and so we deduce the tightness of $\{\nu_n\}$. By Prokhorov theorem, up to selection of a subsequence, there exists $\nu\in \mathcal{M}$ which is its weak limit. By Fatou's lemma:
	$$\textstyle B_V=\lim \int_{}  W_p(\nu_n,m)^p\Gamma(dm) \geq \int_{} \liminf W_p(\nu_n,m)^p\Gamma(dm) \geq \int_{} W_p(\nu,m)^p\Gamma(dm),$$
	hence $\nu$ is a $p-$Wasserstein barycenter.	\black{For the converse implication, see Remark \ref{rem BWB implies WpWp and finmomBMA}.}  
\end{proof}}

\section{\black{A condition for existence of barycenters of Bayesian posteriors }  }
\label{subsec existence_barycenters}

We last provide a general condition on the prior $\Pi$ ensuring that, for given $p\geq 1$,  
$$\Pi_n\in \mathcal W_{p}(\mathcal W_p(\mathcal X))\,\,\, \text{for all possible data points } (x_1,\dots, x_n)\in {\cal X}^n \text{ and all }n.  $$

\begin{definition}
	We say that $\Pi\in\mathcal{P}(\mathcal P(\mathcal X))$ is $p-$\emph{integrable after updates} if it satisfies the conditions
	\begin{enumerate}
		\item For all $x\in \mathcal X,\ell>1$: $$\textstyle \int_{\mathcal M}m(x)^\ell \Pi(dm)<\infty.$$
		\item For some $y\in \mathcal X$ and $\varepsilon>0$:
		$$\textstyle \int_{\mathcal M}\left ( \int_{\mathcal X} d(y,z)^p m(dz) \right )^{1+\varepsilon} \Pi(dm)<\infty.$$
	\end{enumerate}
\end{definition}

\begin{remark}
Condition (2) above can be intuitively denoted   as $\Pi\in \mathcal W_{p+}(\mathcal W_p(\mathcal X))$.  Of course, one has $\mathcal W_{p+}(\mathcal W_p(\mathcal X))\subseteq \mathcal W_{p}(\mathcal W_p(\mathcal X))$.
\end{remark}

\begin{remark} If $\Pi \in \mathcal{P}(\mathcal{W}_{p,ac}(\mathcal{X}))$ has finite support, then Conditions (1) and (2) are satisfied. On the other hand, if $\Pi$ is supported on a scatter-location family (see Section \ref{sec scat loc}) containing one element with a bounded density and a finite $p$-moment, then Conditions (1) and (2) are fulfilled if for example $\supp(\Pi)$ is tight. Conditions (1) and (2)  are also satisfied in  Example \ref{ex:gaussBWBvsBMA}.
\end{remark}

\begin{lemma}\label{lem int upd}
	Suppose that $\Pi$ is $p-$integrable after updates. Then, for each $x\in \mathcal X$, the measure
	$$\textstyle \tilde \Pi(dm):= \frac{m( x)\Pi(dm)}{\int_{\mathcal M} \bar m(x)\Pi(d\bar m)},$$
	is also $p-$integrable after updates.
\end{lemma}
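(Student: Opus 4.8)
The plan is to verify directly that $\tilde\Pi$ satisfies both defining conditions of integrability after updates, with Hölder's inequality as essentially the only tool. Write $x_*$ for the fixed point defining the update and $C:=\int_{\mathcal M}\bar m(x_*)\Pi(d\bar m)$ for the normalizing constant. Since $t\mapsto t^\ell$ is convex, Jensen's inequality for the probability measure $\Pi$ together with Condition (1) for $\Pi$ gives $C^\ell\le\int_{\mathcal M} m(x_*)^\ell\Pi(dm)<\infty$ for any $\ell>1$, so $C<\infty$; positivity $C>0$ is the implicit non-degeneracy condition making the update well-posed, so $\tilde\Pi$ is a genuine probability measure.

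First I would check Condition (1) for $\tilde\Pi$. For an arbitrary fixed $x\in\mathcal X$ and $\ell>1$ we must bound $\int_{\mathcal M} m(x)^\ell\,\tilde\Pi(dm)=C^{-1}\int_{\mathcal M} m(x)^\ell m(x_*)\,\Pi(dm)$. Splitting the integrand by Hölder with conjugate exponents $p_1,p_2>1$ yields the bound $\big(\int m(x)^{\ell p_1}\Pi(dm)\big)^{1/p_1}\big(\int m(x_*)^{p_2}\Pi(dm)\big)^{1/p_2}$, and both factors are finite by Condition (1) for $\Pi$ (applied with exponents $\ell p_1>1$ and $p_2>1$, respectively). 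Hence Condition (1) transfers to $\tilde\Pi$.

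Next I would address Condition (2), which is where the exponent bookkeeping requires care. Abbreviate $\phi(m):=\int_{\mathcal X} d(y,z)^p\,m(dz)$, where $y,\varepsilon$ are as in Condition (2) for $\Pi$, so $\int_{\mathcal M}\phi(m)^{1+\varepsilon}\Pi(dm)<\infty$. I claim Condition (2) holds for $\tilde\Pi$ with the same $y$ and a suitably small $\varepsilon'>0$. Applying Hölder to $\int\phi(m)^{1+\varepsilon'}m(x_*)\Pi(dm)$ with conjugate exponents $q_1,q_2>1$ gives $\big(\int\phi(m)^{(1+\varepsilon')q_1}\Pi(dm)\big)^{1/q_1}\big(\int m(x_*)^{q_2}\Pi(dm)\big)^{1/q_2}$. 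The second factor is finite by Condition (1) for any $q_2>1$; for the first I would exploit that $L^s(\Pi)$-norms are nondecreasing in $s$ for the probability measure $\Pi$, so it suffices to arrange $(1+\varepsilon')q_1\le 1+\varepsilon$. Concretely, fix $q_1\in(1,1+\varepsilon)$ so that $(1+\varepsilon)/q_1>1$, set $\varepsilon':=(1+\varepsilon)/q_1-1>0$ and $q_2:=q_1/(q_1-1)>1$; then $(1+\varepsilon')q_1=1+\varepsilon$ and both factors are finite. Dividing by $C$ yields Condition (2) for $\tilde\Pi$, completing the verification.

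The argument is elementary and no step is a genuine obstacle; the only point requiring attention is the choice of exponents in Condition (2), where one must take $q_1$ close enough to $1$ that $(1+\varepsilon)/q_1$ remains above $1$ (guaranteeing the existence of a positive $\varepsilon'$) while keeping the conjugate $q_2$ finite and $>1$. Everything else reduces to two applications of Hölder's inequality together with the monotonicity of $L^s$-norms under a probability measure.
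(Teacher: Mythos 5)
Your proof is correct and follows essentially the same route as the paper's: both verify Conditions (1) and (2) for $\tilde\Pi$ by a single application of H\"older's inequality in each case, splitting off the density factor $m(x_*)$ and controlling it via Condition (1), with the exponent bookkeeping in Condition (2) handled by shrinking the exponent (the paper's ``take $\varepsilon$ small enough and $t$ close enough to $1$'' is exactly your explicit choice $q_1\in(1,1+\varepsilon)$, $\varepsilon'=(1+\varepsilon)/q_1-1$). Your added remarks on $C<\infty$ via Jensen and $C>0$ as well-posedness are fine but not needed beyond what the paper implicitly assumes.
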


\begin{proof}
	We verify Property (1) first. Let $\ell> 1$ and $\bar x\in\mathcal X$ given. Then
	$$\textstyle \int_{\mathcal M}m(\bar x)^\ell m(x) \Pi(dm)\leq \left( \int_{\mathcal M}m(x)^s \Pi(dm)\right )^{1/s}\, \left( \int_{\mathcal M}m(\bar x)^{t\ell} \Pi(dm)\right )^{1/t},$$
	with $s,t$ conjugate H\"older exponents. This is finite since $\Pi$ fulfils Property (1).
	
	We now establish Property (2). Let $y\in \mathcal X,\varepsilon>0$. Then
	\begin{align*}
		&\,\,\textstyle \int_{\mathcal M}\left ( \int_{\mathcal X} d(y,z)^p m(dz) \right )^{1+\varepsilon} m(x)\Pi(dm)\\ \leq&\textstyle  \,\, \left( \int_{\mathcal M}m(x)^s \Pi(dm)\right )^{1/s}\, \left( \int_{\mathcal M}\left ( \int_{\mathcal X} d(y,z)^p m(dz) \right )^{(1+\varepsilon)t}\Pi(dm)\right )^{1/t}.
	\end{align*}
	The first term in the r.h.s.\ is finite by Property (1). The second term in the r.h.s.\ is finite by Property (2), if we take $\varepsilon$ small enough and $t$ close enough to $1$. We conclude.
\end{proof}

\begin{lemma}
	Suppose that $\Pi$ is $p-$integrable after updates. Then for all $n\in \mathbb N$ and $(x_1,\dots,x_n)\in\mathcal X^n$, the posterior $\Pi_n$ is also $p-$integrable after updates.
\end{lemma}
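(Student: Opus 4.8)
The plan is to exhibit $\Pi_n$ as an $n$-fold iteration of the single-observation update already analysed in Lemma \ref{lem int upd}, and then to conclude by induction. To this end, for a fixed $x\in\mathcal X$ let $U_x$ denote the update operator
\[
U_x P(dm) := \frac{m(x)\,P(dm)}{\int_{\mathcal M}\bar m(x)\,P(d\bar m)},
\]
defined on those $P\in\mathcal P(\mathcal P(\mathcal X))$ for which the normalizing constant $\int_{\mathcal M}\bar m(x)\,P(d\bar m)$ is finite and strictly positive. Lemma \ref{lem int upd} says precisely that $U_x$ maps measures that are integrable after updates to measures that are integrable after updates.

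First I would record the elementary but crucial identity
\[
\Pi_n \;=\; U_{x_n}\circ U_{x_{n-1}}\circ\cdots\circ U_{x_1}(\Pi),
\]
which follows directly from the explicit formula \eqref{eq def Pi n}: applying $U_{x_{k+1}}$ to the normalized measure $\Pi_k$ multiplies the density against $\Pi$ by one further factor $m(x_{k+1})$, while the normalizing constant of $\Pi_k$ cancels between numerator and denominator, leaving exactly $\Pi_{k+1}$. Thus the posterior is obtained by conditioning on the data points one at a time, in any order.

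Next I would check that each intermediate update is well defined, i.e.\ that every normalizing constant $\int_{\mathcal M}\prod_{i=1}^{k} m(x_i)\,\Pi(dm)$ is finite; positivity is the standing hypothesis that makes $\Pi_n$ a genuine probability measure. Finiteness follows from Property (1) of integrability after updates together with the generalized H\"older inequality: bounding $\int_{\mathcal M}\prod_{i=1}^k m(x_i)\,\Pi(dm)\le \prod_{i=1}^k\left(\int_{\mathcal M}m(x_i)^k\,\Pi(dm)\right)^{1/k}$, each factor is finite by Property (1) (taking $\ell=k$; the case of a single factor reduces to $\int_{\mathcal M} m(x)\Pi(dm)\le(\int_{\mathcal M} m(x)^2\Pi(dm))^{1/2}<\infty$). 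Consequently each $\Pi_k$ lies in the domain of $U_{x_{k+1}}$.

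With these two preliminaries in place the conclusion is an immediate induction on $n$. The base case $\Pi_0=\Pi$ is integrable after updates by hypothesis (and $\Pi_1=U_{x_1}(\Pi)$ is covered directly by Lemma \ref{lem int upd}). For the inductive step, if $\Pi_k$ is integrable after updates then, since $\Pi_{k+1}=U_{x_{k+1}}(\Pi_k)$, Lemma \ref{lem int upd} applied to the measure $\Pi_k$ at the point $x_{k+1}$ shows that $\Pi_{k+1}$ is integrable after updates as well. This closes the induction and proves the claim. I do not anticipate a genuine obstacle here: all the analytic work---the two H\"older estimates showing that Properties (1) and (2) survive one update---was already carried out in Lemma \ref{lem int upd}, so the present statement is essentially the observation that integrability after updates is stable under the semigroup generated by the operators $U_x$.
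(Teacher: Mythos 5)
Your proof is correct and follows essentially the same route as the paper's: both establish the factorization $\Pi_n = U_{x_n}(\Pi_{n-1})$ from \eqref{eq def Pi n} and induct on $n$ using Lemma \ref{lem int upd} at each step. Your additional verification that the intermediate normalizing constants are finite (via H\"older and Cauchy--Schwarz from Property (1)) is a small piece of bookkeeping the paper leaves implicit, but it does not change the argument.
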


\begin{proof}
	By Lemma \ref{lem int upd}, we obtain that $\Pi_1$ is integrable after updates. By induction, suppose $\Pi_{n-1}$ has this property. Then as 
	$$\textstyle \Pi_n(dm)=\frac{m( x_n)\Pi_{n-1}(dm)}{\int_{\mathcal M} \bar m(x_n)\Pi_{n-1}(d\bar m)},$$
	we likewise conclude that $\Pi_n$ is $p-$integrable after updates.
\end{proof}

\end{document}